\DeclareMathOperator*{\argmax}{arg\,max}
\DeclareMathOperator*{\argmin}{arg\,min}
\DeclareMathOperator{\Sh}{Sh}
\DeclareMathOperator{\wt}{wt}
\newtheorem{prop}{Proposition}
\begin{document}
\title{Sampling Permutations for Shapley Value Estimation}

\author{\name Rory Mitchell \email ramitchellnz@gmail.com \\
       \addr Nvidia Corporation\\
       Santa Clara\\
       CA 95051, USA
       \AND
       \name Joshua Cooper \email cooper@math.sc.edu \\
       \addr Department of Mathematics\\
       University of South Carolina\\
       1523 Greene St.\\
       Columbia, SC 29223, USA
       \AND
       \name Eibe Frank \email eibe@cs.waikato.ac.nz  \\
       \addr Department of Computer Science\\
       University of Waikato\\
       Hamilton, New Zealand
       \AND
       \name Geoffrey Holmes \email geoff@cs.waikato.ac.nz  \\
        \addr Department of Computer Science\\
       University of Waikato\\
       Hamilton, New Zealand
       }

\editor{Jean-Philippe Vert}

\maketitle
\begin{abstract}
Game-theoretic attribution techniques based on Shapley values are used to interpret black-box machine learning models, but their exact calculation is generally NP-hard, requiring approximation methods for non-trivial models. As the computation of Shapley values can be expressed as a summation over a set of permutations, a common approach is to sample a subset of these permutations for approximation. Unfortunately, standard Monte Carlo sampling methods can exhibit slow convergence, and more sophisticated quasi-Monte Carlo methods have not yet been applied to the space of permutations. To address this, we investigate new approaches based on two classes of approximation methods and compare them empirically. First, we demonstrate quadrature techniques in a RKHS containing functions of permutations, using the Mallows kernel in combination with kernel herding and sequential Bayesian quadrature. The RKHS perspective also leads to quasi-Monte Carlo type error bounds, with a tractable discrepancy measure defined on permutations. Second, we exploit connections between the hypersphere $\mathbb{S}^{d-2}$ and permutations to create practical algorithms for generating permutation samples with good properties. Experiments show the above techniques provide significant improvements for Shapley value estimates over existing methods, converging to a smaller RMSE in the same number of model evaluations.
\end{abstract}

\begin{keywords}
  Interpretability, quasi-Monte Carlo, Shapley values
\end{keywords}

\section{Introduction} 
The seminal work of \cite{shapley1953value} introduces an axiomatic attribution of collaborative game outcomes among coalitions of participating players. Aside from their original applications in economics, Shapley values are popular in machine learning  \citep{cohen2007feature,efficient_explanation,explaining_prediction,unified} because the assignment of feature relevance to model outputs is structured according to axioms consistent with human notions of attribution. In the machine learning context, each feature is treated as a player participating in the prediction provided by a machine learning model and the prediction is considered the outcome of the game. Feature attributions via Shapley values provide valuable insight into the output of complex models that are otherwise difficult to interpret.

Exact computation of Shapley values is known to be NP-hard in general \citep{deng1994complexity} and approximations based on sampling have been proposed by several authors: \cite{mann1960values,owen_multilinear,CASTRO20091726,maleki2015addressing,CASTRO2017180}. In particular, a simple Monte Carlo estimate for the Shapley value is obtained by sampling from a uniform distribution of permutations. The extensively developed quasi-Monte Carlo theory for integration on the unit cube shows that careful selection of samples can improve convergence significantly over random sampling, but these results do not extend to the space of permutations. Here, our goal is to better characterise `good' sample sets for this unique approximation problem, and to develop tractable methods of obtaining these samples, reducing computation time for high-quality approximations of Shapley values. Crucially, we observe that sample evaluations, in this context corresponding to evaluations of machine learning models, dominate the execution time of approximations. Due to the high cost of each sample evaluation, considerable computational effort can be justified in finding such sample sets.   

In Section \ref{sec:kernel_methods}, we define a reproducing kernel Hilbert space (RKHS) with several possible kernels over permutations by exploiting the direct connection between Shapley values and permutations. Using these kernels, we apply kernel herding, and sequential Bayesian quadrature algorithms to estimate Shapley values. In particular, we observe that kernel herding, in conjunction with the universal Mallows kernel, leads to an explicit convergence rate of $O(\frac{1}{n})$ as compared to $O(\frac{1}{\sqrt{n}})$ for ordinary Monte Carlo. An outcome of our investigation into kernels is a quasi-Monte Carlo type error bound, with a tractable discrepancy formula. 

In Section \ref{sec:sphere}, we describe another family of methods for efficiently sampling Shapley values, utilising a convenient isomorphism between the symmetric group $\mathfrak{S}_d$ and points on the hypersphere $\mathbb{S}^{d-2}$. These methods are motivated by the relative ease of selecting well-spaced points on the sphere, as compared to the discrete space of permutations. We develop two new sampling methods, termed orthogonal spherical codes and Sobol permutations, that select high-quality samples by choosing points well-distributed on $\mathbb{S}^{d-2}$.

Our empirical evaluation in Section~\ref{sec:evaluation} examines the performance of the above methods compared to existing methods on a range of practical machine learning models, tracking the reduction in mean squared error against exactly calculated Shapley values for boosted decision trees and considering empirical estimates of variance in the case of convolutional neural networks. Additionally, we evaluate explicit measures of discrepancy (in the quasi-Monte Carlo sense) for the sample sets generated by our algorithms. This evaluation of discrepancy for the generated samples of permutations may be of broader interest, as quasi-Monte Carlo error bounds based on discrepancy apply to any statistics of functions of permutations and not just Shapley values.

In summary, the contributions of this work are:
\begin{itemize}
  \item The characterisation of the Shapley value approximation problem in terms of reproducing kernel Hilbert spaces.
  \item Connecting the Shapley value approximation problem to existing quasi-Monte Carlo approaches, using kernels and connections between the hypersphere and symmetric group.
  \item Experimental evaluation of these methods in terms of discrepancy, and the error of Shapley value approximations on tabular and image datasets.
\end{itemize}
\section{Background and Related Work}
\label{sec:background}
We first introduce some common notation for permutations and provide the formal definition of Shapley values. Then, we briefly review the literature for existing techniques for approximating Shapley values.

\subsection{Notation}
We refer to the symmetric group of permutations of $d$ elements as $\mathfrak{S}_d$. We reserve the use of $n$ to refer to the number of samples. The permutation $\sigma \in \mathfrak{S}_d$ assigns rank $j$ to element $i$ by $\sigma(i)=j$. For example, given the permutation written in one-line notation
$$\sigma = \begin{pmatrix}
  1 & 4 & 2 & 3
\end{pmatrix},$$
and the list of items 
$$(x_1,x_2,x_3,x_4),$$
the items are reordered such that $x_i$ occupies the $\sigma(i)$ coordinate
$$(x_1, x_3, x_4, x_2),$$
and the inverse $\sigma^{-1}(j)=i$ is
$$\sigma^{-1} = \begin{pmatrix}
  1 & 3 & 4 & 2
\end{pmatrix}.$$

An \textit{inversion} is a pair of elements in the permutation $(\sigma_i,\sigma_j)$ such that $i < j$ and $\sigma(i) > \sigma(j)$. The identity permutation, 
$$I=\begin{pmatrix}
  1 & 2 & 3 & \cdots
\end{pmatrix},$$
contains 0 inversions, and its reverse 
$$\text{Rev}(I)=\begin{pmatrix}
  \cdots & 3 & 2 & 1
\end{pmatrix},$$
contains the maximum number of inversions, $\binom{d}{2}$.
\subsection{Shapley Values}
\label{sec:shapley_values}
Shapley values \citep{shapley1953value} provide a mechanism to distribute the proceeds of a cooperative game among the members of the winning coalition by measuring marginal contribution to the final outcome. The Shapley value $\Sh_i$ for coalition member $i$ is defined as
\begin{equation}
\label{eq:shapley}
\Sh_i(v)=\sum_{S \subseteq N \setminus
\{i\}}\frac{|S|!\; (|N|-|S|-1)!}{|N|!}(v(S\cup\{i\})-v(S)),    
\end{equation}
where $S$ is a partial coalition, $N$ is the grand coalition (consisting of all members), and $v$ is the so-called ``characteristic function" that is assumed to return the proceeds (i.e., value) obtained by any coalition. 

The Shapley value function may also be conveniently expressed in terms of permutations
\begin{equation}
\label{eq:shapley_permutations}
\Sh_i(v)=\frac{1}{|N|!} \sum_{\sigma \in \mathfrak{S}_d} \big[ v([\sigma]_{i-1} \cup\{i\}) - v([\sigma]_{i-1}) \big] ,
\end{equation}
where $[\sigma]_{i-1}$ represents the set of players ranked lower than $i$ in the ordering $\sigma$. To see the equivalence between \eqref{eq:shapley} and \eqref{eq:shapley_permutations}, consider that $|S|!$ is the number of unique orderings the members of $S$ can join the coalition before $i$, and $(|N|-|S|-1)!$ is the number of unique orderings the remaining members $N \setminus S \cup \{i\}$ can join the coalition after $i$. The Shapley value is unique and has the following desirable properties:

\begin{enumerate}
  \item \textit{Efficiency}: $\sum_{i=1}^{n}\Sh_i(v)=v(N).$ The sum of Shapley values for each coalition member is the value of the grand coalition $N$. 
  \item \textit{Symmetry}: If, $\forall S\subseteq N \setminus\{i,j\}, v(S\cup\{i\})=v(S\cup\{j\})$, then $\Sh_i=\Sh_j.$ If two players have the same marginal effect on each coalition, their Shapley values are the same.
  \item \textit{Linearity}: $\Sh_i(v + w) = \Sh_i(v) + \Sh_i(w)$. The Shapley values of sums of games are the sum of the Shapley values of the respective games.
  \item \textit{Dummy}: If, $\forall S\subseteq N \setminus\{i\}, v(S\cup\{i\})=v(S)$, then $\Sh_i=0$. The coalition member whose marginal impact is always zero has a Shapley value of zero.
\end{enumerate}

 Evaluation of the Shapley value is known to be NP-hard in general \citep{deng1994complexity} but may be approximated by sampling terms from the sum of either Equation \ref{eq:shapley} or Equation \ref{eq:shapley_permutations}. This paper focuses on techniques for approximating Equation \ref{eq:shapley_permutations} via carefully chosen samples of permutations. We discuss characteristic functions $v$ that arise in the context of machine learning models, with the goal of attributing predictions to input features. 
 
 Shapley values have been used as a feature attribution method for machine learning in many prior works \citep{cohen2007feature,efficient_explanation,explaining_prediction,unified}. In the terminology of supervised learning, we have some learned model $f(x)=y$ that maps a vector of features $x$ to a prediction $y$. In this context, the Shapley values will be used to evaluate the weighted marginal contribution of features to the output of the predictive model. The value of the characteristic function is assumed to be given by $y$, and the grand coalition is given by the full set of features. In a partial coalition, only some of the features are considered ``active" and their values made available to the model to obtain a prediction. Applying the characteristic function for partial coalitions requires the definition of $f(x_S)$, where the input features $x$ are perturbed in some way according to the active subset $S$. A taxonomy of possible approaches is given in \cite{covert2020explaining}.
 
\subsection{Monte Carlo}
\label{sec:monte_carlo}
An obvious Shapley value approximation is the simple Monte Carlo estimator,
\begin{equation}
\label{eq:mc}
\bar{\Sh}_i(v)= \frac{1}{n} \sum_{\sigma \in \Pi} \big[ v([\sigma]_{i-1} \cup\{i\}) - v([\sigma]_{i-1}) \big],
\end{equation}
for a uniform sample of permutations $\Pi \subset \mathfrak{S}_d$ of size $n$. Monte Carlo techniques were used to solve electoral college voting games in \cite{mann1960values}, and a more general analysis is given in \cite{CASTRO20091726}. Equation \ref{eq:mc} is an unbiased estimator that converges asymptotically at a rate of $O(1/\sqrt{n})$ according to the Central Limit Theorem.

From a practical implementation perspective, note that a single sample of permutations $\Pi$ can be used to evaluate $\Sh_i$ for all features $i$. For each permutation $\sigma \in \Pi$ of length $d$, first evaluate the empty set $v(\{\})$, then walk through the permutation, incrementing $i$ and evaluating $v([\sigma]_i)$, yielding $d+1$ evaluations of $v$ that are used to construct marginal contributions for each feature. $v([\sigma]_{i-1})$ is not evaluated, but reused from the previous function evaluation, providing a factor of two improvement over the naive approach.

\subsection{Antithetic Sampling}
\label{sec:antithetic}
Antithetic sampling is a variance reduction technique for Monte Carlo integration where samples are taken as correlated pairs instead of standard i.i.d. samples. The  antithetic Monte Carlo estimate (see \cite{rubinstein2016simulation}) is
$$\hat{\mu}_{anti}=\frac{1}{n} \sum_{i=1}^{n/2} f(X_{i})+f(Y_{i}),$$
with variance given by
\begin{equation}
\label{eq:antithetic_variance}
\text{Var}(\hat{\mu}_{anti})= \frac{\sigma}{n}(1 + \text{Corr}(f(X),f(Y)),
\end{equation}
such that if $f(X)$ and $f(Y)$ are negatively correlated, the variance is reduced. A common choice for sampling on the unit cube is $X \sim U(0,1)^d$ with $Y_i = 1-X_i$. Antithetic sampling for functions of permutations is discussed in \cite{lomeli2019antithetic}, with a simple strategy being to take permutations and their reverse. We implement this sampling strategy in our experiments with antithetic sampling.

\subsection{Multilinear Extension}
\label{sec:owen}
Another Shapley value approximation method is the multilinear extension of \cite{owen_multilinear}. The sum over feature subsets from \eqref{eq:shapley} can be represented equivalently as an integral by introducing a random variable for feature subsets. The Shapley value is calculated as
\begin{equation}
\label{eq:multilinear}
\Sh_i(v)= \int_0^1 e_i(q) dq,
\end{equation}
where 
$$e_i(q)= \mathbb{E}[v(E_q \cup {i}) - v(E_q)],$$
and $E_q$ is a random subset of features, excluding $i$, where each feature has probability $q$ of being selected. $e_i(q)$ is estimated with samples. In our experiments, we implement a version of the multilinear extension algorithm using the trapezoid rule to sample $q$ at fixed intervals. A form of this algorithm incorporating antithetic sampling is also presented in \cite{okhrati2020multilinear}, by rewriting Equation \ref{eq:multilinear} as
$$
\Sh_i(v)= \int_0^\frac{1}{2} e_i(q) + e_i(1-q) dq 
$$
where the sample set $E_i$ is used to estimate $e_i(q)$ and the `inverse set', $\{N \setminus \{E_i, i\}\}$, is used to estimate $e_i(1-q)$. In Section \ref{sec:evaluation}, we include experiments for the multilinear extension method both with and without antithetic sampling.

\subsection{Stratified Sampling}
\label{sec:stratified}
Another common variance reduction technique is stratified sampling, where the domain of interest is divided into mutually exclusive subregions, an estimate is obtained for each subregion independently, and the estimates are combined to obtain the final estimate. For integral $\mu= \int_\mathcal{D} f(x)p(x)dx$ in domain $\mathcal{D}$, separable into $J$ non-overlapping regions $\mathcal{D}_1,\mathcal{D}_2,\cdots,\mathcal{D}_J$ where $w_j=P(X \in \mathcal{D}_j)$ and $p_j(x) = w^{-1}_j p(x)\mathbbm{1}_{x \in \mathcal{D}_j}$, the basic stratified sampling estimator is
$$\hat{\mu}_{strat} =\sum_{j=1}^{J} \frac{w_j}{n_j} \sum_{i=1}^{n_j} f(X_{ij}),$$
where $X_{ij} \sim p_j$ for $i=1,\cdots,n_j$ and $j=1,\cdots,J$ (see \cite{owen2003quasi}). The stratum size $n_j$ can be chosen with the Neyman allocation \citep{neyman} if estimates of the variance in each region are known. The stratified sampling method was first applied to Shapley value estimation by \cite{maleki2015addressing}, then improved by \cite{CASTRO2017180}. We implement the version in \cite{CASTRO2017180}, where strata $\mathcal{D}^{\ell}_{i}$ are considered for all $i=1,\cdots, d$ and $\ell=1,\cdots, d$, where $\mathcal{D}^{\ell}_{i}$ is the subset of marginal contributions with feature $i$ at position $\ell$.

This concludes discussion of existing work; the next sections introduce the primary contributions of this paper.

\section{Kernel Methods}
\label{sec:kernel_methods}
A majority of Monte Carlo integration work deals with continuous functions on $\mathbb{R}^d$, where the distribution of samples is well defined. In the space of permutations, distances between samples are not implicitly defined, so we impose a similarity metric via a kernel and select samples with good distributions relative to these kernels.

Given a positive definite kernel $K: \mathcal{X} \times \mathcal{X} \rightarrow \mathbb{R}$ over some input space $\mathcal{X}$, there is an embedding $\phi : \mathcal{X} \rightarrow \mathcal{F}$ of elements of $\mathcal{X}$ into a Hilbert space $\mathcal{F}$, where the kernel computes an inner product $K(x, y) = \langle\phi(x),\phi(y)\rangle_{\mathcal{K}}$ given $x,y \in \mathcal{X}$. Hilbert spaces associated with a kernel are known as reproducing kernel Hilbert spaces (RKHS). Kernels are used extensively in machine learning for learning relations between arbitrary structured data. In this paper, we use kernels over permutations to develop a notion of the quality of finite point sets for the Shapley value estimation problem, and for the optimisation of such point sets. For this task, we investigate three established kernels over permutations: the Kendall, Mallows, and Spearman kernels.

The Kendall and Mallows kernels are defined in \cite{permutation_kernels}. Given two permutations $\sigma$ and $\sigma'$ of the same length, both kernels are based on the number of concordant and discordant pairs between the permutations:
$$n_{\textrm{con}}(\sigma, \sigma') = \sum_{i<j} [\mathbbm{1}_{\sigma(i) < \sigma(j)}\mathbbm{1}_{\sigma'(i) < \sigma'(j)} + \mathbbm{1}_{\sigma(i) > \sigma(j)}\mathbbm{1}_{\sigma'(i) > \sigma'(j)}],$$
$$n_{\textrm{dis}}(\sigma, \sigma') = \sum_{i<j} [\mathbbm{1}_{\sigma(i) < \sigma(j)}\mathbbm{1}_{\sigma'(i) > \sigma'(j)} + \mathbbm{1}_{\sigma(i) > \sigma(j)}\mathbbm{1}_{\sigma'(i) < \sigma'(j)}].$$

Assuming the length of the permutation is $d$, the Kendall kernel, corresponding to the well-known Kendall tau correlation coefficient \citep{kendall1938new}, is
$$K_{\tau}(\sigma, \sigma')=\frac{n_{\textrm{con}}(\sigma, \sigma') - n_{\textrm{dis}}(\sigma, \sigma')}{\binom{d}{2}}.$$

The Mallows kernel, for $\lambda \geq 0$, is defined as
$$K^{\lambda}_M(\sigma, \sigma') = e^{-\lambda n_{\textrm{dis}}(\sigma, \sigma')/\binom{d}{2}}.$$
Here, the Mallows kernel differs slightly from that of \cite{permutation_kernels}. We normalise the $n_{dis(\sigma,\sigma')}$ term relative to $d$, allowing a consistent selection of the  $\lambda$ parameter across permutations of different length. 

While the straightforward implementation of Kendall and Mallows kernels is of order $O(d^2)$, a $O(d \log d)$ variant based on merge-sort is given by \cite{kt_complexity}. 
 
Note that $K_{\tau}$ can also be expressed in terms of a feature map of $\binom{d}{2}$ elements,
$$\Phi_{\tau}(\sigma) = \left( \frac{1}{\sqrt{\binom{d}{2}}}(\mathbbm{1}_{\sigma(i) > \sigma(j)} - \mathbbm{1}_{\sigma(i) < \sigma(j)}) \right)_{1 \leq i < j \leq d},$$
so that
$$K_{\tau}(\sigma, \sigma')= \Phi(\sigma)^T\Phi(\sigma').$$

The Mallows kernel corresponds to a more complicated feature map, although still finite dimensional, given in \cite{mania2018kernel}.

We also define a third kernel based on Spearman's $\rho$. The (unnormalised) Spearman rank distance,
$$d_{\rho}(\sigma, \sigma') = \sum_{i=1}^{d} (\sigma(i)-\sigma'(i))^2 = || \sigma - \sigma' ||^2_2,$$
is a semimetric of negative type \citep{diaconis1988group}, therefore we can exploit the relationship between semimetrics of negative type and kernels from \cite{sejdinovic2013equivalence} to obtain a valid kernel. Writing $\sum_{i=0}^d \sigma(i) \sigma(i)'$ using vector notation as $\sigma^T\sigma'$, we have
\begin{align*}
d(\sigma,\sigma') &= K(\sigma,\sigma) + K(\sigma',\sigma') -2K(\sigma,\sigma') \\
d_{\rho}(\sigma,\sigma') &= \sigma^T\sigma + \sigma'^T\sigma' - 2\sigma^T\sigma' \\
\implies K_{\rho}(\sigma,\sigma') &= \sigma^T\sigma'.
\end{align*}
and the kernel's feature map is trivially
$$\Phi_{\rho}(\sigma)=\sigma.$$

Before introducing sampling algorithms, we derive an additional property for the above kernels: analytic formulas for their expected values at some fixed point $\sigma$ and values drawn from a given probability distribution $\sigma' \sim p$. The distribution of interest for approximating \eqref{eq:shapley_permutations} is the uniform distribution $U$. The expected value is straightforward to obtain for the Spearman and Kendall kernels: 
$$\forall \sigma \in \Pi, \quad \mathbb{E}_{\sigma' \sim U}[K_{\rho}(\sigma, \sigma')] = \frac{d(d+1)^2}{4},$$
$$\forall \sigma \in \Pi, \quad \mathbb{E}_{\sigma' \sim U}[K_{\tau}(\sigma, \sigma')] = 0.$$
The Mallows kernel is more difficult. Let $X$ be a random variable representing the number of inversions over all permutations of length $d$. Its distribution is studied in \cite{muir1898simple}, with probability generating function given as
$$\phi_d(x)= \prod_{j=1}^{d}\frac{1-x^j}{j(1-x)}.$$
There is no convenient form in terms of standard functions for its associated density function. From the probability generating function of $X$, we obtain the moment generating function:
\begin{align*}
M_d(t)&=\phi_d(e^t) \\
&=\prod_{j=1}^{d}\frac{1-e^{tj}}{j(1-e^t)} \\
&=\mathbb{E}[e^{tX}].
\end{align*}
The quantity $n_{\textrm{\textrm{dis}}}(I, \sigma)$, where $I$ is the identity permutation, returns exactly the number of inversions in $\sigma$. Therefore, we have
\begin{align*}
M_d(-\lambda / \textstyle\binom{d}{2}) &=\mathbb{E}[e^{-\lambda X / \binom{d}{2}}]\\
&=\mathbb{E}_{\sigma' \sim U}[K_{M}(I, \sigma')]. 
\end{align*}
The quantity $n_{\textrm{dis}}$ is right-invariant in the sense that $n_{\textrm{dis}}(\sigma, \sigma')= n_{\textrm{dis}}(\tau\sigma, \tau\sigma')$ for $\tau\in  \mathfrak{S}_d$ \citep{diaconis1988group}, so
\begin{align*}
\forall \tau \in \mathfrak{S}_d, \quad \mathbb{E}_{\sigma' \sim U}[K_{M}(I, \sigma')]&= \mathbb{E}_{\sigma' \sim U}[K_{M}(\tau I, \tau\sigma')] \\
&= \mathbb{E}_{\sigma' \sim U}[K_{M}(\tau I, \sigma')]\\
\forall \sigma \in \mathfrak{S}_d, \quad \mathbb{E}_{\sigma' \sim U}[K_{M}(I, \sigma')] &= \mathbb{E}_{\sigma' \sim U}[K_{M}(\sigma, \sigma')]\\
&=\prod_{j=1}^{d}\frac{1-e^{-\lambda j / \binom{d}{2}}}{j(1-e^{-\lambda / \binom{d}{2}})}.
\end{align*}

We now describe two greedy algorithms for generating point sets improving on simple Monte Carlo---kernel herding and sequential Bayesian quadrature.

\subsection{Kernel Herding}
\label{sec:kernel_herding}
A greedy process called ``kernel herding" for selecting (unweighted) quadrature samples in a reproducing kernel Hilbert space is proposed in \cite{kernel_herding}. The sample $n+1$ in kernel herding is given by
\begin{equation}
\label{eq:kernel_herding}
x_{n+1}= \argmax_x \Big[ \mathbb{E}_{x' \sim p}[K(x, x')] - \frac{1}{n + 1}\sum_{i=1}^{n} K(x, x_i) \Big],
\end{equation}
which can be interpreted as a greedy optimisation process selecting points for maximum separation, while also converging on the expected distribution $p$. In the case of Shapley value estimation, the samples are permutations $\sigma \in \mathfrak{S}_d$ and $p$ is a uniform distribution with $p(\sigma)=\frac{1}{\sigma!}, \forall \sigma \in \mathfrak{S}_d$. 

Kernel herding has time complexity $O(n^2)$ for $n$ samples, assuming the argmax can be computed in $O(1)$ time and $\mathbb{E}_{x' \sim p}[K(x, x')]$ is available. We have analytic formulas for $\mathbb{E}_{x' \sim p}[K(x, x')]$ from the previous section for the Spearman, Kendall, and Mallows kernels, and they give constant values depending only on the size of the permutation $d$. We compute an approximation to the argmax in constant time by taking a fixed number of random samples at each iteration and retaining the one yielding the maximum.

If certain conditions are met, kernel herding converges at the rate $O(\frac{1}{n})$, an improvement over $O(\frac{1}{\sqrt{n}})$ for standard Monte Carlo sampling. According to \cite{kernel_herding}, this improved convergence rate is achieved if the RKHS is universal, and mild assumptions are satisfied by the argmax (it need not be exact). Of the Spearman, Kendall and Mallows kernels, only the Mallows kernel has the universal property \citep{mania2018kernel}.

Next, we describe a more sophisticated kernel-based algorithm generating weighted samples.

\subsection{Sequential Bayesian Quadrature}
\label{sec:sbq}
Bayesian Quadrature \citep{OHAGAN1991245,rasmussen2003bayesian} (BQ) formulates the integration problem
$$Z_{f,p}= \int f(x)p(x) dx$$
as a Bayesian inference problem. Standard BQ imposes a Gaussian process prior on $f$ with zero mean and kernel function $K$. A posterior distribution is inferred over $f$ conditioned on a set of points $(x_0,x_1,\cdots,x_n)$. This implies a distribution on $Z_{f,p}$ with expected value
$$
\mathbb{E}_{GP}[Z]=z^TK^{-1}f(X),
$$
where $f(X)$ is the vector of function evaluations at points $(x_0,x_1,\cdots,x_n)$, $K^{-1}$ is the inverse of the kernel covariance matrix, and $z_i = \mathbb{E}_{x' \sim p}[K(x_i, x')]$. Effectively, for an arbitrary set of points, Bayesian quadrature solves the linear system $Kw=z$ to obtain a reweighting of the sample evaluations, yielding the estimate
$$Z \simeq w^T f(X).$$

An advantage of the Bayesian approach is that uncertainty is propagated through to the final estimate. Its variance is given by
\begin{equation}
\label{eq:sbq_variance}
\mathbb{V}[Z_{f,p}|f(X)]=\mathbb{E}_{x,x' \sim p}[K(x, x')] - z^TK^{-1}z.
\end{equation}
This variance estimate is used in \cite{sbq} to develop sequential Bayesian quadrature (SBQ), a greedy algorithm selecting samples to minimise Equation \ref{eq:sbq_variance}. This procedure, summarised in Algorithm \ref{alg:sbq}, is shown by \cite{sbq} to be related to optimally weighted kernel herding. Note that the expectation term in \eqref{eq:sbq_variance} and Algorithm \ref{alg:sbq} is constant and closed-form for all kernels considered here.

\begin{algorithm}[t]
\small
\DontPrintSemicolon
 \KwInput{$n$, kernel $K$, sampling distribution $p$, integrand $f$}
 $X_0 \leftarrow RandomSample(p)$\;
 $K^{-1} = I$\tcp*{Inverse of covariance matrix}
 $z_0 \leftarrow \mathbb{E}_{x' \sim p}[K(X_0, x')]$\;
  \For{$i\leftarrow 2$ \KwTo $n$}{
    $X_i \leftarrow \argmin\limits_{x} \mathbb{E}_{x,x' \sim p}[K(x, x')] - z^TK^{-1}z$\;
    $y \leftarrow \vec{0}$\;
    \For{$j\leftarrow 1$ \KwTo $i$}{
        $y_j = K(X_i, X_j)$\;
    }
    $K^{-1} \leftarrow CholeskyUpdate(K^{-1}, y)$\;
    $z_i \leftarrow \mathbb{E}_{x' \sim p}[K(X_i, x')]$\;
  }
  $w = z^TK^{-1}$\label{lst:line:bq_minimisation}\;
  \KwRet $w^Tf(X)$\;
 \caption{Sequential Bayesian Quadrature}
\label{alg:sbq}
\end{algorithm}

SBQ has time complexity $O(n^3)$ for $n$ samples if the argmin takes constant time, and an $O(n^2)$ Cholesky update algorithm is used to form $K^{-1}$, adding one sample at a time. In general, exact minimisation of Equation \ref{eq:sbq_variance} is not tractable, so as with kernel herding, we approximate the argmin by drawing a fixed number of random samples and choosing the one yielding the minimum variance.

\subsection{Error Analysis in RKHS}
Canonical error analysis of quasi Monte-Carlo quadrature is performed using the Koksma-Hlawka inequality \citep{hlawka1961funktionen,niederreiter}, decomposing error into a product of function variation and discrepancy of the sample set. We derive a version of this inequality for Shapley value approximation in terms of reproducing kernel Hilbert spaces. Our derivation mostly follows \cite{hickernell}, with modification of standard integrals to weighted sums of functions on $\mathfrak{S}_d$, allowing us to calculate discrepancies for point sets generated by kernel herding and SBQ with permutation kernels. The analysis is performed for the Mallows kernel, which is known to be a universal kernel \citep{mania2018kernel}. 

Given a symmetric, positive definite kernel $K$, we have a unique RKHS $\mathcal{F}$ with inner product $\langle \cdot,\cdot \rangle_K$ and norm $||\cdot||_K$, where the kernel reproduces functions $f \in \mathcal{F}$ by
\begin{align*}
f(\sigma) = \langle f, K(\cdot,\sigma) \rangle_K.
\end{align*}
Define error functional
$$\text{Err}(f,\Pi,w)=\frac{1}{d!}\sum_{\sigma \in \mathfrak{S}_d}f(\sigma) - \sum_{\tau \in \Pi}w_{\tau}f(\tau),$$
where $\Pi$ is a sample set of permutations and $w_{\tau}$ is the associated weight of sample $\tau$. Because the Mallows kernel is a universal kernel, the bounded Shapley value component functions $f(\sigma)$ belong to $\mathcal{F}$. Given that $\text{Err}(f,\Pi,w)$ is a continuous linear functional on $\mathcal{F}$ and assuming that it is bounded, by the Riesz  Representation Theorem, there is a function $\xi \in \mathcal{F}$ that is its representer: $\text{Err}(f,\Pi,w) = \langle \xi,f \rangle_K$. Using the Cauchy-Schwarz inequality, the quadrature error is bounded by
$$|\text{Err}(f,\Pi,w)|=|\langle \xi,f \rangle_K| \leq ||\xi||_K ||f||_K = D(\Pi,w) V(f),$$
where $D(\Pi,w)=||\xi||_K$ is the discrepancy of point set $\Pi$ with weights $w$ and $V(f)= ||f||_K$ is the function variation. The quantity $D(\Pi,w)$ has an explicit formula. As the function $\xi$ is reproduced by the kernel, we have:
\begin{align*}
\xi(\sigma') = \langle \xi, K(\cdot,\sigma') \rangle_K &= \text{Err}(K(\cdot,\sigma'),\Pi,w)\\
&=\frac{1}{d!}\sum_{\sigma \in \mathfrak{S}_d}K(\sigma,\sigma') - \sum_{\tau \in \Pi}w_{\tau}K(\tau,\sigma').
\end{align*}
Then the discrepancy can be obtained, using the fact that $\text{Err}(f,\Pi,w) = \langle \xi,f \rangle_K$, by
\begin{align}
\label{eq:discrepancy}
D(\Pi,w) {}&= ||\xi||_k = \sqrt{\langle \xi,\xi \rangle_K}=\sqrt{\text{Err}(\xi,\Pi,w)} \nonumber\\
{}& = \left( \frac{1}{d!}\sum_{\sigma \in \mathfrak{S}_d}\xi(\sigma) - \sum_{\tau \in \Pi}w_{\tau}\xi(\tau) \right)^{\frac{1}{2}} \nonumber\\
\begin{split}
{}&=\Bigg( \frac{1}{d!}\sum_{\sigma \in \mathfrak{S}_d} \left[ \frac{1}{d!}\sum_{\sigma' \in \mathfrak{S}_d}K(\sigma,\sigma') - \sum_{\tau \in \Pi}w_{\tau}K(\tau,\sigma) \right] \\ & \qquad - \sum_{\tau \in \Pi}w_{\tau}\left[ \frac{1}{d!}\sum_{\sigma \in \mathfrak{S}_d}K(\sigma,\tau) - \sum_{\tau' \in \Pi}w_{\tau'}K(\tau,\tau') \right] \Bigg)^{\frac{1}{2}}    
\end{split} \nonumber\\
{}&= \Bigg( \frac{1}{(d!)^2}\sum_{\sigma,\sigma' \in \mathfrak{S}_d} K(\sigma,\sigma') - \frac{2}{d!}\sum_{\sigma \in \mathfrak{S}_d}\sum_{\tau \in \Pi}w_{\tau}K(\tau,\sigma) + \sum_{\tau,\tau' \in \Pi}w_{\tau}w_{\tau'}K(\tau,\tau') \Bigg)^{\frac{1}{2}} \nonumber\\
{}&= \Bigg( \mathbb{E}_{\sigma, \sigma' \sim U}[K(\sigma, \sigma')]  - 2\sum_{\tau \in \Pi}w_{\tau} \mathbb{E}_{\sigma \sim U}[K(\tau,\sigma)] + \sum_{\tau,\tau' \in \Pi}w_{\tau}w_{\tau'}K(\tau,\tau') \Bigg)^{\frac{1}{2}}.
\end{align}

It can be seen that kernel herding (Equation \ref{eq:kernel_herding}) greedily minimises $D(\Pi, w)^2$ with constant weights $\frac{1}{n}$, by examining the reduction in $D(\Pi,\frac{1}{n})^2$ obtained by the addition of a sample to $\Pi$. The kernel herding algorithm for sample $\sigma_{n+1} \in \Pi$ is
$$
\sigma_{n+1} = \argmax_{\sigma} \left [ \mathbb{E}_{\sigma' \sim U} [K(\sigma,\sigma')] - \frac{1}{n+1} \sum_{i=1}^n K(\sigma,\sigma_i) \right ].
$$
Note that, since $K(\cdot,\cdot)$ is right-invariant, the quantity $\mathbb{E}_{\sigma' \sim U} [K(\sigma,\sigma')]$ does not depend on $\sigma$, so the argmax above is simply minimizing $\sum_{i=1}^n K(\sigma,\sigma_i)$.  On the other hand, denoting the identity permutation by $I$, for a newly selected permutation sample $\pi$:
\begin{align*}
    D(\Pi, \textstyle{\frac{1}{n}})^2 - D(\Pi \cup \{\pi\},\textstyle{\frac{1}{n+1}})^2 &= 2 \!\!\! \sum_{\tau \in \Pi \cup \{\pi\}} \!\! \frac{1}{n+1} \mathbb{E}_{\sigma \sim U} [K(\tau,\sigma)] - 2 \sum_{\tau \in \Pi} \frac{1}{n} \mathbb{E}_{\sigma \sim U} [K(\tau,\sigma)] \\
    & \quad + \sum_{\tau,\tau' \in \Pi} \frac{1}{n^2} K(\tau,\tau') - \sum_{\tau,\tau' \in \Pi \cup \{\pi\}} \frac{1}{(n+1)^2} K(\tau,\tau') \\
    &= 2 \frac{n+1}{n+1} \mathbb{E}_{\sigma \sim U} [K(I,\sigma)] - 2 \frac{n}{n} \mathbb{E}_{\sigma \sim U} [K(I,\sigma)] \\
    & \quad + \sum_{\tau,\tau' \in \Pi} \frac{2n+1}{n^2(n+1)^2} K(\tau,\tau') - 2 \sum_{\tau \in \Pi} \frac{1}{(n+1)^2} K(\tau,\pi) \\
    &= \frac{K(I,I)}{(n+1)^2} + \sum_{\tau,\tau' \in \Pi} \frac{2n+1}{n^2(n+1)^2} K(\tau,\tau') \\
    & \qquad - \frac{2}{(n+1)^2} \sum_{\tau \in \Pi}  K(\tau,\pi),
\end{align*}
where both equalities use right-invariance.  Note that the first two summands in the last expression are constants (i.e., do not depend on the choice of $\pi$), so maximizing this quantity is the same as minimizing $\sum_{\tau \in \Pi} K(\tau,\pi)$, i.e., the same as the kernel herding optimization subproblem.

Furthermore, we can show that Bayesian quadrature minimises squared discrepancy via optimisation of weights. Writing $z_i = \mathbb{E}_{\sigma' \sim p}[K(\sigma_i, \sigma')]$ and switching to vector notation we have
$$D(\Pi, w)^2 = c -2w^Tz + w^TKw,$$
where the first term is a constant not depending on $w$. Taking the gradient with respect to $w$, setting it to 0, and solving for $w$, we obtain:
\begin{align}
\label{eq:bq_minimisation}
\nabla D(\Pi, w)^2 &= -2z + 2w^TK=0 \nonumber\\
w^* &= z^TK^{-1},
\end{align}
where \eqref{eq:bq_minimisation} is exactly line \ref{lst:line:bq_minimisation} of Algorithm \ref{alg:sbq}.

We use the discrepancy measure in \eqref{eq:discrepancy} for numerical experiments in Section \ref{sec:discrepancy_eval} to determine the quality of a set of sampled permutations in a way that is independent of the integrand $f$. 

\section{Sampling Permutations on \texorpdfstring{$\mathbb{S}^{d-2}$}{S d-2}}
\label{sec:sphere}
Kernel herding and sequential Bayesian quadrature directly reduce the discrepancy of the sampled permutations via greedy optimisation. We now describe two approaches to sampling permutations of length $d$ based on a relaxation to the Euclidean sphere $\mathbb{S}^{d-2} = \left\{ x \in \mathbf{R}^{d-1} : \left\| x \right\| = 1 \right\}$, where the problem of selecting well-distributed samples is simplified. We describe a simple procedure for mapping points on the surface of this hypersphere to the nearest permutation, where the candidate nearest neighbours form the vertices of a Cayley graph inscribing the sphere. This representation provides a natural connection between distance metrics over permutations, such as Kendall's tau and Spearman's rho, and Euclidean space. We show that samples taken uniformly on the sphere result in a uniform distribution over permutations, and evaluate two unbiased sampling algorithms. Our approach is closely related to that of \cite{permutations_angular}, where an angular view of permutations is used to solve inference problems.

\subsection{Spheres, Permutohedrons, and the Cayley Graph}

Consider the projection of permutations $\sigma \in \mathfrak{S}_d$ as points in $\mathbf{R}^d$, where the $i$-th coordinate is given by $\sigma^{-1}(i)$. These points form the vertices of a polytope known as the permutohedron \citep{permutohedron}. The permutohedron is a $d-1$ dimensional object embedded in $d$ dimensional space, lying on the hyperplane given by
$$\sum_{i=1}^{d}\sigma^{-1}(i) = \frac{d(d+1)}{2},$$
with normal vector
\begin{equation}
\label{eq:normal}
\vec{n}= \begin{bmatrix}
           \frac{1}{\sqrt{d}}\\
           \frac{1}{\sqrt{d}} \\
           \vdots \\
           \frac{1}{\sqrt{d}}
         \end{bmatrix},
\end{equation}
and inscribing the hypersphere $\mathbb{S}^{d-2}$ lying on the hyperplane, defined by
$$\sum_{i=1}^{d}\sigma^{-1}(i)^2=\frac{d(d+1)(2d+1)}{6}.$$

\begin{figure}
\centering
\begin{minipage}{.48\textwidth}
  \centering
  \includegraphics[width=1.0\linewidth]{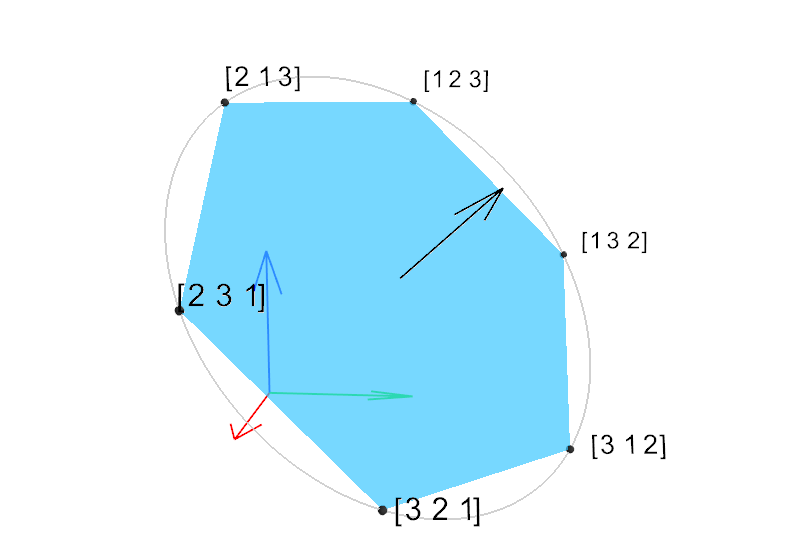}
  \captionof{figure}{Cayley Graph of $d=3$}
  \label{fig:cayley2d}
\end{minipage}%
\begin{minipage}{.48\textwidth}
  \centering
  \includegraphics[width=1.0\textwidth]{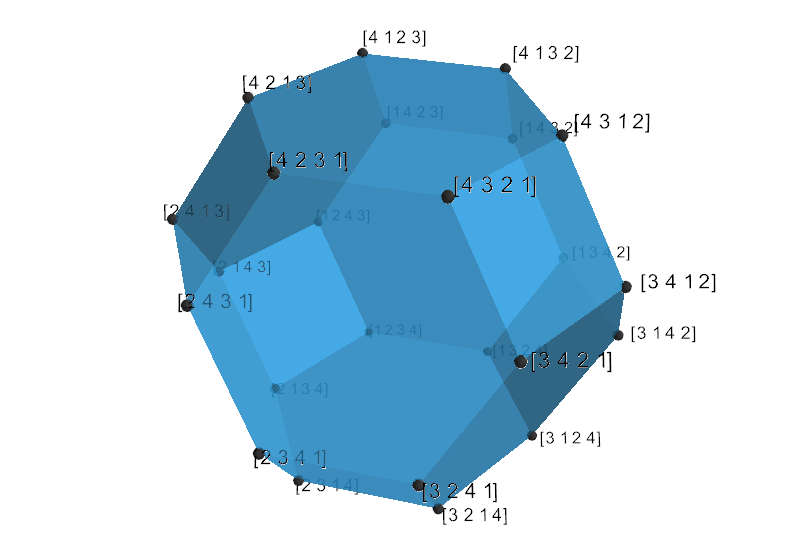}
  \captionof{figure}{Cayley Graph of $d=4$}
  \label{fig:cayley}
\end{minipage}
\end{figure}

Inverting the permutations at the vertices of the permutohedron gives a Cayley graph of the symmetric group with adjacent transpositions as the generating set. Figure \ref{fig:cayley2d} shows the Cayley graph for $\mathfrak{S}_3$, whose vertices form a hexagon inscribing a circle on a hyperplane, and Figure \ref{fig:cayley} shows the Cayley graph of $\mathfrak{S}_4$ projected into three dimensions (its vertices lie on a hyperplane in four dimensions). Each vertex $\sigma^{-1}$ in the Cayley graph has $d-1$ neighbours, where each neighbour differs by exactly one adjacent transposition (one bubble-sort operation). Critically for our application, this graph has an interpretation in terms of distance metrics on permutations. The Kendall-tau distance is  the graph distance in the vertices of this polytope, and Spearman distance is the squared Euclidean distance between two vertices \citep{generalised_permutation}. Additionally, the antipode of a permutation is its reverse permutation. With this intuition, we use the hypersphere as a continuous relaxation of the space of permutations, where selecting samples far apart on the hypersphere corresponds to sampling permutations far apart in the distance metrics of interest.

We now describe a process for sampling from the set of permutations inscribing $\mathbb{S}^{d-2}$. First, shift and scale the permutohedron to lie around the origin with radius $r=1$. The transformation on vertex $\sigma^{-1}$ is given by
\begin{equation}
\label{eq:scale}
\hat{\sigma}^{-1}=\frac{\sigma^{-1}-\mu}{||\sigma^{-1}||},
\end{equation}
where $\mu=(\frac{d+1}{2},\frac{d+1}{2},\cdots)$ is the mean vector of all permutations, and $||\sigma^{-1}||=\sqrt{\sum_{i=1}^{d}\sigma^{-1}(i)^2}$.

Now select some vector $x$ of dimension $d-1$, say, uniformly at random from the surface of $\mathbb{S}^{d-2}$. Project $x$ onto the hyperplane in $\mathbb{R}^d$ using the following $(d-1) \times d$ matrix:

\begin{align*}
    U = \begin{bmatrix}
    1 & -1 & 0 & \dots & 0 \\
    1 & 1 & -2 & \dots & 0  \\
    & \vdots & & \ddots&\\
    1 & 1 & 1 & \dots & -(d-1)\\
    \end{bmatrix}.
\end{align*}
It is easily verifiable that this basis of row vectors is orthogonal to hyperplane normal $\vec{n}$. Normalising the row vectors of $U$ gives a transformation matrix $\hat{U}$ used to project vector $x$ to the hyperplane by 
$$\Tilde{x}=\hat{U}^Tx,$$
so that
$$\Tilde{x}^T\vec{n}=0.$$

Given $\Tilde{x}$, find the closest permutation $\hat{\sigma}^{-1}$ by maximising the inner product

\begin{equation}
\label{eq:argmax}
\hat{y}=\argmax_{\hat{\sigma}^{-1}} \Tilde{x}^T\hat{\sigma}^{-1}.
\end{equation}

This maximisation is simplified by noting that $\hat{\sigma}^{-1}$ is always a reordering of the same constants ($\hat{\sigma}^{-1}$ is a scaled and shifted permutation). The inner product is therefore maximised by matching the largest element in $\hat{\sigma}^{-1}$ against the largest element in $\Tilde{x}$, then proceeding to the second-largest, and so on. Thus the argmax is performed by finding the permutation corresponding to the order type of $\Tilde{x}$, which is order-isomorphic to the coordinates of $\Tilde{x}$. The output $\hat{y}$ is a vertex on a scaled permutohedron --- to get the corresponding point on the Cayley graph, undo the scale/shift of Eq. \ref{eq:scale} to get a true permutation, then invert that permutation:
\begin{equation}
\label{eq:inverse}
y=\text{inverse}(\hat{y}||\sigma^{-1}|| + \mu).
\end{equation}
In fact, both Eq. \ref{eq:argmax} and \ref{eq:inverse} can be simplified via a routine $\textit{argsort}$, defined by
$$\text{argsort}(a)=b,$$
such that
$$a_{b_0} \leq a_{b_1} \leq \cdots \leq a_{b_n}.$$
In other words, $b$ contains the indices of the elements of $a$ in sorted position. 

Algorithm \ref{alg:sample} describes the end-to-end process of sampling. We use the algorithm of \cite{knuth} for generating points uniformly at random on $\mathbb{S}^{d-2}$: sample from $d-1$ independent Gaussian random variables and normalise the resulting vector to have unit length. We now make the claim that Algorithm \ref{alg:sample} is unbiased.

\begin{algorithm}[t]
\small
\DontPrintSemicolon
 \KwOutput{$\sigma$, a permutation of length $d$}
 $x \leftarrow N(0, 1)$ \label{alg:sample1}  \tcp*{x is a vector of $d-1$ i.i.d. normal samples} 
 $x \leftarrow \frac{x}{||x||}$ \label{alg:sample2} \tcp*{x lies uniformly on $\mathbb{S}^{d-2}$}
 $\Tilde{x}=\hat{U}^Tx$ \label{alg:sample3} \;
 $\sigma \leftarrow \text{argsort}(\Tilde{x})$\tcp*{$\sigma$ is a uniform random permutation}
 \caption{Sample permutation from $\mathbb{S}^{d-2}$}
 \label{alg:sample}
\end{algorithm}

\begin{theorem}
\label{th:uniform}
Algorithm \ref{alg:sample} generates permutations uniformly at random, i.e., $Pr(\sigma) = \frac{1}{d!},  \forall \sigma \in \mathfrak{S}_d$, from a uniform random sample on $\mathbb{S}^{d-2}$.
\end{theorem}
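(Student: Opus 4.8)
The plan is to reduce the statement to a standard fact about exchangeable vectors: the order type of a random vector whose coordinates are exchangeable and almost surely distinct is uniform over $\mathfrak{S}_d$. First I would dispose of the normalisation step. By Knuth's construction the sphere sample is $x/\|x\|$ for $x \sim N(0, I_{d-1})$, and since $\tilde{x} = \hat{U}^T\!\left(x/\|x\|\right) = \tfrac{1}{\|x\|}\hat{U}^T x$ differs from $\hat{U}^T x$ only by the positive scalar $1/\|x\|$, the output of $\text{argsort}$ is unchanged. Hence the law of $\sigma$ is exactly that obtained by setting $\tilde{x} = \hat{U}^T x$ with $x \sim N(0, I_{d-1})$, which lets me work with a genuine (degenerate) Gaussian vector in $\mathbb{R}^d$ instead of the uniform law on the sphere directly.

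Next I would identify the law of $\tilde{x}$. The rows of $\hat{U}$ are orthonormal and span the hyperplane $H = \{y \in \mathbb{R}^d : \sum_i y_i = 0\}$ orthogonal to $\vec{n}$, so $\hat{U}^T$ is a linear isometry of $\mathbb{R}^{d-1}$ onto $H$. Therefore $\tilde{x}$ is a centred Gaussian supported on $H$ with covariance $\hat{U}^T\hat{U} = P$, the orthogonal projection onto $H$, explicitly $P = I_d - \tfrac{1}{d}\vec{1}\vec{1}^T$. The essential observation is that this covariance depends only on the subspace $H$ and not on the particular basis encoded in $U$; any orthonormal basis of $H$ yields the same law for $\tilde{x}$.

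I would then establish exchangeability and the absence of ties. Every coordinate-permutation matrix $\Pi$ fixes $\vec{1}$, hence commutes with $P$ and satisfies $\Pi P \Pi^T = P$; consequently $\Pi\tilde{x}$ and $\tilde{x}$ share the same law, so the coordinates of $\tilde{x}$ are exchangeable. For ties, each event $\{\tilde{x}_i = \tilde{x}_j\}$ is the intersection of $H$ with the hyperplane $\{y_i = y_j\}$, a subspace of dimension $d-2$, which is null for the $(d-1)$-dimensional Gaussian density carried by $H$. Combining exchangeability with the almost-sure distinctness of coordinates, the $d!$ strict orderings partition $H$ into regions of equal Gaussian measure, each mapped bijectively to a permutation by $\text{argsort}$; thus $\sigma$ is uniform on $\mathfrak{S}_d$.

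The main obstacle is conceptual rather than computational: one must resist reading bias into the visibly asymmetric shape of $U$. The resolution is precisely the second step, where only $\hat{U}^T\hat{U} = P$ enters the distribution, rendering the argument independent of the chosen basis. A secondary point to state carefully is the measure-zero claim for ties, since $\tilde{x}$ lives on the lower-dimensional subspace $H$ and one is implicitly using the Gaussian (equivalently, surface) measure on $H$ rather than Lebesgue measure on all of $\mathbb{R}^d$.
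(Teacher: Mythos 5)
Your proposal is correct and follows essentially the same route as the paper: both arguments reduce to computing the covariance of $\tilde{x}=\hat{U}^T x$ as $\hat{U}^T\hat{U} = I_d - \tfrac{1}{d}\vec{1}\vec{1}^T$ and then invoking the permutation symmetry of that matrix to conclude that all orderings of the coordinates are equally likely, so that argsort outputs a uniform permutation. That said, your write-up is tighter than the paper's in three places. First, you dispose of the normalisation $x/\|x\|$ by noting argsort is invariant under positive scaling, and then work with a genuine Gaussian; the paper instead asserts that the point on $\mathbb{S}^{d-2}$ ``has multivariate normal distribution,'' which is not literally true after normalisation (it is uniform on the sphere), though harmless since only the covariance structure matters. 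Second, you derive exchangeability properly, from $\Pi P \Pi^T = P$ for every permutation matrix $\Pi$, whereas the paper jumps from equicorrelation to ``$\Pr(\tilde{x}_i<\tilde{x}_j)=\tfrac12$ for all pairs'' to ``all orderings are equally likely''; pairwise statements alone do not imply the full claim, so your invariance argument is the correct justification of that step. Third, you address ties explicitly via a measure-zero argument, a point the paper omits entirely. None of this changes the substance of the approach, but your version is the one that would survive a careful referee.
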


\begin{proof}
The point $x \in \mathbb{S}^{d-2}$ from Algorithm \ref{alg:sample}, line \ref{alg:sample2}, has multivariate normal distribution with mean 0 and covariance $\Sigma=aI$ for some scalar $a$ and $I$ as the identity matrix. $\Tilde{x}=\hat{U}^Tx$ is an affine transformation of a multivariate normal and so has covariance
\begin{align*}
\mathrm{Cov(\Tilde{x})} &= \hat{U}^T\Sigma\hat{U} \\
&= a\hat{U}^TI\hat{U} \\
&= a\hat{U}^T\hat{U}.
\end{align*}
The $d \times d$ matrix $\hat{U}^T\hat{U}$ has the form
\begin{align*}
\hat{U}^T\hat{U} = \begin{bmatrix}
    \frac{d-1}{d} & \frac{-1}{d} &  \dots & \frac{-1}{d} \\
    \frac{-1}{d} & \frac{d-1}{d} & \dots & \frac{-1}{d}  \\
    & \vdots & \ddots&\\
    \frac{-1}{d} & \frac{-1}{d} &\dots & \frac{d-1}{d}\\
    \end{bmatrix},
\end{align*}
with all diagonal elements $\frac{d-1}{d}$ and off diagonal elements $\frac{-1}{d}$, and so $\Tilde{x}$ is equicorrelated. Due to equicorrelation, $\Tilde{x}$ has order type such that $\forall \Tilde{x}_i,\Tilde{x}_j \in x, i \neq j: Pr(\Tilde{x}_i < \Tilde{x}_j) = \frac{1}{2}$. In other words, all orderings of $\Tilde{x}$ are equally likely. The function \textit{argsort} implies an order-isomorphic bijection, that is, \textit{argsort} returns a unique permutation for every unique ordering over its input. As every ordering of $\Tilde{x}$ is equally likely, Algorithm \ref{alg:sample} outputs permutations $\sigma \in \mathfrak{S}_d$ with $p(\sigma)=\frac{1}{d!},  \forall \sigma \in \mathfrak{S}_d.$
\end{proof}

Furthermore, Equation \ref{eq:argmax} associates a point on the surface of $\mathbb{S}^{d-2}$ to the nearest permutation. This implies that there is a Voronoi cell on the same surface associated with each permutation $\sigma_i$, and a sample $\Tilde{x}$ is associated with $\sigma_i$ if it lands in its cell. Figure \ref{fig:cayley_voronoi} shows the Voronoi cells on the hypersphere surface for $d=4$, where the green points are equidistant from nearby permutations. A corollary of Theorem \ref{th:uniform} is that these Voronoi cells must have equal measure, which is easily verified for $d=4$.

\subsection{Orthogonal Spherical Codes}
\label{sec:orthogonal}
Having established an order isomorphism $\mathbb{S}^{d-2} \rightarrow \mathfrak{S}_d$, we consider selecting well-distributed points on $\mathbb{S}^{d-2}$. Our first approach, described in Algorithm \ref{alg:sample_k}, is to select $2(d-1)$ dependent samples on $\mathbb{S}^{d-2}$ from a basis of orthogonal vectors. Algorithm \ref{alg:sample_k} uses the Gram-Schmidt process to incrementally generate a random basis, then converts each component and its reverse into permutations by the same mechanism as Algorithm \ref{alg:sample}. The cost of each additional sample is proportional to $O(d^2)$. This sampling method is related to orthogonal Monte Carlo techniques discussed in \cite{choromanski2019unifying}. Writing $v([\sigma]_{i-1} \cup\{i\}) - v([\sigma]_{i-1}) = g_i(\sigma)$, the Shapley value estimate for samples given by Algorithm \ref{alg:sample_k} is
\begin{equation}
\label{eq:orth_est}
\bar{\Sh}^{\textrm{orth}}_i(v)=\frac{1}{n} \sum_{\ell=1}^{n/k} \sum_{j=1}^{k} g_i(\sigma_{\ell j}),
\end{equation}
where $(\sigma_{\ell 1},\sigma_{\ell 2},\cdots,\sigma_{\ell k})$ are a set of correlated samples and $n$ is a multiple of $k$.

\begin{algorithm}[t]
\small
\DontPrintSemicolon
 $X \sim N(0,1)_{k/2,d}$\tcp*{iid. normal random Matrix}
 $Y \leftarrow 0_{k,d}$\tcp*{Matrix storing output permutations}
 \For{$i\leftarrow 1$ \KwTo $k/2$}{
   \For{$j\leftarrow 1$ \KwTo $i$}{
        $X_i \leftarrow X_i - X_jX_i^T \cdot X_j$ \tcp*{Gram-Schmidt process}
   }
   
    $X_i \leftarrow \frac{X_i}{||X_i||}$\;
    $Y_{2i} \leftarrow \text{argsort}(\hat{U}^TX_i)$\;
    $Y_{2i+1} \leftarrow \text{argsort}(\hat{U}^T(-X_i))$\;
 }
 \KwRet $Y$
 \caption{Sample $k = 2(d-1)$ permutations from $\mathbb{S}^{d-2}$}
\label{alg:sample_k}
\end{algorithm}

\begin{prop}
$\bar{\Sh}^{\textrm{orth}}_i(v)$ is an unbiased estimator of $\Sh_i(v)$.
\end{prop}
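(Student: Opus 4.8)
The plan is to reduce unbiasedness to a single marginal-distribution claim via linearity of expectation, since the correlations among the $k$ permutations produced within one block of Algorithm~\ref{alg:sample_k} are irrelevant to the expectation. Writing out the estimator in \eqref{eq:orth_est},
$$\mathbb{E}\big[\bar{\Sh}^{\textrm{orth}}_i(v)\big] = \frac{1}{n}\sum_{\ell=1}^{n/k}\sum_{j=1}^{k} \mathbb{E}\big[g_i(\sigma_{\ell j})\big],$$
so it suffices to show that each individual sample $\sigma_{\ell j}$ is marginally uniform on $\mathfrak{S}_d$; then each of the $n$ terms equals $\frac{1}{d!}\sum_{\sigma \in \mathfrak{S}_d}g_i(\sigma) = \Sh_i(v)$ by \eqref{eq:shapley_permutations}, and the prefactor $\frac{1}{n}$ cancels the count of $n$ terms.

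First I would establish that each orthonormalised row produced by the Gram--Schmidt loop is marginally uniform on $\mathbb{S}^{d-2}$. The rows of $X$ are i.i.d.\ isotropic Gaussians, so their joint law is invariant under a common orthogonal transformation $Q \in O(d-1)$. Since Gram--Schmidt is equivariant — applying it to $QX_1,\dots,QX_{k/2}$ yields exactly $Qe_1,\dots,Qe_{k/2}$, where $e_1,\dots,e_{k/2}$ denotes the frame obtained from $X_1,\dots,X_{k/2}$ — the marginal law of each frame vector $e_i$ is $O(d-1)$-invariant, hence uniform on $\mathbb{S}^{d-2}$. (Equivalently, the frame is Haar-distributed on the Stiefel manifold, whose one-vector marginals are uniform on the sphere.)

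Next I would handle the antipodal samples $\text{argsort}(\hat{U}^T(-X_i))$: because the uniform distribution on $\mathbb{S}^{d-2}$ is invariant under the map $x \mapsto -x$, the vector $-e_i$ is uniform on the sphere whenever $e_i$ is. Thus every argument fed to $\text{argsort}$ — both $e_i$ and $-e_i$ — is a uniform random point on $\mathbb{S}^{d-2}$, and Theorem~\ref{th:uniform} immediately gives that each resulting permutation $\sigma_{\ell j}$ is uniform on $\mathfrak{S}_d$. Substituting back into the displayed expectation yields $\mathbb{E}[\bar{\Sh}^{\textrm{orth}}_i(v)] = \Sh_i(v)$.

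I expect the main obstacle to be the marginal-uniformity step for the Gram--Schmidt frame: one must argue the rotational invariance and equivariance carefully to conclude uniformity of the individual rows (and not merely their orthonormality), and confirm that the degenerate event where some $X_i$ lies in the span of its predecessors has probability zero, so that the normalisation is well defined almost surely. Everything else — linearity of expectation, the antipodal symmetry, and the appeals to Theorem~\ref{th:uniform} and \eqref{eq:shapley_permutations} — is routine.
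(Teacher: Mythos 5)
Your proof is correct and follows essentially the same route as the paper's: linearity of expectation reduces the claim to marginal uniformity of each sampled permutation, which is obtained from the rotational invariance (Haar property) of the Gram--Schmidt frame together with Theorem~\ref{th:uniform}. The only differences are cosmetic --- the paper cites \cite{mezzadri2006generate} for the Haar distribution of the orthogonalised Gaussian matrix where you derive it directly via orthogonal equivariance of Gram--Schmidt, and you explicitly treat the antipodal samples, which the paper covers implicitly through its symmetry remark.
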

\begin{proof}
The Shapley value $\Sh_i(v)$ is equivalently expressed as an expectation over uniformly distributed permutations:
\begin{align*}
\Sh_i(v) &=\frac{1}{|N|!} \sum_{\sigma \in \mathfrak{S}_d} \big[ v([\sigma]_{i-1} \cup\{i\}) - v([\sigma]_{i-1}) \big] \\
\Sh_i(v) &= \mathbb{E}_{\sigma \sim U}[g_i(\sigma)].
\end{align*}
The distribution of permutations drawn as orthogonal samples is clearly symmetric, so $p(\sigma_{\ell,j})=p(\sigma_{\ell,m})$ for any two indices $j,m$ in a set of $k$ samples, and $\mathbb{E}[g_i(\sigma_{\ell,j})]=\mathbb{E}[g_i(\sigma_{\ell,m}))]=\mathbb{E}[g_i(\sigma^{ortho})]$.
As the estimator \eqref{eq:orth_est} is a sum, by the linearity of expectation
$$\mathbb{E}[\bar{\Sh}^{\textrm{orth}}_i(v)]=\frac{1}{n} \sum_{\ell=1}^{n/k} \sum_{j=1}^{k} \mathbb{E}[g_i(\sigma_{\ell j})]=\mathbb{E}[g_i(\sigma^{ortho})].$$
By Theorem \ref{th:uniform}, the random variable $\sigma^{ortho}$ has a uniform distribution if its associated sample $x \in \mathbb{S}^{d-2}$ is drawn with uniform distribution. Let $x$ be a component of a random orthogonal basis. If the random basis is drawn with equal probability from the set of orthogonal matrices of order $d-1$ (i.e. with Haar distribution for the orthogonal group), then it follows that $\mathbb{E}[g_i(\sigma^{ortho})]=\mathbb{E}_{\sigma \sim U}[g_i(\sigma)]$. The Gram-Schmidt process applied to a square matrix with elements as i.i.d.~standard normal random variables yields a random orthogonal matrix with Haar distribution \citep{mezzadri2006generate}. Therefore
\begin{align*}
\Sh_i(v) &= \mathbb{E}_{\sigma \sim U}[g_i(\sigma)] = \mathbb{E}_{\sigma \sim U}[g_i(\sigma)] \\
&= \mathbb{E}[\bar{\Sh}^{\textrm{orth}}_i(v)].
\end{align*}
\end{proof}

The variance of the estimator \eqref{eq:orth_est} can be analysed similarly to the antithetic sampling of Section \ref{sec:antithetic}, extended to $k$ correlated random variables. By extension of the antithetic variance in Equation \ref{eq:antithetic_variance}, we have
\begin{align*}
\mathrm{Var}(\bar{\Sh}^{\textrm{orth}}_i(v))=\frac{1}{n} \sum_{\ell=1}^{n/k} \sum_{j,m=1}^{k} \mathrm{Cov}(g(\sigma_{\ell j}),g(\sigma_{\ell m})).
\end{align*}
The variance is therefore minimised by selecting $k$ negatively correlated samples. Our experimental evaluation in Section \ref{sec:evaluation} suggests that, for the domain of interest, orthogonal samples on the sphere are indeed strongly negatively correlated, and the resulting estimators are more accurate than standard Monte Carlo and antithetic sampling in all evaluations.

Samples from Algorithm \ref{alg:sample_k} can also be considered as a type of spherical code. Spherical codes describe configurations of points on the unit sphere maximising the angle between any two points (see \cite{conway}). A spherical code $A(n,\phi)$ gives the maximum number of points in dimension $n$ with minimum angle $\phi$. The orthonormal basis and its antipodes trivially yield the optimal code $A(d-1, \frac{\pi}{2})=2(d-1)$.

From their relative positions on the Cayley graph we obtain bounds on the Kendall tau kernel $K_{\tau}(\sigma,\sigma')$ from Section \ref{sec:kernel_methods} for the samples of Algorithm \ref{alg:sample_k}. The angle between vertices of the Cayley graph is related to $K_{\tau}(\sigma,\sigma')$ in that the maximum kernel value of 1 occurs for two permutations at angle 0 and the minimum kernel value of -1 occurs for a permutation and its reverse, separated by angle $\pi$. As the angle between two points $(x,x')$ on $\mathbb{S}_{d-2}$ increases from 0 to $\pi$, the kernel $K_{\tau}(\sigma,\sigma')$ for the nearest permutations $(\sigma,\sigma')$ decreases monotonically and linearly with the angle, aside from quantisation error. If the angle between two distinct points $(x,x')$ in our spherical codes is $\frac{\pi}{2}$, we obtain via the map, $\mathbb{S}^{d-2} \rightarrow \mathfrak{S}_d$, the permutations $(\sigma,\sigma')$ such that
$$|K_{\tau}(\sigma,\sigma')| \leq 1/2 + \epsilon,$$
with some small constant quantisation error $\epsilon$. Figure \ref{fig:ortho} shows $k=6$ samples for the $d=4$ case.  This is made precise in the following result.  Note that the statement and its proof are in terms of $\sigma$ and $\sigma'$ instead of their inverses (which label the vertices of the permutohedron in our convention), for simplicity; without this change, the meaning is the same, since $n_\textrm{dis}(\sigma,\sigma') = n_\textrm{dis}(\sigma^{-1},\sigma'^{-1})$ and $A(\sigma)^T A(\sigma') = A(\sigma^{-1})^T A(\sigma'^{-1})$ for any permutations $\sigma$, $\sigma'$.  First, let $\rho = \sqrt{d(d^2-1)/12}$, so that the map $A(y) = (y - \mu)/\rho$ maps the permutohedron to an isometric copy of $\mathbb{S}^{d-2}$ centered at the origin in $\mathbb{R}^d$, the intersection of the unit sphere $\mathbb{S}^{d-1}$ with the hyperplane orthogonal to $\vec{n}$.

\begin{restatable}{theorem}{ktau}
\label{thm:k_tau}
Suppose $\sigma, \sigma' \in \mathfrak{S}_d$.  Then
$$
-2 + 4\left (\frac{1-K_\tau(\sigma,\sigma')}{2} \right )^{3/2} \leq A(\sigma)^T A(\sigma') - 3K_\tau(\sigma,\sigma') + O(d^{-1}) \leq 2 - 4 \left ( \frac{1+K_\tau(\sigma,\sigma')}{2} \right )^{3/2}
$$
and, if $A(\sigma)^T A(\sigma') = o(1)$, then
$$
| K_\tau(\sigma,\sigma') | \leq 1/2 +o(1).
$$
\end{restatable}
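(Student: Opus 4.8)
The plan is to convert the statement into a purely combinatorial comparison between the (squared) Spearman distance and the Kendall distance, and then recognise it as an extremal rank-correlation inequality.

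First I would compute the left-hand inner product exactly. Treating $\sigma,\sigma'$ as vectors in $\mathbb{R}^d$ and using $\sum_i\sigma(i)=\tfrac{d(d+1)}{2}$, $\sum_i\sigma(i)^2=\tfrac{d(d+1)(2d+1)}{6}$ together with $d_\rho(\sigma,\sigma')=\|\sigma-\sigma'\|_2^2$, a direct expansion gives
$$(\sigma-\mu)^T(\sigma'-\mu)=\rho^2-\tfrac12 d_\rho(\sigma,\sigma'),\qquad\text{hence}\qquad A(\sigma)^TA(\sigma')=1-\frac{d_\rho(\sigma,\sigma')}{2\rho^2}.$$
Writing $\hat d_\rho = d_\rho/(4\rho^2)\in[0,1]$ (so $A(\sigma)^TA(\sigma')=1-2\hat d_\rho$) and $x=n_{\textrm{dis}}(\sigma,\sigma')/\binom{d}{2}\in[0,1]$ (so $K_\tau=1-2x$), the claimed two-sided bound becomes, after substituting and simplifying,
$$3x-2+2(1-x)^{3/2}\;\le\;\hat d_\rho\;\le\;3x-2x^{3/2}\qquad(+\,O(d^{-1})).$$
Since both $n_{\textrm{dis}}$ and $d_\rho$ are right-invariant, I may replace $(\sigma,\sigma')$ by the single permutation $\pi=\sigma\sigma'^{-1}$ compared with the identity, so this is exactly a bound on normalised squared-Spearman distance in terms of normalised Kendall distance: a Spearman-versus-Kendall (Daniels / Durbin--Stuart type) extremal inequality.

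Next I would exploit a symmetry to halve the work. The value-complement $\pi\mapsto\mathrm{Rev}\circ\pi$ sends $n_{\textrm{dis}}\mapsto\binom{d}{2}-n_{\textrm{dis}}$ (so $x\mapsto 1-x$) and negates the centred vector, $A(\pi)\mapsto -A(\pi)$ (so $\hat d_\rho\mapsto 1-\hat d_\rho$). The two bounding functions $U(x)=3x-2x^{3/2}$ and $L(x)=3x-2+2(1-x)^{3/2}$ satisfy $U(1-x)+L(x)=1$, so the upper bound $\hat d_\rho\le U(x)$ applied to $\mathrm{Rev}\circ\pi$ is precisely the lower bound $\hat d_\rho\ge L(x)$ for $\pi$. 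Hence it suffices to prove the single inequality $\hat d_\rho\le 3x-2x^{3/2}$, i.e.\ the maximum of the squared-Spearman distance at a fixed number of inversions.

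For that upper bound I would start from the displacement identity $d_\rho=\sum_i a_i^2$ with $a_i=\pi(i)-i$, the footrule $F=\sum_i|a_i|$, and the Diaconis--Graham inequality $F\le 2\,n_{\textrm{dis}}$. The crude estimate $d_\rho\le(\max_i|a_i|)\,F\le(d-1)\cdot 2\,n_{\textrm{dis}}$ already yields the leading term $\hat d_\rho\le 3x$. The correction $-2x^{3/2}$ must come from the permutation constraint that large displacements are scarce: a coordinate with $|a_i|\ge t$ forces order $t$ inversions local to it, so the displacement profile cannot be concentrated at the maximum while keeping $F$ near $2\,n_{\textrm{dis}}$. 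I would make this quantitative either by bounding $\#\{i:|a_i|\ge t\}$ and integrating the tail, or --- more robustly --- by a smoothing/exchange argument showing that the maximiser of $\sum_i a_i^2$ at fixed $n_{\textrm{dis}}$ is an explicit block configuration, whose value I then evaluate in the $d\to\infty$ (permuton) limit, where the sum over block sizes becomes an integral producing the exponent $3/2$. This extremal step is the main obstacle: the bound is not attained by the obvious candidates (a single reversed block gives only $x^{3/2}$, a balanced block swap only $\tfrac32 x$), so establishing it for \emph{every} permutation requires either a sharp global inequality or a complete reduction to the extremal structure.

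Finally, the $o(1)$ conclusion is immediate from the main inequality. Setting $A(\sigma)^TA(\sigma')=o(1)$ and $K_\tau=1-2x$ collapses the middle expression to $-3+6x+o(1)$, so the left bound gives $6x-4x^{3/2}\ge 1-o(1)$ and the right bound gives $6x+4(1-x)^{3/2}\le 5-o(1)$. Both $6x-4x^{3/2}$ and $6x+4(1-x)^{3/2}$ are increasing on $[0,1]$ and equal $1$ at $x=\tfrac14$ and $5$ at $x=\tfrac34$ respectively, forcing $x\in[\tfrac14-o(1),\,\tfrac34+o(1)]$ and hence $|K_\tau|=|1-2x|\le\tfrac12+o(1)$.
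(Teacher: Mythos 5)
Your reduction is sound, and parts of it are genuinely nice: the identity $A(\sigma)^T A(\sigma') = 1 - d_\rho(\sigma,\sigma')/(2\rho^2)$, the translation of the claimed two-sided bound into $L(x) \le \hat d_\rho \le U(x)$ with $U(x) = 3x-2x^{3/2}$, $L(x) = 3x-2+2(1-x)^{3/2}$ and $x = n_{\textrm{dis}}/\binom{d}{2}$, the reversal symmetry $U(1-x)+L(x)=1$ that reduces everything to the single upper bound (the paper proves both bounds separately, so this halving is a real economy), and the closing monotonicity argument giving $|K_\tau|\le 1/2+o(1)$ are all correct. But the proof has a hole exactly where the theorem's content lies: you never establish $\hat d_\rho \le 3x - 2x^{3/2}$. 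Your Diaconis--Graham estimate $d_\rho \le (\max_i |a_i|)\, F \le 2(d-1)\,n_{\textrm{dis}}$ yields only the leading term $3x$, and with just $3x-2 \le \hat d_\rho \le 3x$ your own concluding argument gives $x \in [1/6 - o(1),\, 5/6 + o(1)]$, i.e.\ $|K_\tau| \le 2/3 + o(1)$, not $1/2$; so the $-2x^{3/2}$ correction is indispensable, and it is precisely the step you defer. The two routes you sketch for it (tail bounds on the displacement profile, or a smoothing/exchange argument plus a permuton limit) are strategies, not arguments: the exchange route needs an identified extremal family, and you note yourself that the obvious candidates (single reversed block, balanced block swap) fall short of the bound, which is why you call this ``the main obstacle.'' As written, the proposal is an incomplete proof.

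For comparison, the paper closes exactly this gap with a different device. It decomposes $\pi$ into $k = n_{\textrm{dis}}(I,\pi)$ adjacent transpositions via bubble sort and observes that along this path the increments of $\nu(\pi) = \sum_i i\,\pi(i)$ are $-\wt(e_j)$ for $k$ \emph{distinct} edges $e_j$ of the complete graph $K_d$, where $\wt(\{a,b\}) = |a-b|$. This converts the extremal problem over permutations with $k$ inversions into a relaxation: an extremal problem over sets of $k$ distinct edges of $K_d$, which greedy selection (all weight-$1$ edges, then weight-$2$, \dots, versus the single weight-$(d-1)$ edge, two weight-$(d-2)$ edges, \dots) solves exactly; the greedy sums are cubic in $d$ and, after normalisation, produce the $x^{3/2}$ and $(1-x)^{3/2}$ terms. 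The relaxation sidesteps any need to identify extremal permutations at all --- indeed the paper remarks it does not know whether its bound is sharp --- whereas your outline stands or falls on finding that extremal structure. Without some such idea, the central inequality remains unproved.
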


Proof of the above can be found in Appendix \ref{app:proof}. Theorem \ref{thm:k_tau} is a kind of converse to the so-called Rearrangement Inequality, which states that the maximum dot product between a vector and a vector consisting of any permutation of its coordinates is maximized when the permutation is the identity and minimized when it is the reverse identity.  Here, we show what happens in between: as one varies from the identity to its reverse one adjacent transposition at a time, the dot product smoothly transitions from maximal to minimal, with some variability across permutations having the same number of inversions.  Interestingly, we do not know if the above bound is the best possible.  A quick calculation shows that, letting $k \approx d 2^{-1/3}$ be an integer, the permutation
$$
\pi = (k,k-1,\ldots,2,1,k+1,k+2,\ldots,d-1,d)
$$
has $\nu(\pi) = I^T \pi = d^3 (1/4 +o(1))$, i.e, $A(I)^T A(\pi) \approx 0$.  However, $\pi$ admits $d^2(2^{-5/3}+o(1))$ inversions, whence $K_\tau(I,\pi) \approx 1-2^{-2/3} \approx 0.37 < 1/2$.

Figure \ref{fig:ortho_distribution} shows the distribution of pairs of unique samples taken from random vectors, versus unique samples from an orthogonal basis, at $d=10$. Samples corresponding to orthogonal vectors are tightly distributed around $K_{\tau}(\sigma,\sigma')=0$, and pairs corresponding to a vector and its antipodes are clustered at $K_{\tau}(\sigma,\sigma')=-1$. Figure \ref{fig:ortho_bounds} plots the bounds from Theorem \ref{thm:k_tau} relating the dot product of vectors on $\mathbb{S}^{d-2}$ to the Kendall tau kernel at $d=15$.
\begin{figure}
\centering
\begin{minipage}{.5\textwidth}
  \centering
  \includegraphics[width=.99\linewidth]{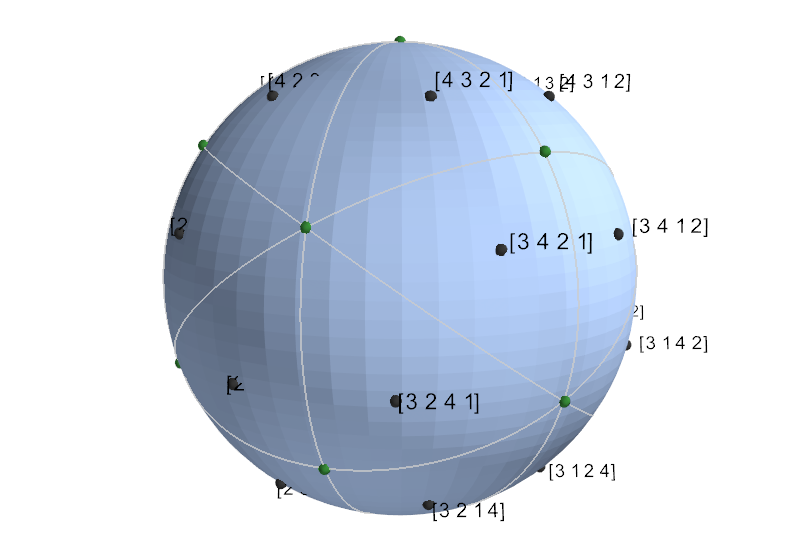}
  \captionsetup{width=0.9\linewidth}
  \captionof{figure}{Voronoi cells for permutations on the n-sphere have equal measure. Uniform samples on the n-sphere mapped to these cells result in uniform samples of permutations.}
  \label{fig:cayley_voronoi}
\end{minipage}%
\begin{minipage}{.5\textwidth}
  \centering
  \includegraphics[width=.99\linewidth]{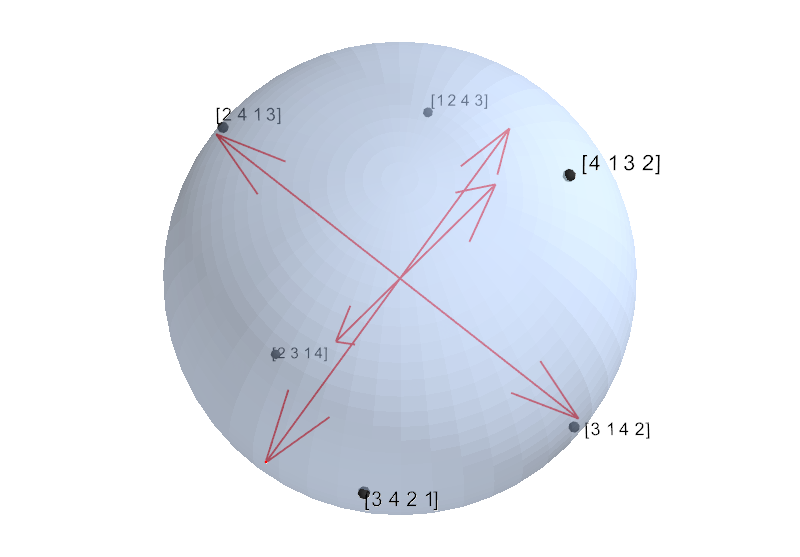}
  \captionsetup{width=0.9\linewidth}
  \captionof{figure}{Orthogonal spherical codes: The permutations associated with each orthogonal vector on the n-sphere must be separated by a certain graph distance.}
  \label{fig:ortho}
\end{minipage}
\\
\vspace{10mm}
\begin{minipage}{.5\textwidth}
  \centering
  \includegraphics[width=.99\linewidth]{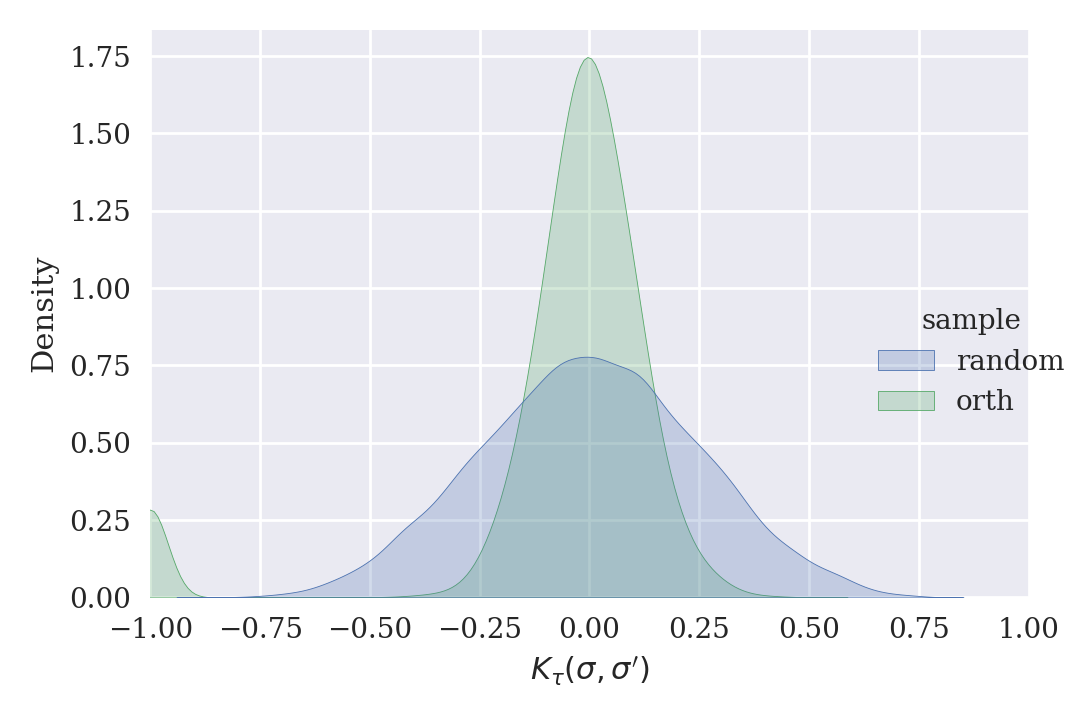}
  \captionsetup{width=0.9\linewidth}
  \captionof{figure}{Kernel density estimate of the $K_{\tau}$ similarity of pairs of unique permutations drawn from orthogonal vectors or random vectors on the n-sphere. The leftmost peak for orth corresponds to the antipode samples. Orthogonal samples do not generate highly similar permutations.}
  \label{fig:ortho_distribution}
\end{minipage}%
\begin{minipage}{.5\textwidth}
  \centering
  \includegraphics[width=.99\linewidth]{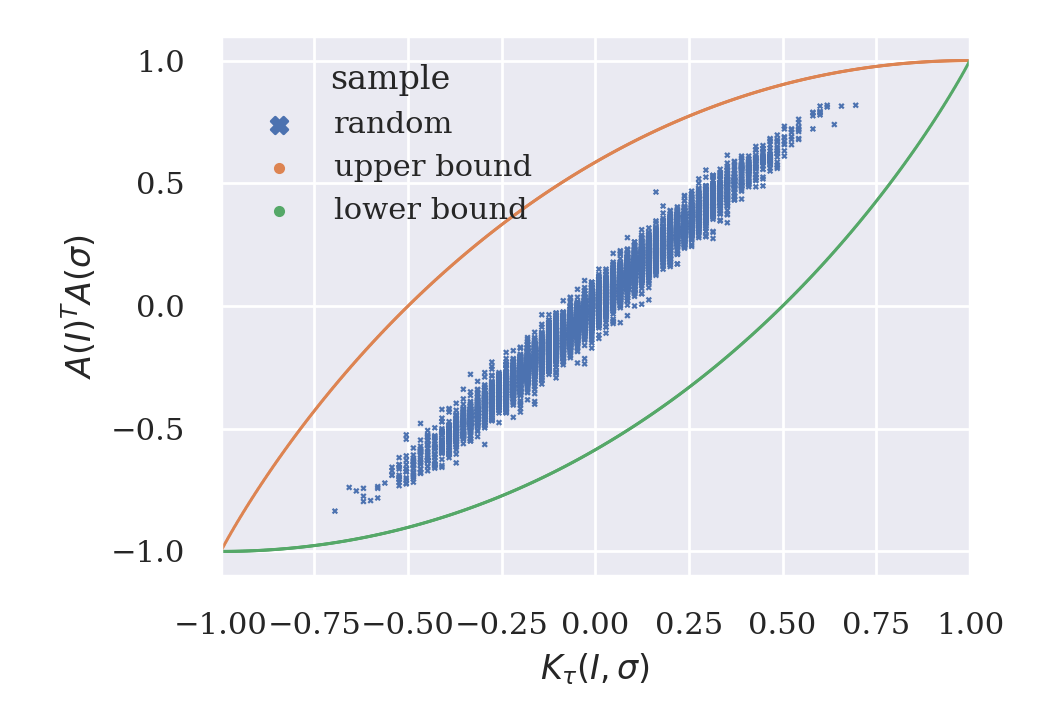}
  \captionsetup{width=0.9\linewidth}
  \captionof{figure}{The dot product of two points on $\mathbb{S}^{d-2}$ is closely related to the graph distance $K_{\tau}(I,\sigma)$ between the associated permutations.}
  \label{fig:ortho_bounds}
\end{minipage}
\end{figure}

\subsection{Sobol Sequences on the Sphere}
\label{sec:sobol}
We now describe another approach to sampling permutations via $\mathbb{S}^{d-2}$, based on standard quasi-Monte Carlo techniques. Low discrepancy point sets on the unit cube $[0,1)^{d-2}$ may be projected to $\mathbb{S}^{d-2}$ via area preserving transformations. Such projections are discussed in depth in \cite{brauchart2012quasi,points}, where they are observed to have good properties for numerical integration. Below we define transformations in terms of the inverse cumulative distribution of the generalised polar coordinate system and use transformed high-dimensional Sobol sequences to obtain well-distributed permutations.

In the generalised polar coordinate system of \cite{blumenson1960derivation}, a point on $\mathbb{S}^{d-2}$ is defined by radius $r$ (here $r=1$) and $d-2$ angular coordinates $(r, \varphi_1,\varphi_2,\cdots,\varphi_{d-2})$, where $(\varphi_1,\cdots,\varphi_{d-3})$ range from $[0,\pi]$ and $\varphi_{d-2}$ ranges from $[0,2\pi]$.

The polar coordinates on the sphere are independent and have probability density functions 
$$f(\varphi_{d-2})=\frac{1}{2\pi},$$
and for $1\leq j < d-2$:
$$f(\varphi_j)=\frac{1}{B(\frac{d-j-1}{2},\frac{1}{2})}\sin^{(d-j-2)}(\varphi_j),$$
where $B$ is the beta function. The above density function is obtained by normalising the formula for the surface area element of a hypersphere to integrate to 1 \citep{blumenson1960derivation}. The cumulative distribution function for the polar coordinates is then
$$F_j(\varphi_j) = \int_0^{\varphi_j}f_j(u)du.$$
As per standard inverse transform sampling, we draw samples $x \in [0,1)^{d-2}$ uniformly from the unit cube and project them to polar coordinates uniformly distributed on the sphere as $\varphi_j=F_j^{-1}(x_j)$. $F_j^{-1}$ can be obtained quickly via a root finding algorithm, such as the bracketing method described in \cite{numerical_recipes}.

The points $x \in [0,1]^{d-2}$ are generated using the Sobol sequence \citep{sobol1967distribution}, also referred to as $(t,s)$-sequences in base 2. Analogously to our discrepancy for functions of permutations in Equation \ref{eq:discrepancy}, derived with the Mallows kernel, Sobol points can be shown to minimise a discrepancy for the kernel
$$K(x,x')=\prod_{i=1}^d\min(1-x_j,1-x'_j),$$
with $x,x' \in [0,1]^d$, where the discrepancy decreases at the rate $O(\frac{(\log n)^d}{n})$ (see \cite{dick2010digital}). Sobol points are relatively inexpensive to generate compared with other algorithms discussed in this paper, although explicit convergence rates for discrepancy on the cube do not translate to $\mathbb{S}^{d-2}$ or $\mathfrak{S}_d$.

Combining Sobol points with inverse transform sampling yields uniformly distributed points on $\mathbb{S}^{d-2}$. To map these points to permutations, we project from $[0,1)^{d-1}$ to the hyperplane in $\mathbb{R}^{d}$ containing the permutohedron (such that points are orthogonal to the normal in Eq. \ref{eq:normal}) using the matrix $\hat{U}$, and apply argsort to obtain permutations.

Combining all of the above, Algorithm \ref{alg:sobol} describes the process of generating permutation samples from a Sobol sequence. Figure \ref{fig:sobol_sphere} shows 200 Sobol points distributed on the surface of the sphere. As our Sobol sequence and inverse CDF sampling generate points uniformly distributed on the n-sphere, Theorem \ref{th:uniform} applies, and Algorithm \ref{alg:sobol} samples permutations from a uniform distribution in an unbiased way. Figure \ref{fig:sobol_histogram} shows the distribution of 1000 permutations sampled with $d=4$, which is clearly uniform.

\begin{algorithm}[t]
\small
\SetKwFunction{FPolarToCartesian}{PolarToCartesian}
\DontPrintSemicolon
 \SetKwProg{Fn}{Function}{:}{}
  \Fn{\FPolarToCartesian{$(r, \varphi_1,\varphi_2,\cdots,\varphi_{d-2})$}}{ 
         \KwOutput{$\vec{x}$}
 
 \For{$i\leftarrow 1$ \KwTo $d-1$}{
    $x_i \leftarrow r$\;
    \For{$j\leftarrow 1$ \KwTo $i-1$}{
        $x_i \leftarrow x_i \sin{\varphi_j}$\;
    }
    \If{$i<d-2$}{
        $x_i \leftarrow x_i \cos{\varphi_i}$\;
    }
   }
 \KwRet $x$\;
  }
  \;
  
\SetKwFunction{FSobolPermutations}{SobolPermutations}
\DontPrintSemicolon
 \SetKwProg{Fn}{Function}{:}{}
  \Fn{\FSobolPermutations{$n,d$}}{ 
         \KwOutput{$\Pi$}
 
 \For{$i\leftarrow 1$ \KwTo $n$}{
    $x \leftarrow \text{SobolPoint}(i,n,d)$\tcp*{$x$ has $d-2$ elements}
    $\varphi \leftarrow \vec{0}$\;
    \For{$j\leftarrow 1$ \KwTo $d-2$}{
          $\varphi_j \leftarrow F_j^{-1}(x_j)$\tcp*{Inverse CDF transformation}
    }
    $y \leftarrow $\FPolarToCartesian{$1,\varphi$}\tcp*{$y$ has $d-1$ elements}
    $z \leftarrow \hat{U}^Ty$\tcp*{$z$ has $d$ elements}
    $\Pi_i \leftarrow \text{argsort}(z)$
   }
 \KwRet $\Pi$\;
  }
  \;
 \caption{Sobol Permutations}
\label{alg:sobol}
\end{algorithm}

\begin{figure}
\centering
\begin{minipage}{.48\textwidth}
  \centering
  \includegraphics[width=1.0\textwidth]{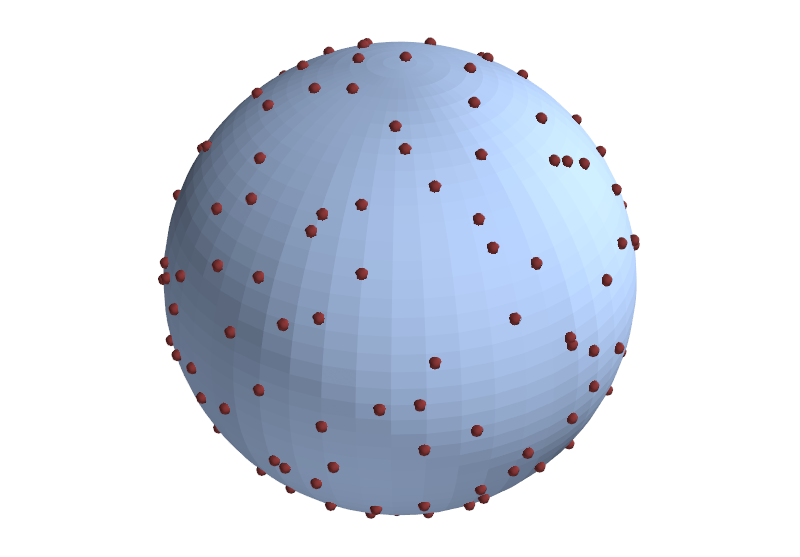}
  \captionof{figure}{Sobol sphere}
  \label{fig:sobol_sphere}
\end{minipage}%
\begin{minipage}{.48\textwidth}
  \centering
  \includegraphics[width=1.0\linewidth]{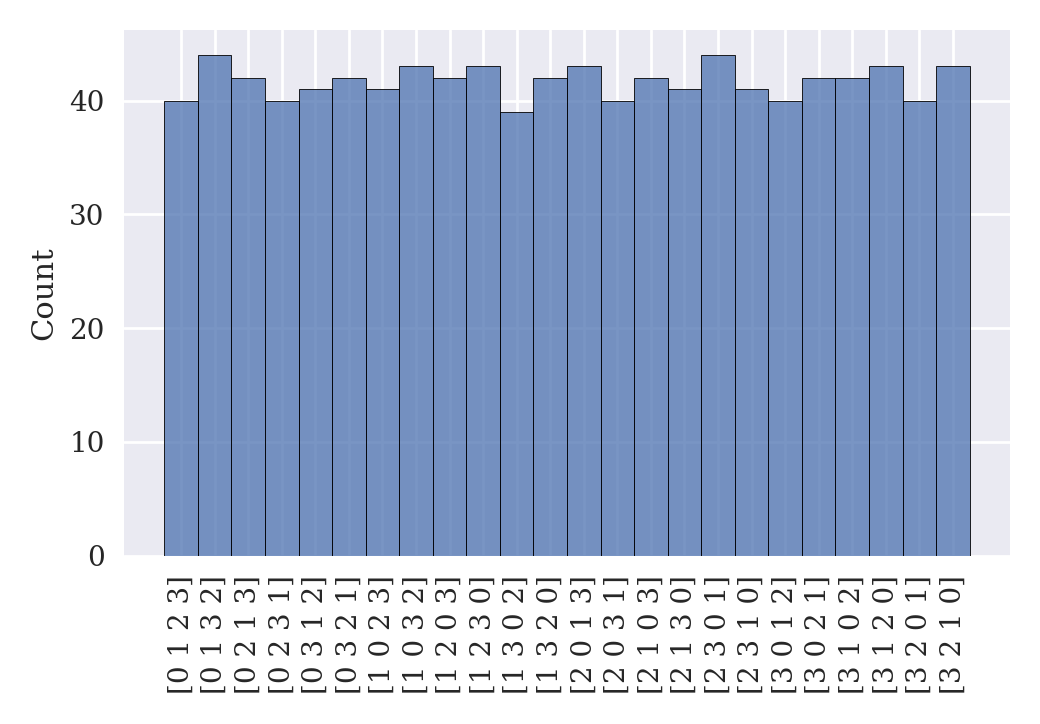}
  \captionof{figure}{Sobol permutations}
  \label{fig:sobol_histogram}
\end{minipage}
\end{figure}

In Section \ref{sec:kernel_methods}, we proposed sampling methods for the Shapley value approximation problem based on directly optimising discrepancy for the symmetric group. While these methods have some more explicit guarantees in terms of quadrature error they also suffer from expensive optimisation processes. The methods discussed in this section, based on the hypersphere, have the advantage of being linear-time in the number of samples $n$. Table \ref{tab:complexity} summarises the complexity of the proposed algorithms. In the next section, we evaluate these algorithms in terms of quadrature error and runtime.

\begin{table}[t]
\small
\caption{Complexity in $n$}
\label{tab:complexity}
\centering
\begin{sc}
\begin{tabular}{ll}
\toprule
    Algorithm &   Complexity \\
\midrule
      Herding &  $O(n^2)$ \\
          SBQ &  $O(n^3)$ \\
   Orthogonal &  $O(n)$ \\
    Sobol &  $O(n)$ \\

\bottomrule
\end{tabular}
\end{sc}
\end{table}

\section{Evaluation}
\label{sec:evaluation}
We evaluate the performance of permutation sampling strategies on tabular data, image data, and in terms of data-independent discrepancy scores. Table \ref{tab:datasets} describes a set of six tabular datasets. These datasets are chosen to provide a mixture of classification and regression problems, with varying dimensionality, and a mixture of problem domains. For this analysis, we avoid high-dimensional problems, such as natural language processing, due to the difficulty of solving for and interpreting Shapley values in these cases. For the image evaluation we use samples from the ImageNet 2012 dataset of \cite{russakovsky2015imagenet}, grouping pixels into tiles to reduce the dimensionality of the problem to 256. 

\begin{table}[t]
\small
\caption{Tabular datasets}
\label{tab:datasets}
\centering
\begin{sc}
\begin{tabular}{lrrrrr}
\toprule
               name &  rows &   cols &  task  & ref \\
\midrule
         adult &     48842 &       107 & class&         \cite{adult} \\
         breast\_cancer &    699 &     30 &  class              &\cite{mangasarian1990cancer} \\
         bank & 45211 & 16 & class & \cite{moro2014data}\\
   cal\_housing &     20640 &       8 &     regr & \cite{pace1997sparse} \\
       make\_regression &     1000 &      10 &  regr & \cite{scikit-learn} \\
       year &     515345 &      90 &  regr & \cite{Bertin-Mahieux2011} \\
\bottomrule
\end{tabular}
\end{sc}
\end{table}

Experiments make use of a parameterised Mallows kernel for the kernel herding and SBQ algorithms, as well as the discrepancy scores reported in Section \ref{sec:discrepancy_eval}. To limit the number of experiments, we fix the $\lambda$ parameter for the Mallows kernel at $\lambda=4$ and use 25 samples to approximate the argmax for the kernel herding and SBQ algorithms. These parameters are chosen to give reasonable performance in many different scenarios. Experiments showing the impact of these parameters and justification of this choice can be found in Appendix \ref{app:lambda}. 

To examine different types of machine learning models, we include experiments for gradient boosted decision trees (GBDT), a multilayer perceptron with a single hidden layer, and a deep convolutional neural network. All of these models are capable of representing non-linear relationships between features. We avoid simple models containing only linear relationships because their Shapley value solutions are trivial and can be obtained exactly in a single permutation sample. For the GBDT models, we are able to compute exact Shapley values as a reference, and for the other algorithms we use unbiased estimates of the Shapley values by averaging over many trials. More details are given in the respective subsections.

The sampling algorithms under investigation are listed in Table \ref{tab:sampling_algorithms}. The Monte-Carlo, antithetic Monte-Carlo, stratified sampling, Owen sampling, and Owen-halved methods have been proposed in existing literature for the Shapley value approximation problem. The kernel herding, SBQ, Orthogonal and Sobol methods are the newly proposed methods and form the main line of enquiry in this work.

\begin{table}
\small
\caption{Permutation sampling algorithms under evaluation}
\label{tab:sampling_algorithms}
\centering
\begin{tabular}{lll}
\toprule
Sampling algorithm             & Already proposed for Shapley values & Description and references \\
\midrule
Monte-Carlo                    & Yes                                      & Section \ref{sec:monte_carlo}                   \\
Monte-Carlo Antithetic         & Yes                                      & Section \ref{sec:antithetic}                   \\
Owen                           & Yes                                      & Section \ref{sec:owen}                   \\
Owen-Halved                    & Yes                                      & Section \ref{sec:owen}                   \\
Stratified                     & Yes                                      & Section \ref{sec:stratified}                    \\
Kernel herding                 & No                                       & Section \ref{sec:kernel_herding}                    \\
SBQ& No                                       & Section \ref{sec:sbq}                    \\
Orthogonal Spherical Codes     & No                                       & Section \ref{sec:orthogonal}                    \\
Sobol Sequences                & No                                       & Section \ref{sec:sobol}     \\
\bottomrule
\end{tabular}
\end{table}

The experimental evaluation proceeds as follows:
\begin{itemize}
  \item Section \ref{sec:existing_tabular} first evaluates existing algorithms on tabular data using GBDT models, reporting exact error scores. MC-Antithetic emerges as the clear winner, so we use this as a baseline in subsequent experiments against newly proposed algorithms.
  \item Section \ref{sec:proposed_tabular} examines Shapley values for newly proposed sampling algorithms as well as MC-Antithetic using GBDT models trained on tabular data, and reports exact error scores.
  \item Section \ref{sec:proposed_mlp} examines Shapley values for newly proposed sampling algorithms as well as MC-Antithetic using multilayer perceptron models trained on tabular data, and reports error estimates.
  \item Section \ref{sec:proposed_discrepancy} reports data-independent discrepancy and execution time for newly proposed sampling algorithms and MC-Antithetic.
  \item Section \ref{sec:proposed_images} evaluates Shapley values for newly proposed sampling algorithms and MC-Antithetic using a deep convolutional neural network trained on image data, reporting error estimates.
\end{itemize}

\subsection{Existing algorithms - Tabular data and GBDT models}
\label{sec:existing_tabular}
We train GBDT models on the tabular datasets listed in Table \ref{tab:datasets} using the XGBoost library of \cite{ChenG16}. Models are trained using the entire dataset (no test/train split) using the default parameters of the XGBoost library (100 boosting iterations, maximum depth 6, learning rate 0.3, mean squared error objective for regression, and binary logistic objective for classification). The exact Shapley values are computed for reference using the TreeShap Algorithm (Algorithm 3) of \cite{lundberg2018consistent}, a polynomial-time algorithm specific to decision tree models.

Recall from Section \ref{sec:shapley_values}, to define Shapley values for a machine learning model, features not present in the active subset must be marginalised out. To compare our results to the exact Shapley values, we use the same method as \cite{lundberg2018consistent}. A small fixed set of `background instances' is chosen for each dataset. These form a distribution with which to marginalise out the effect of features. To calculate Shapley values for a given row (a `foreground' instance), features not part of the active subset are replaced with values from a background instance. The characteristic function evaluation $v(S)$ is then the mean of a set of model predictions, where each time, the foreground instance has features not in subset $S$ replaced by a different background instance. For details, see \cite{lundberg2018consistent} or the SHAP software package. For classification models, we examine the log-odds output, as the polynomial-time exact Shapley Value algorithm only works when model outputs are additive, and because additive model outputs are consistent with the efficiency property of Shapley values. 

For each dataset/algorithm combination, Shapley values are evaluated for all features of 10 randomly chosen instances, using a fixed background dataset of 100 instances to marginalise out features. Shapley values are expensive to compute, and are typically evaluated for a small number of test instances, not the entire dataset. The choice of 10 instances is a balance between computation time and representing the variation of Shapley values across the dataset. The approximate Shapley values for the 10 instances form a $10\times d$ matrix, from which we calculate the elementwise mean squared error against the reference Shapley values. For $10\times d$ matrix $Z$, the MSE for our approximation $\hat{Z}$ is defined as
\begin{equation}
\label{eq:mse}
 \text{MSE}(Z,\hat{Z})=\frac{1}{10d}\sum^{10}_i \sum^d_j (Z_{i,j}-\hat{Z}_{i,j})^2. 
\end{equation}
As the sampling algorithms are all randomised, we repeat the experiment 25 times (on the same foreground and background instances) to generate confidence intervals. 

The results are shown in Figure \ref{fig:incumbent_eval}. Algorithms are evaluated according to number of evaluations of $v(S \cup {i}) - v(S)$, written as `marginal\_evals' on the x-axis of figures. If the algorithm samples permutations, the number of marginal evaluations is proportional to $nd$, where $n$ is the number of permutations sampled. The stratified sampling method is missing for the adult and year datasets because it requires at least $2d^2$ samples, which becomes intractable for the higher-dimensional datasets. The shaded areas show a 95\% confidence interval for the mean squared error. Of the existing algorithms, MC-antithetic is the most effective in all experiments. For this reason, in the next sections, we use MC-Antithetic as the baseline when evaluating the kernel herding, SBQ, orthogonal and Sobol methods.

\begin{figure*}[ht]
        \centering
        \begin{subfigure}[b]{0.495\textwidth}
            \centering
            \includegraphics[width=\textwidth]{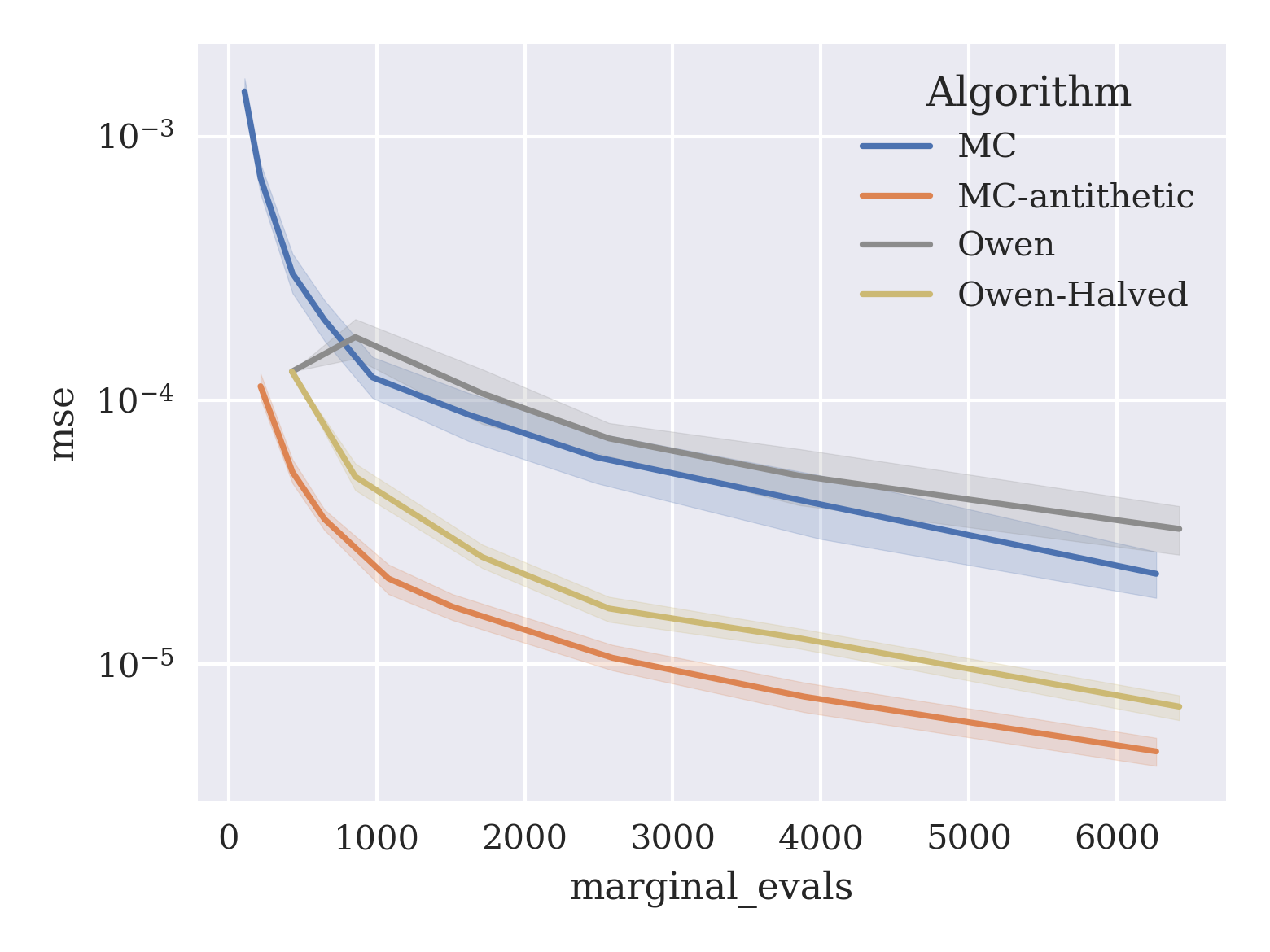}
            \caption[]%
            {{\textit{adult}}}    
        \end{subfigure}
        \hfill
        \begin{subfigure}[b]{0.495\textwidth}  
            \centering 
            \includegraphics[width=\textwidth]{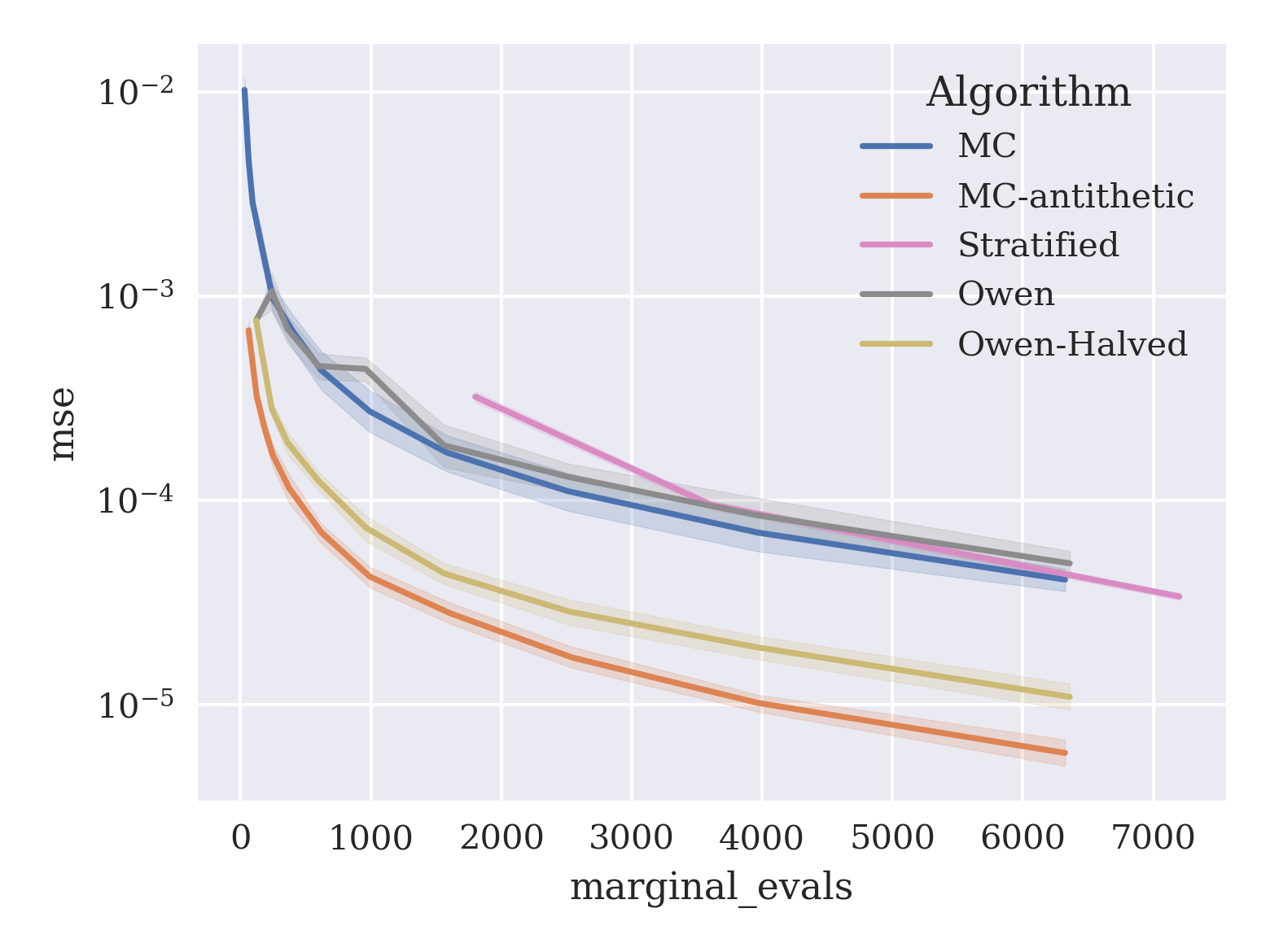}
            \caption[]%
            {{\textit{breast\_cancer}}}    
        \end{subfigure}
        \begin{subfigure}[b]{0.495\textwidth}   
            \centering 
            \includegraphics[width=\textwidth]{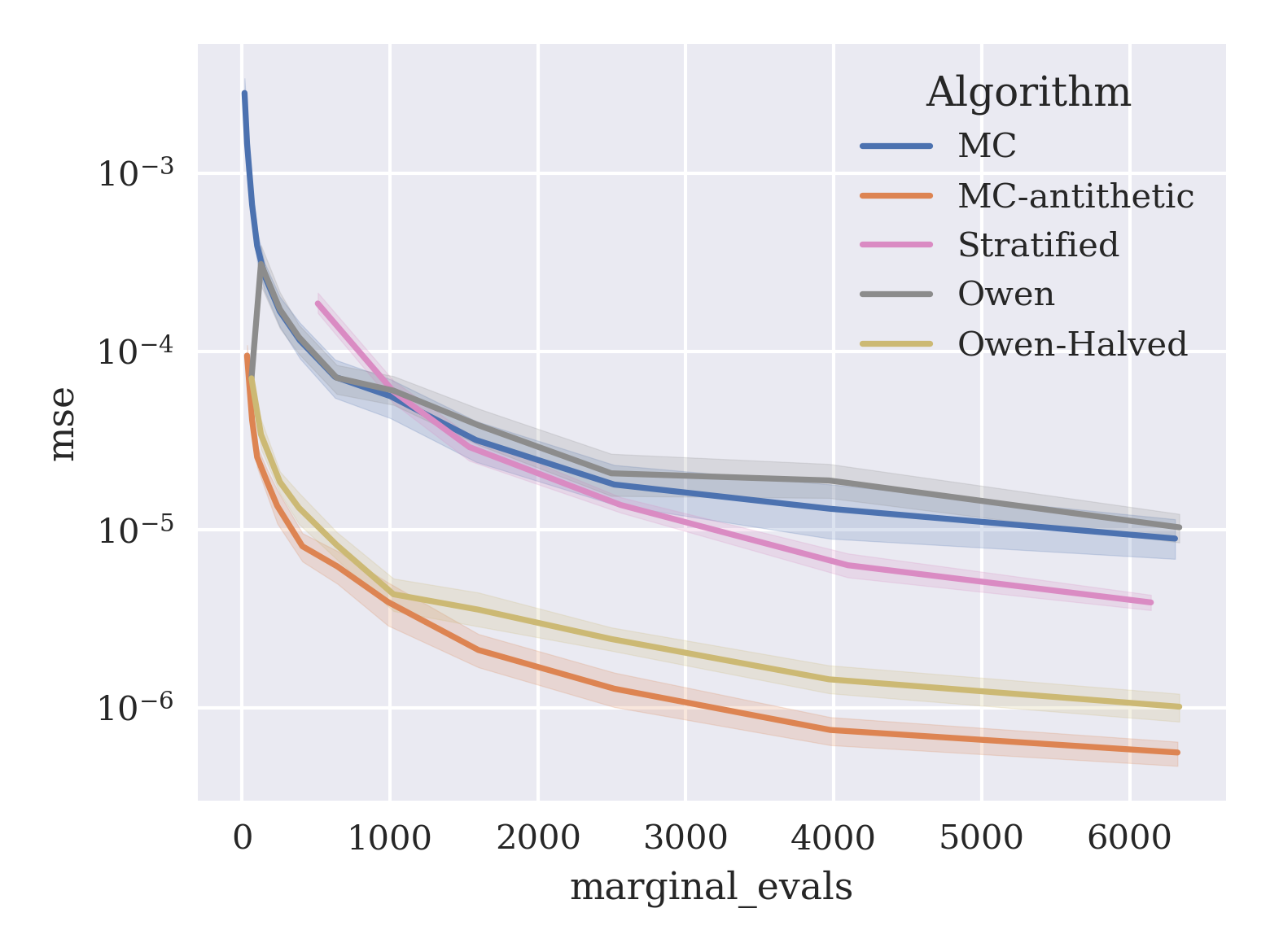}
            \caption[]%
            {{\textit{bank}}}    
        \end{subfigure}
        \hfill
        \begin{subfigure}[b]{0.495\textwidth}   
            \centering 
            \includegraphics[width=\textwidth]{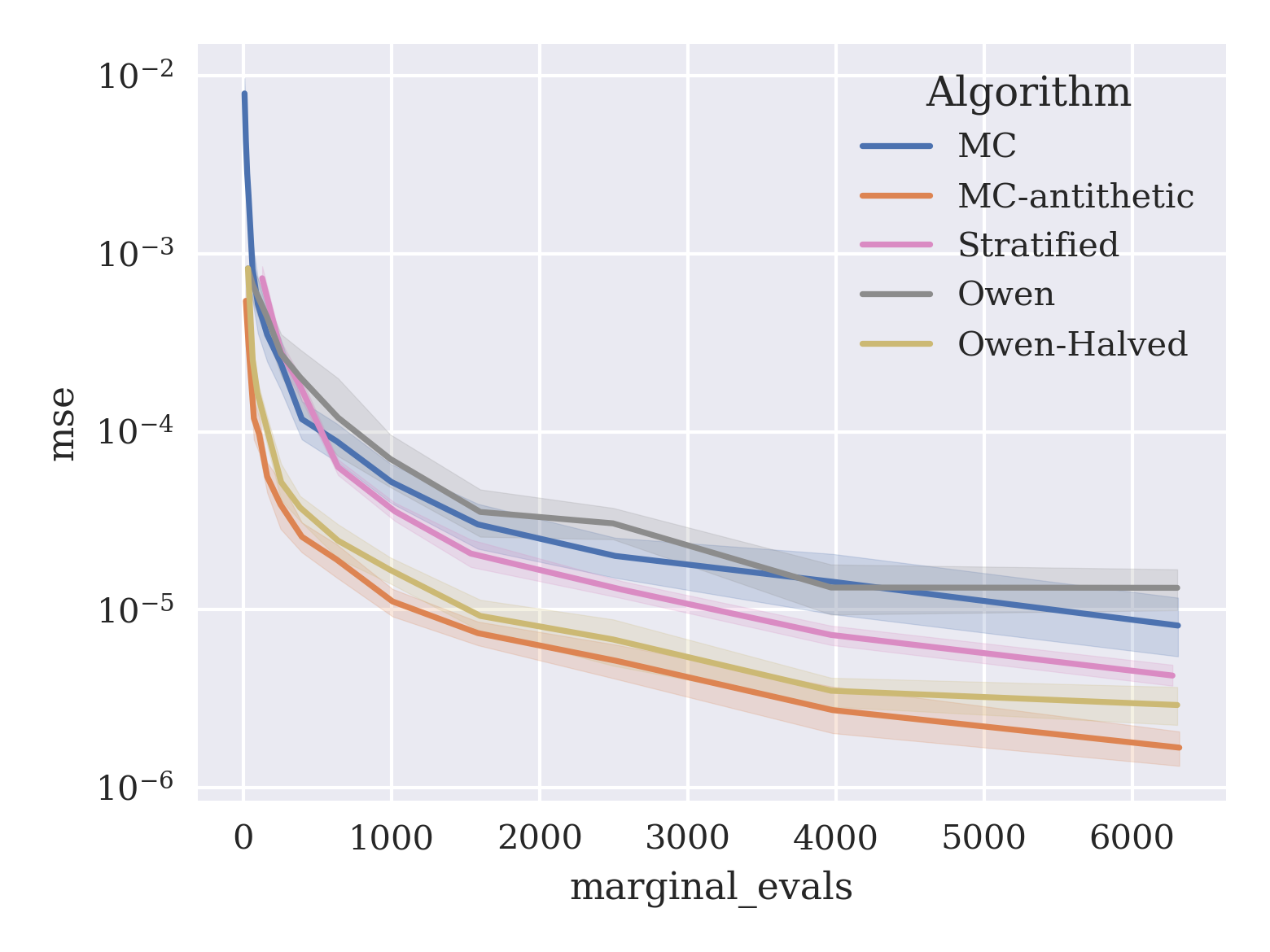}
            \caption[]%
            {{\textit{cal\_housing}}}    
        \end{subfigure}

         \begin{subfigure}[b]{0.495\textwidth}   
            \centering 
            \includegraphics[width=\textwidth]{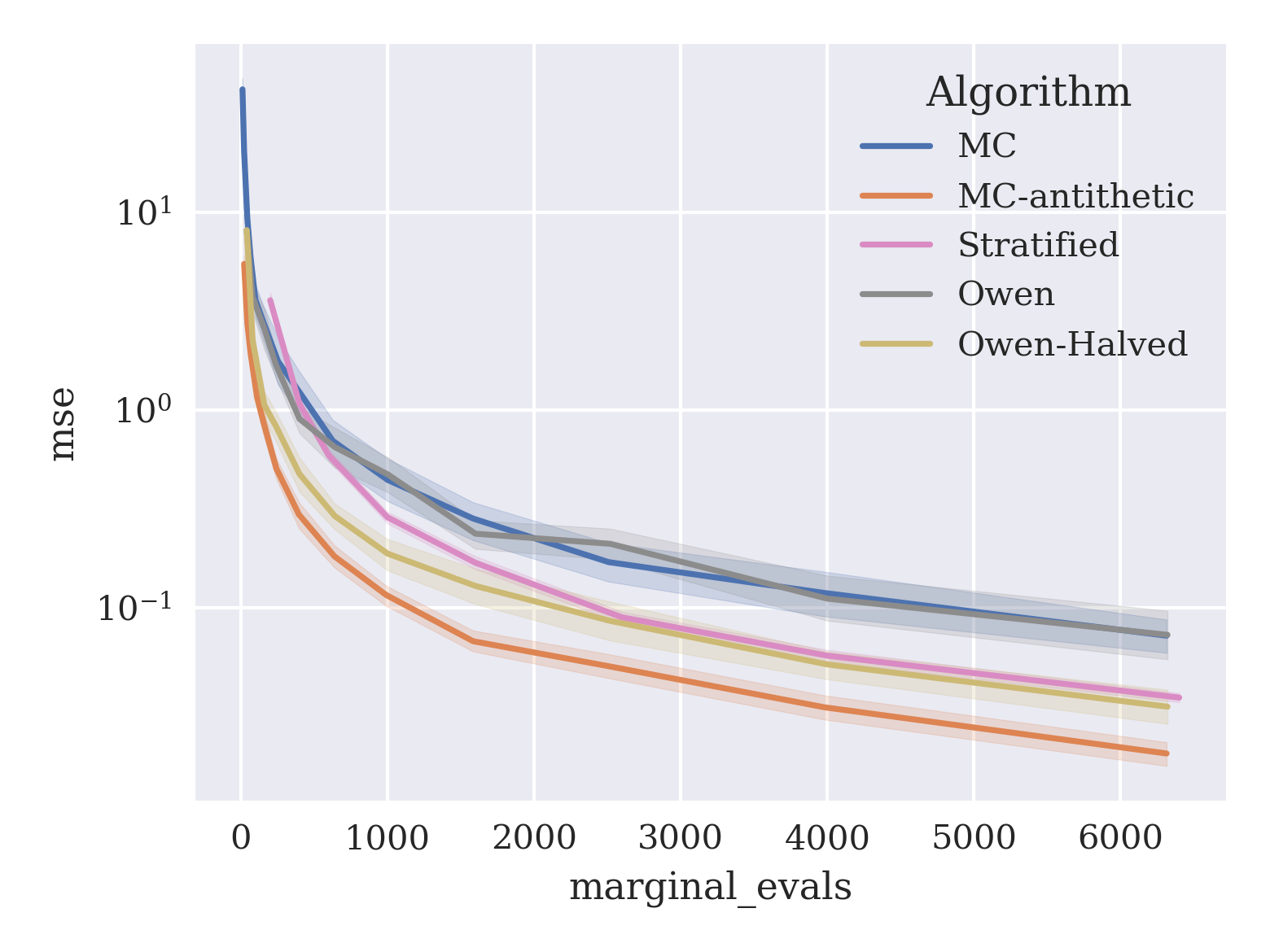}
            \caption[]%
            {{\textit{make\_regression}}}    
        \end{subfigure}
        \hfill
        \begin{subfigure}[b]{0.495\textwidth}   
            \centering 
            \includegraphics[width=\textwidth]{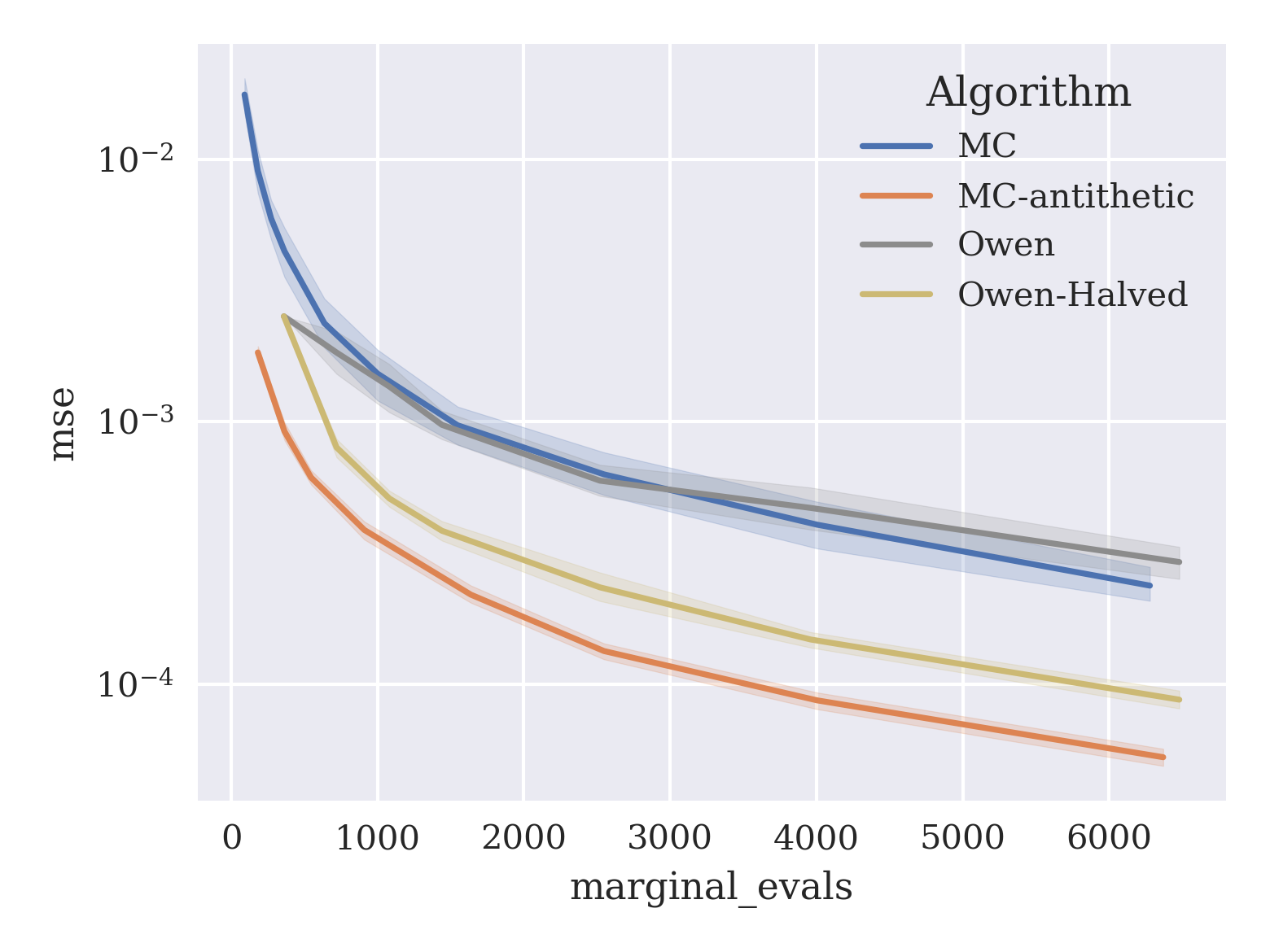}
            \caption[]%
            {{\textit{year}}}    
        \end{subfigure}
        \caption{Existing algorithms - Tabular data, GBDT models} 
        \label{fig:incumbent_eval}
\end{figure*}

\subsection{Proposed algorithms - Tabular data and GBDT models}
\label{sec:proposed_tabular}
Here, we perform experiments using the same methodology in the previous section, examining the mean squared error of the proposed algorithms kernel herding, SBQ, orthogonal and Sobol, against MC-antithetic as the baseline. Figure \ref{fig:new_eval} plots the results. For the lower-dimensional \textit{cal\_housing} and \textit{make\_regression} datasets, we see good performance for the herding and SBQ methods. This good performance does not translate to the higher-dimensional datasets \textit{adult} and \textit{year}, where herding and SBQ are outperformed by the baseline MC-antithetic method. On the problems where herding and SBQ are effective, SBQ outperforms herding in terms of mean squared error, presumably due to its more aggressive optimisation of the discrepancy. The Sobol method is outperformed by the baseline MC-antithetic method in four of six cases. The orthogonal method shows similar performance to MC-antithetic for a small number of samples, but improves over MC-antithetic as the number of samples increases in all six problems. This is because the orthogonal method can be considered an extension of the antithetic sampling scheme --- increasing the number of correlated samples from 2 to $2(d-1)$. The orthogonal method also appears preferable to the Sobol method on this collection of datasets: it loses on two of them (\textit{cal\_housing} and \textit{make\_regression}) but the difference in error is very small on these two datasets.

\begin{figure*}[ht]
        \centering
        \begin{subfigure}[b]{0.495\textwidth}
            \centering
            \includegraphics[width=\textwidth]{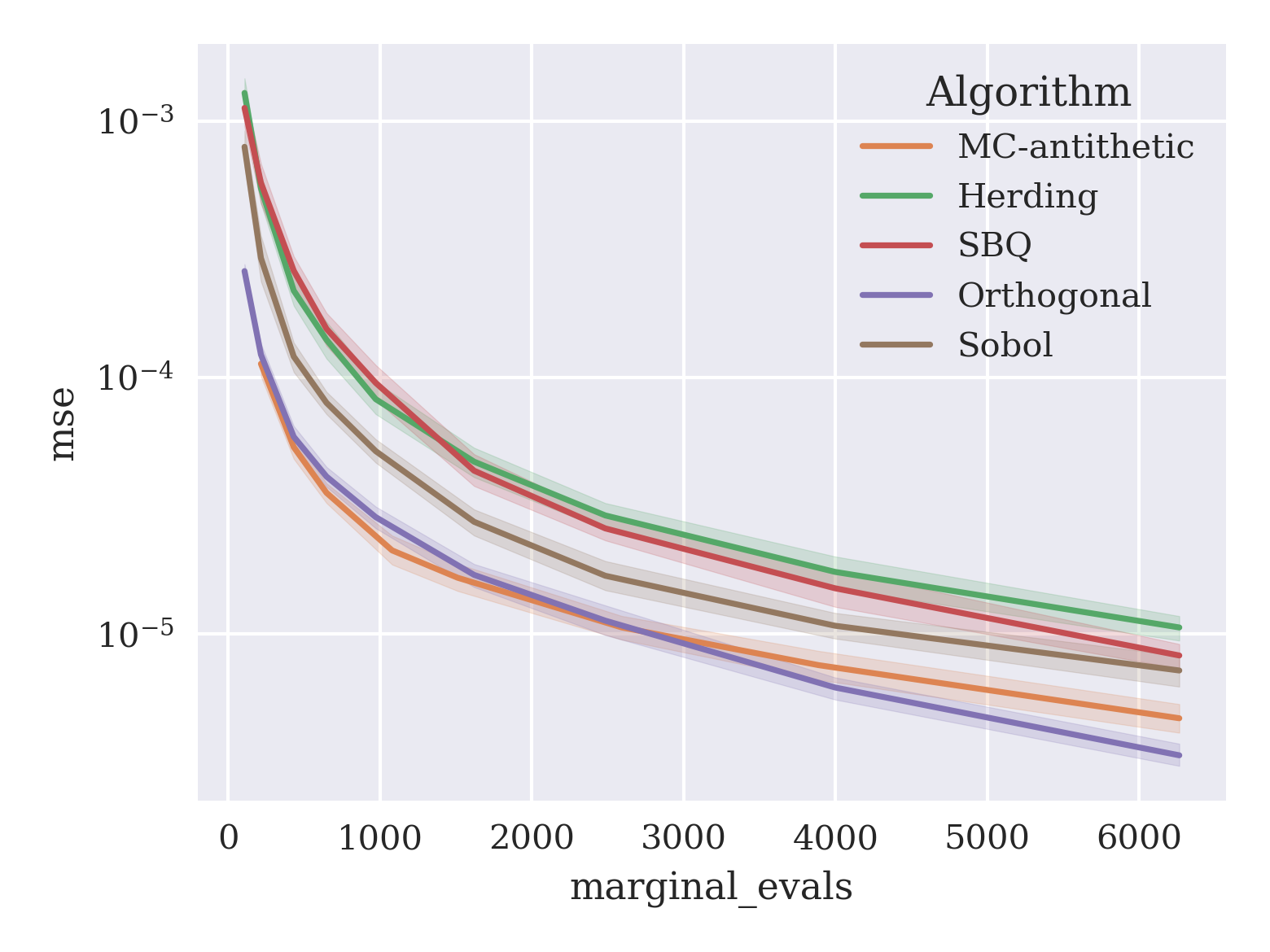}
            \caption[]%
            {{\textit{adult}}}    
        \end{subfigure}
        \hfill
        \begin{subfigure}[b]{0.495\textwidth}  
            \centering 
            \includegraphics[width=\textwidth]{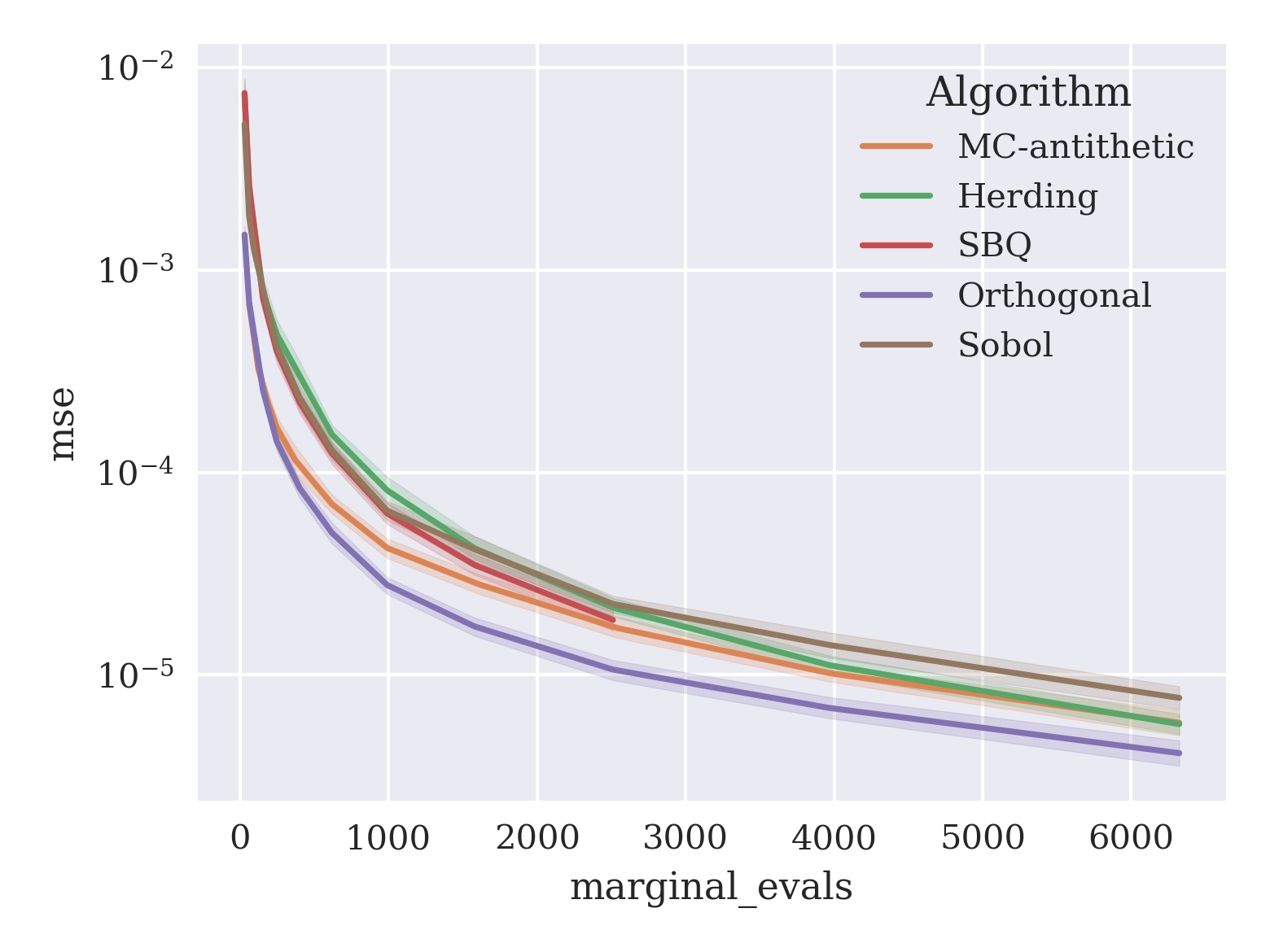}
            \caption[]%
            {{\textit{breast\_cancer}}}    
        \end{subfigure}
        \begin{subfigure}[b]{0.495\textwidth}   
            \centering 
            \includegraphics[width=\textwidth]{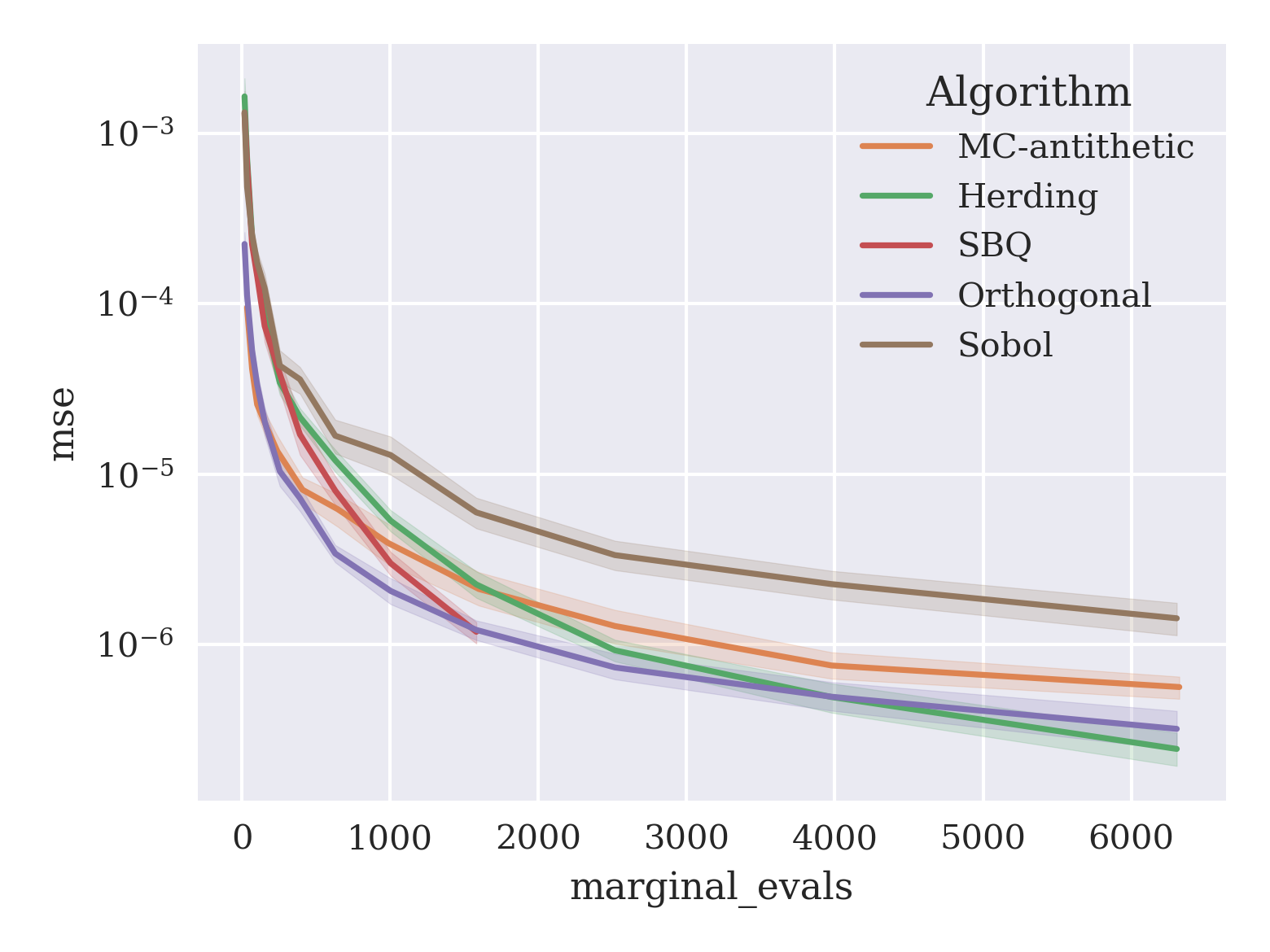}
            \caption[]%
            {{\textit{bank}}}    
        \end{subfigure}
        \hfill
        \begin{subfigure}[b]{0.495\textwidth}   
            \centering 
            \includegraphics[width=\textwidth]{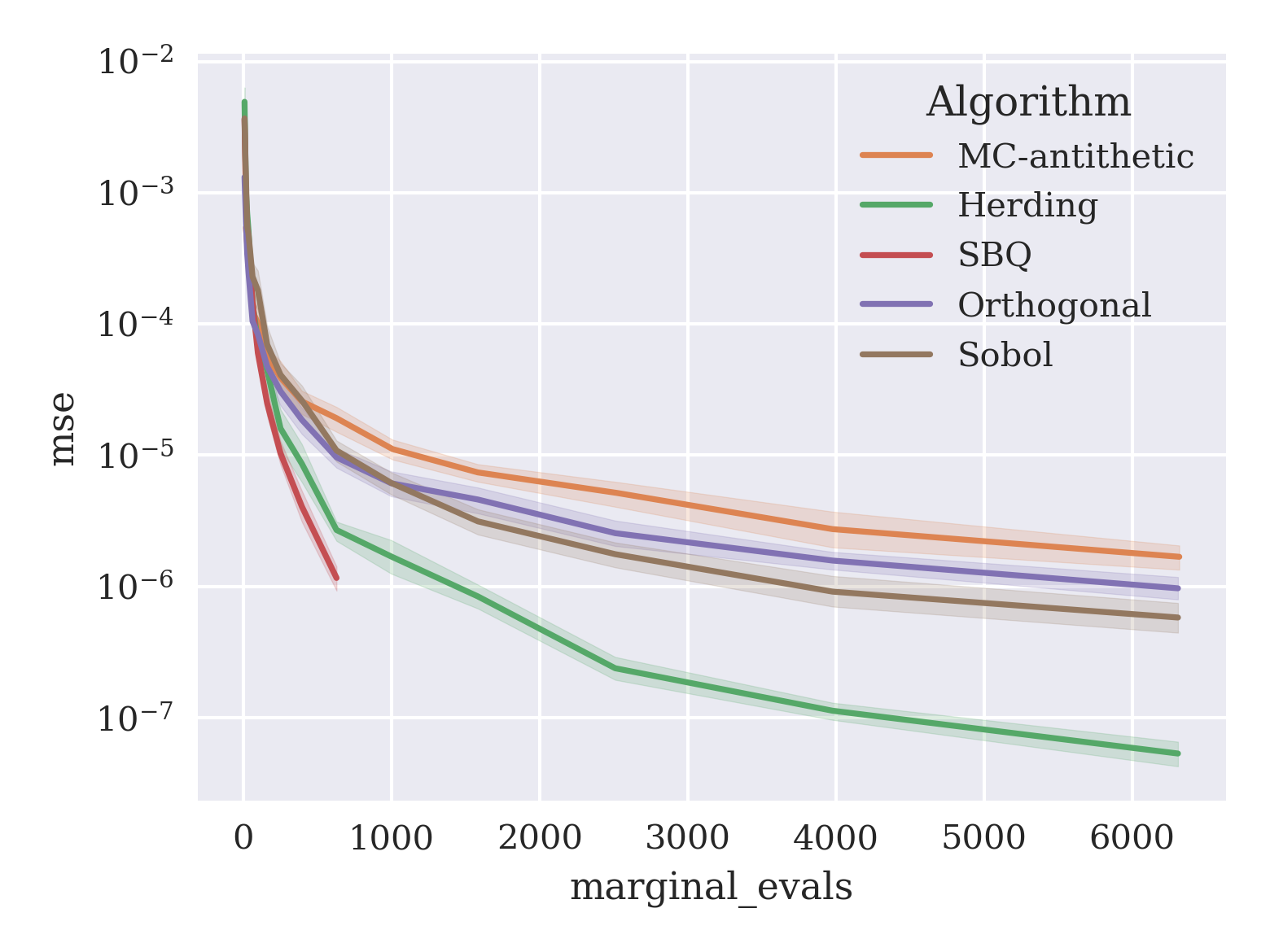}
            \caption[]%
            {{\textit{cal\_housing}}}    
        \end{subfigure}

         \begin{subfigure}[b]{0.495\textwidth}   
            \centering 
            \includegraphics[width=\textwidth]{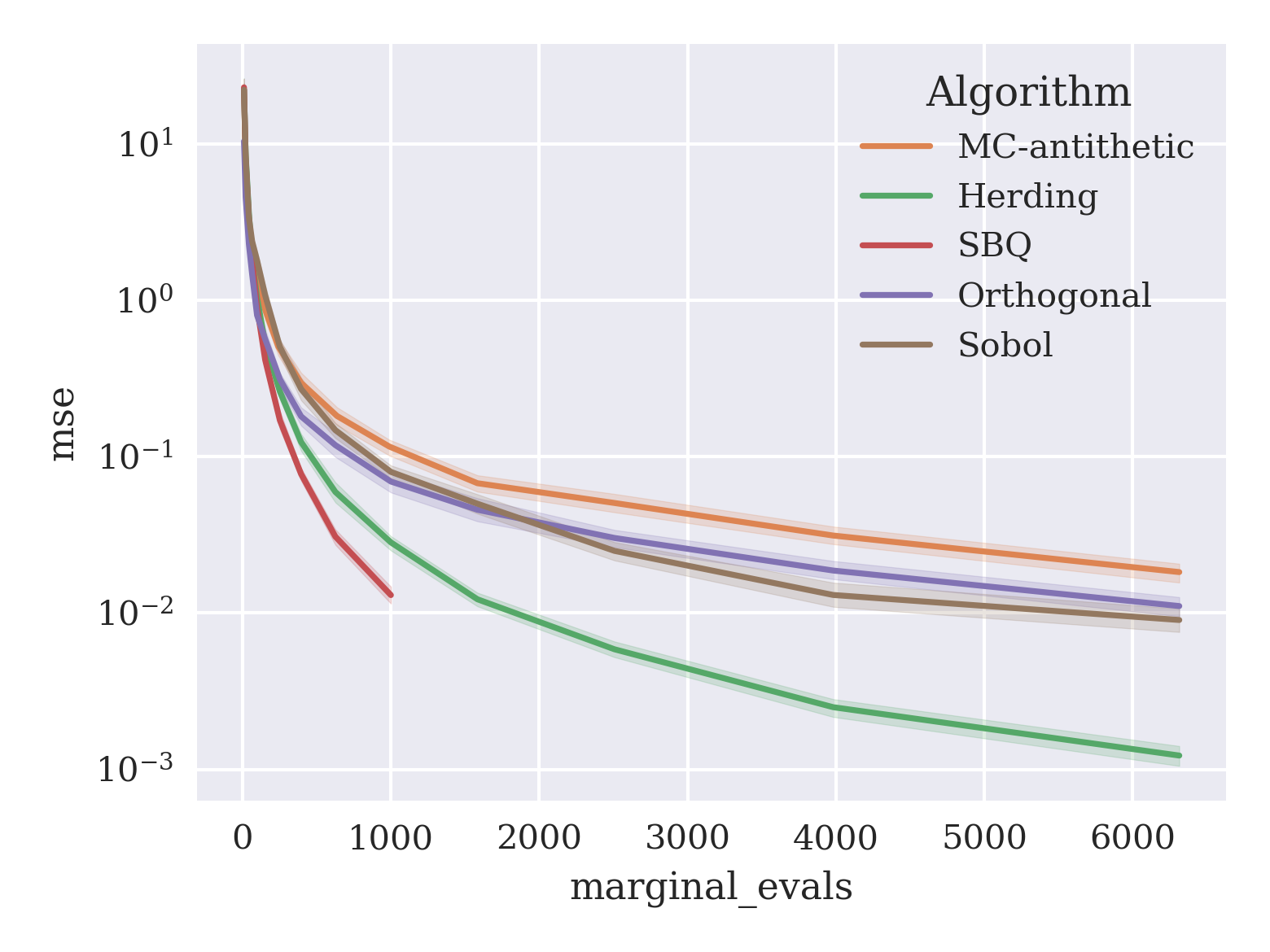}
            \caption[]%
            {{\textit{make\_regression}}}    
        \end{subfigure}
        \hfill
        \begin{subfigure}[b]{0.495\textwidth}   
            \centering 
            \includegraphics[width=\textwidth]{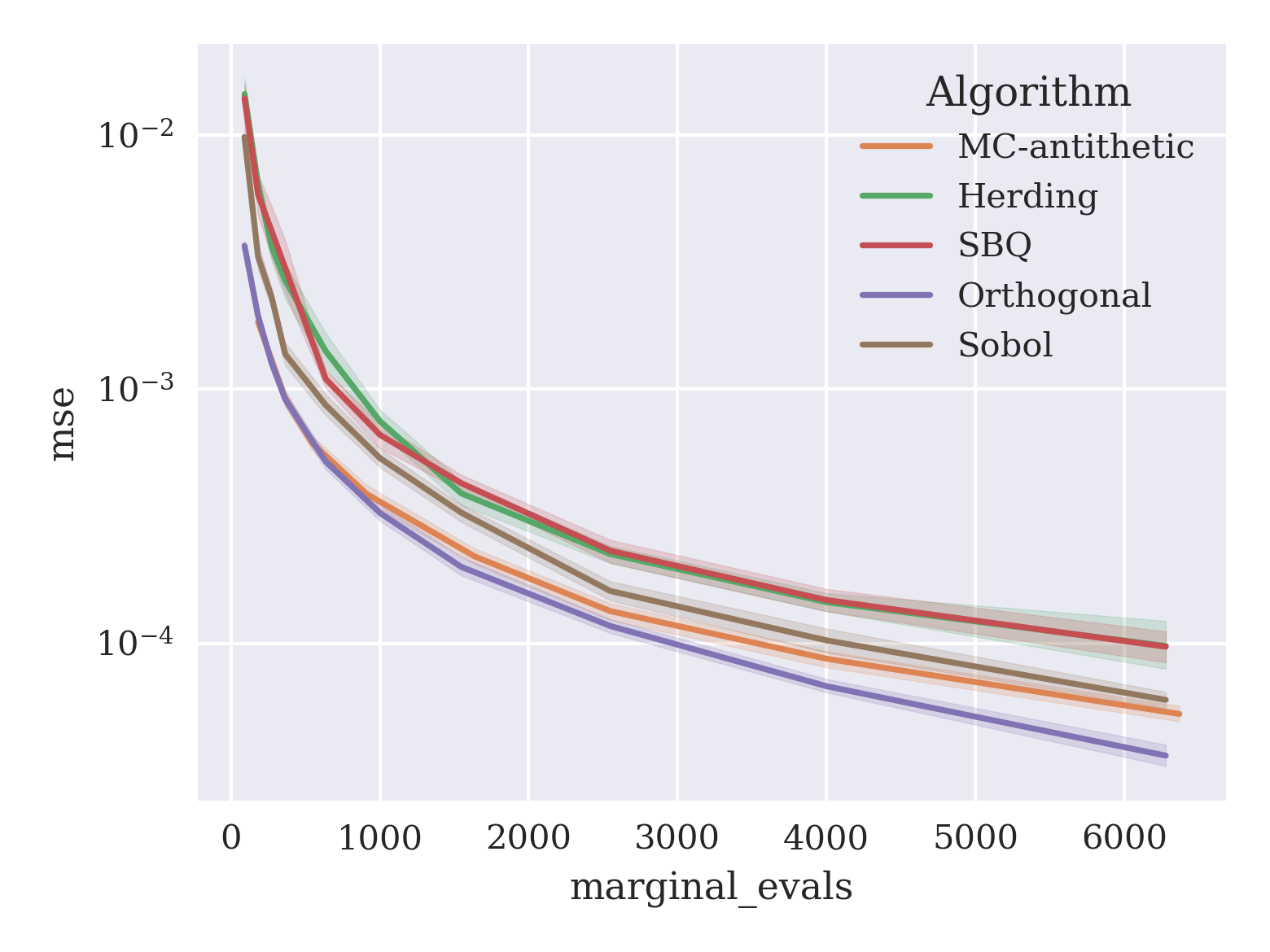}
            \caption[]%
            {{\textit{year}}}    
        \end{subfigure}
        \caption{Proposed algorithms - Tabular data, GBDT models} 
        \label{fig:new_eval}
\end{figure*}

\subsection{Proposed algorithms - Tabular data and MLP models}
\label{sec:proposed_mlp}
Now, we examine error estimates for the proposed algorithms on tabular data using a multi-layer perceptron (MLP) model, presenting the results in Figure \ref{fig:mlp_tabular}. As for the GBDT models, we use the entire dataset for training. The model is trained using the scikit-learn library \citep{scikit-learn} with default parameters: a single hidden layer of 100 neurons, a relu activation function, and trained with the adam optimiser \citep{kingma2014adam} for 200 iterations with an initial learning rate of 0.001. MSE is optimised for regression data, and log-loss for classification data.

For Shapley value computation, features are marginalised out using background features in exactly the same way as for GBDT models. As we do not have access to exact Shapley values, and all sampling algorithms are randomised, we use standard Monte Carlo error estimates based on an unbiased sample estimate. The exact Shapley values $Z$ are substituted with the elementwise mean of the estimates over 25 trials.

For the MLP models, we generally see similar results to the GBDT models: herding and SBQ converging quickly for the lower dimensional \textit{cal\_housing} and \textit{make\_regression} datasets, and the orthogonal method consistently outperforming MC-antithetic across datasets. The orthogonal method also again appears preferable overall to Sobol sampling. For some datasets, such as \textit{adult}, results are more tightly clustered than for the GBDT model. This could indicate fewer higher-order feature interactions in the single layer MLP model, leading to lower variance in the Shapley value characteristic function with respect to the input subsets. In other words, the choice of permutation samples may matter less when strong features interactions are absent.
\begin{figure*}[ht]
        \centering
        \begin{subfigure}[b]{0.495\textwidth}
            \centering
            \includegraphics[width=\textwidth]{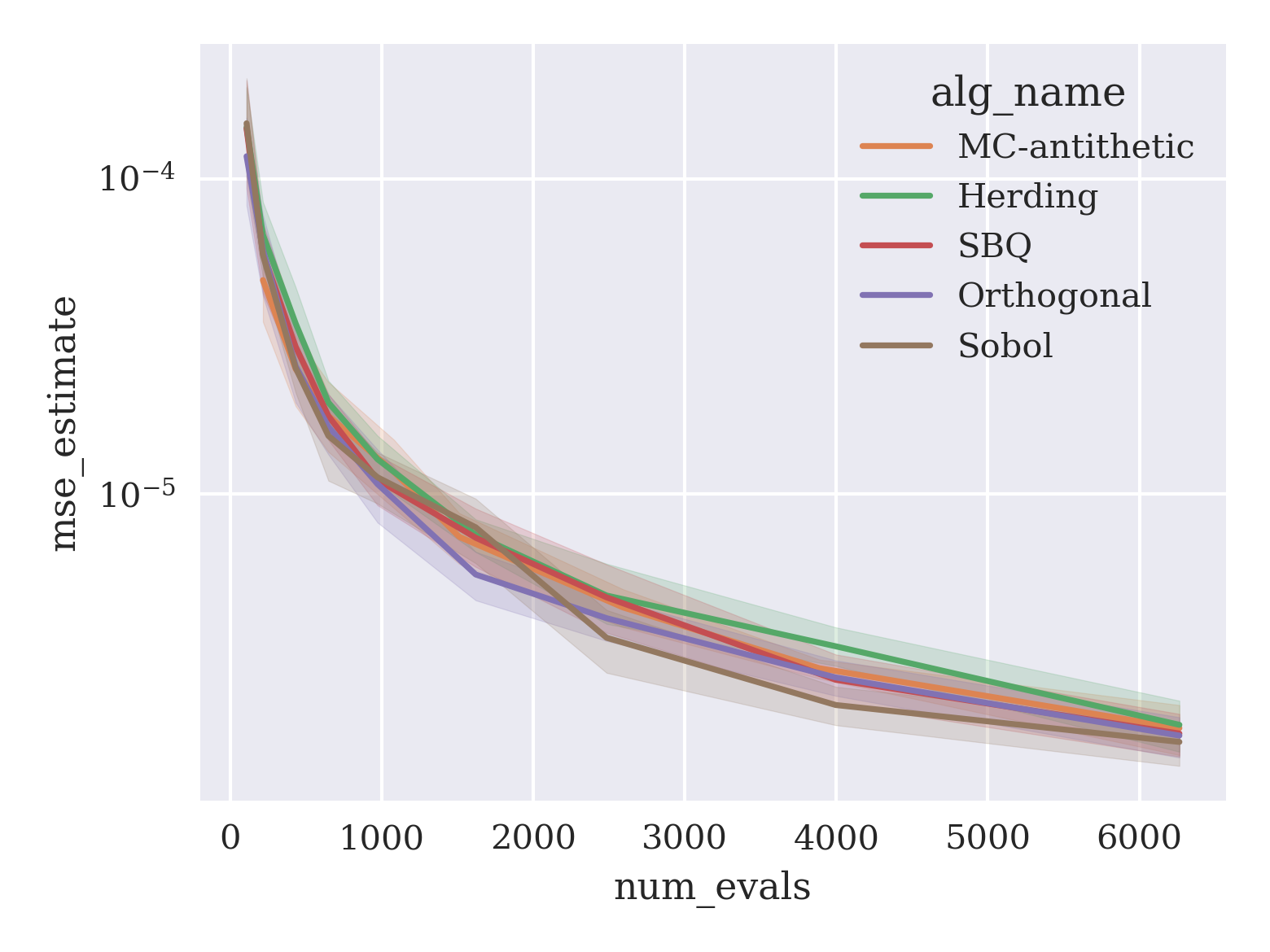}
            \caption[]%
            {{\textit{adult}}}    
        \end{subfigure}
        \hfill
        \begin{subfigure}[b]{0.495\textwidth}  
            \centering 
            \includegraphics[width=\textwidth]{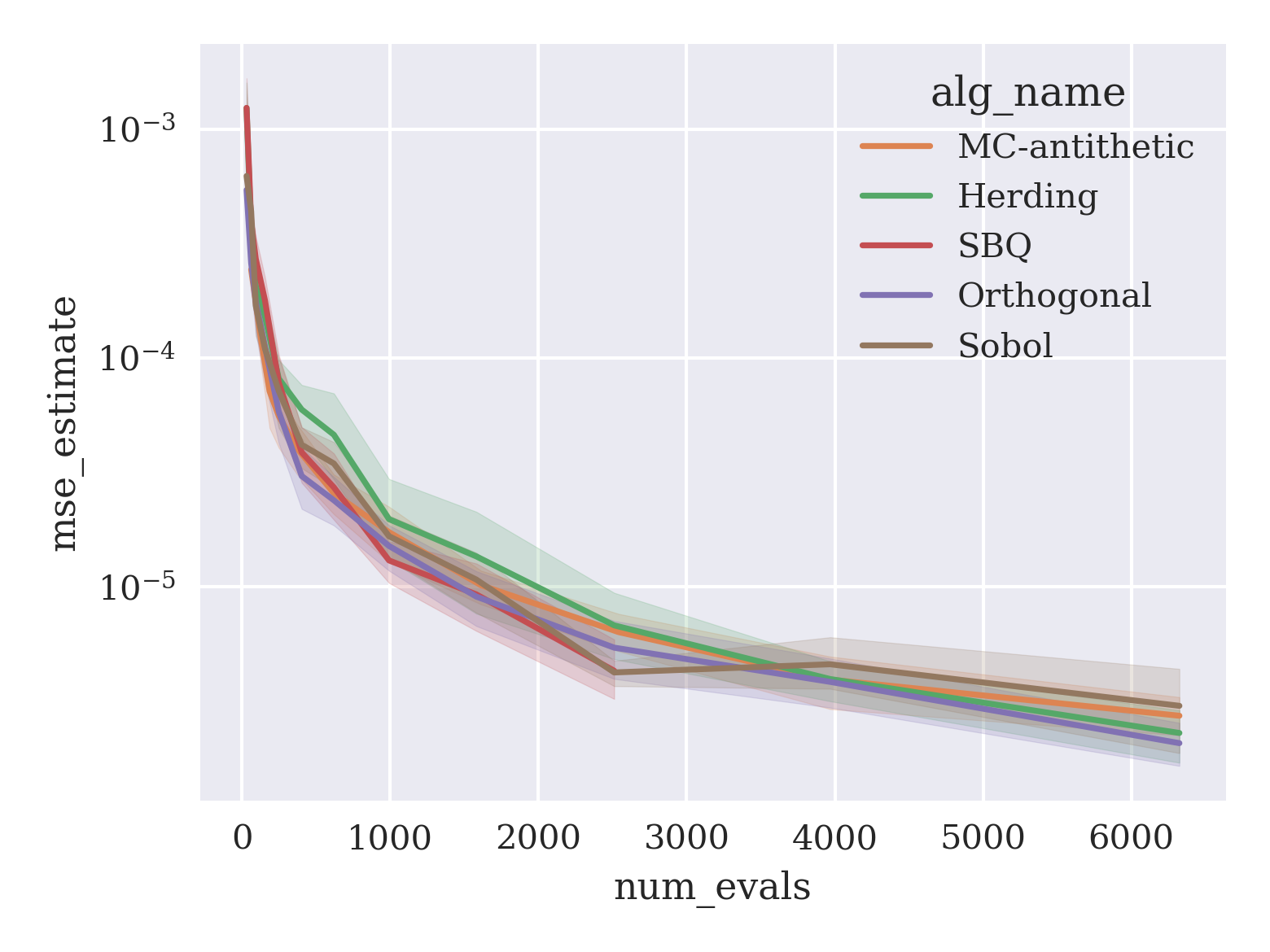}
            \caption[]%
            {{\textit{breast\_cancer}}}    
        \end{subfigure}
        \begin{subfigure}[b]{0.495\textwidth}   
            \centering 
            \includegraphics[width=\textwidth]{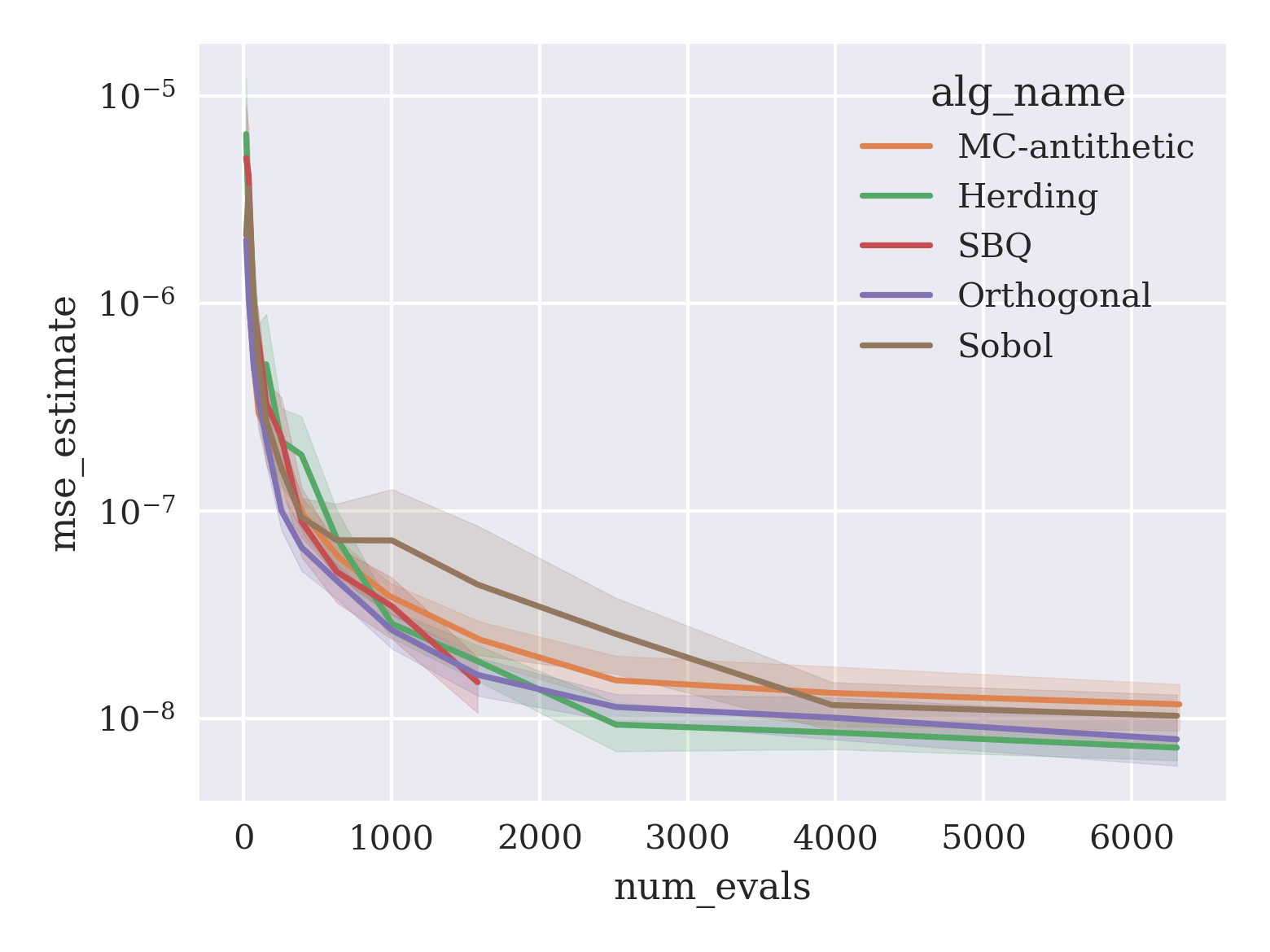}    \caption[]%
            {{\textit{bank}}}    
        \end{subfigure}
        \hfill
        \begin{subfigure}[b]{0.495\textwidth}   
            \centering 
            \includegraphics[width=\textwidth]{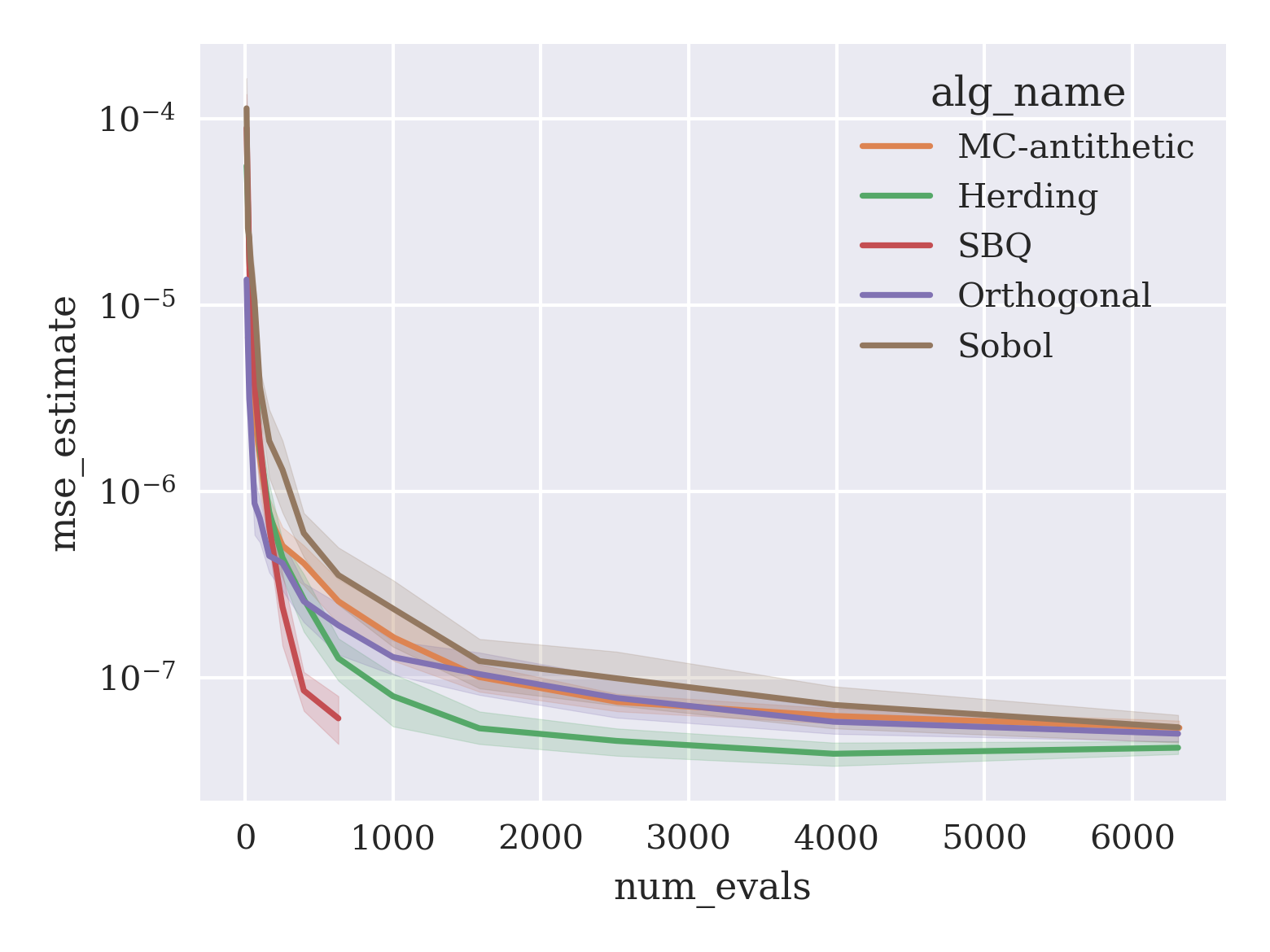}
            \caption[]%
            {{\textit{cal\_housing}}}    
        \end{subfigure}

         \begin{subfigure}[b]{0.495\textwidth}   
            \centering 
            \includegraphics[width=\textwidth]{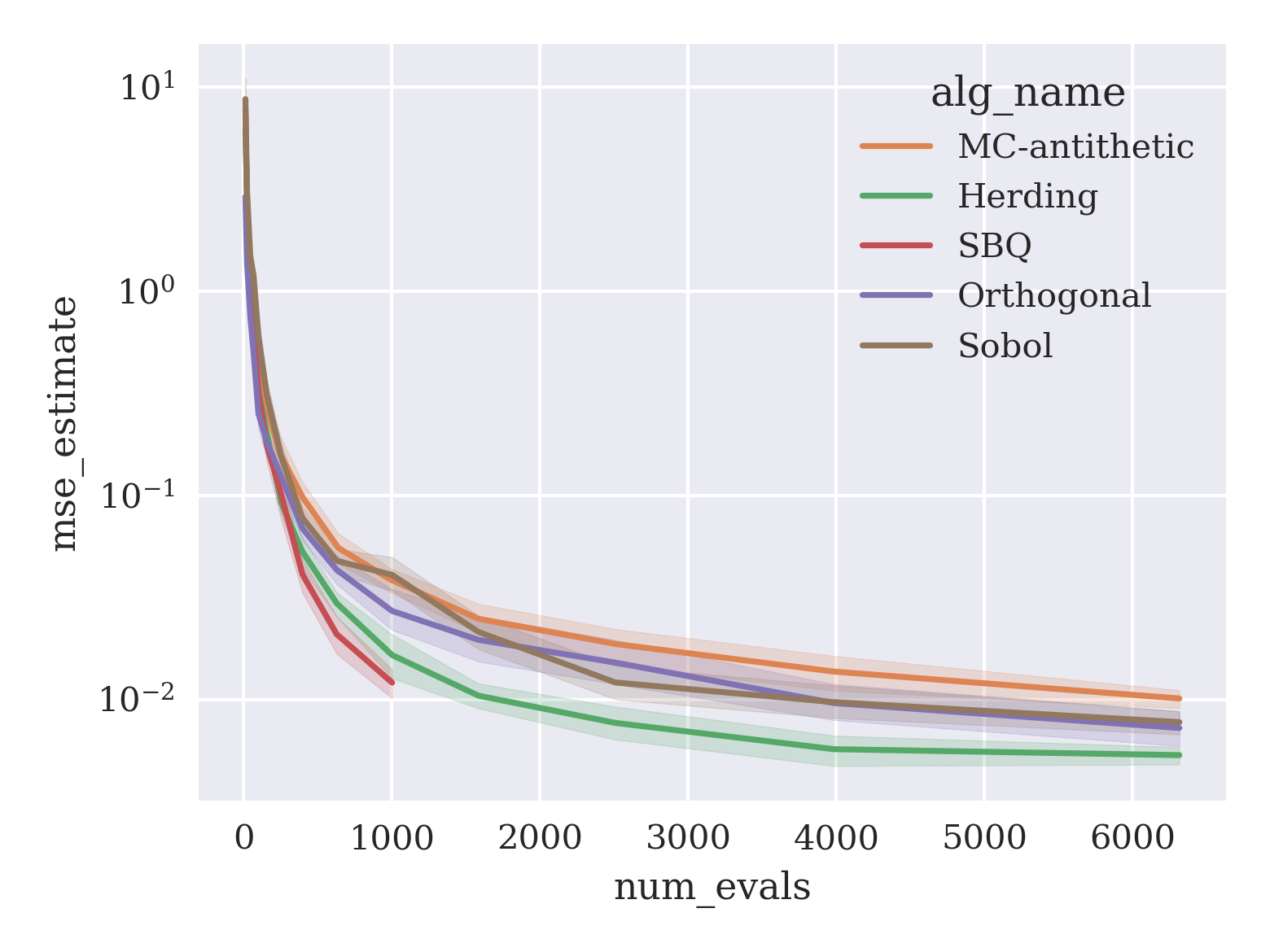}
            \caption[]%
            {{\textit{make\_regression}}}    
        \end{subfigure}
        \hfill
        \begin{subfigure}[b]{0.495\textwidth}   
            \centering 
            \includegraphics[width=\textwidth]{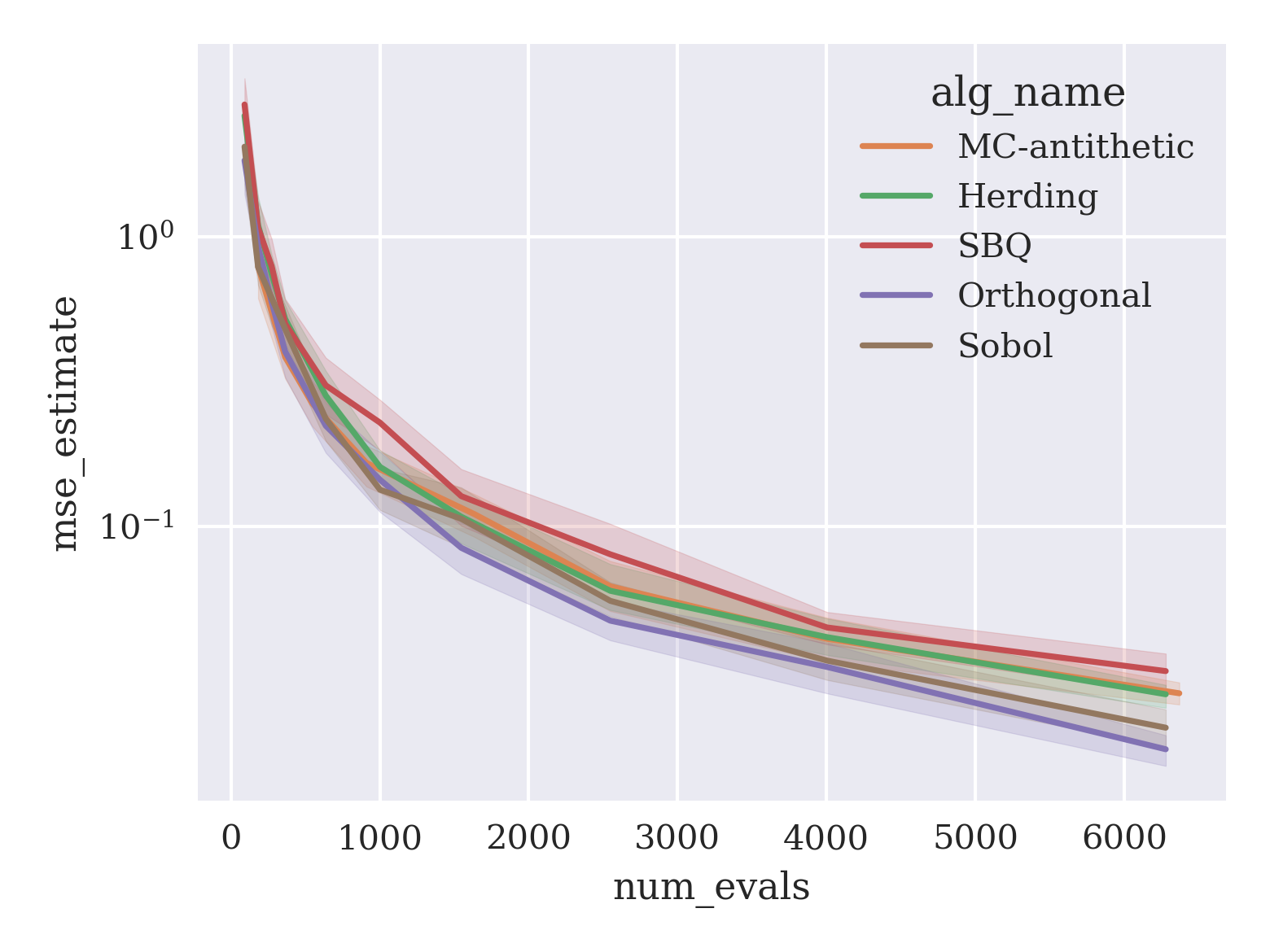}
            \caption[]%
            {{\textit{year}}}    
        \end{subfigure}
        \caption{Proposed algorithms - Tabular data, MLP models} 
        \label{fig:mlp_tabular}
\end{figure*}

\subsection{Proposed algorithms - Discrepancy scores}
\label{sec:proposed_discrepancy}
\label{sec:discrepancy_eval}
Table \ref{tab:discrepancy} shows mean discrepancies over 25 trials for the various permutation sampling algorithms, calculated as per Equation \ref{eq:discrepancy} using the Mallows kernel with $\lambda=4$. Runtime (in seconds) is also reported, where permutation sets are generated using a single thread of a Xeon E5-2698 CPU. We omit results for SBQ at $n=1000$ due to large runtime. At low dimension, the methods directly optimising discrepancy (herding and SBQ) achieve significantly lower discrepancies than the other methods. For $d=10$, $n=1000$, herding achieves almost a twofold reduction in discrepancy over antithetic sampling, directly corresponding to an almost twofold lower error bound under the Koksma-Hlawka inequality.
Antithetic sampling has a higher discrepancy than all other methods here, except in one case ($d=200$, $n=10$) where it achieves lower discrepancy than herding and SBQ. In general, we see the orthogonal and Sobol methods are the most effective at higher dimensions, collectively accounting for the lowest discrepancies at $d=200$. When $n$ is large, the runtime of the herding and SBQ methods becomes impractical. Herding takes as long as 242s to generate $n=1000$ permutations at $d=200$. The Sobol and Orthogonal methods have more reasonable runtimes, the longest of which occurs with Sobol at $n=1000, d=200$, taking 2s. These results show that no single approach is best for all problems but significant improvements can be made over the baseline MC-antithetic method.

The discrepancies computed above are applicable beyond the particular machine learning problems discussed in this paper. Table \ref{tab:discrepancy} provides a reference for how to select samples of permutations at a given computational budget and dimension, not just for Shapley value approximation, but for any bounded function $f : \mathfrak{S}_d \rightarrow \mathbb{R}$. 
\begin{table}[ht!]
\footnotesize
\caption{Discrepancy (lower is better) of permutation samples using Mallows kernel $\lambda=4$}.
\label{tab:discrepancy}
\centering
\begin{sc}
\begin{tabular}{lllrrrr}
\toprule
    &      &       & \multicolumn{2}{l}{Discrepancy} & \multicolumn{2}{l}{Time} \\
    &      &       &        mean &       std &        mean &        std \\
d & n & Algorithm &             &           &             &            \\
\midrule
\multirow{15}{*}{10} & \multirow{5}{*}{10} & Herding &    0.241 &  0.002 &    0.008 &   0.001 \\
    &      & MC-antithetic &    0.264 &  0.010 &    0.000 &   0.000 \\
    &      & Orthogonal &    0.244 &  0.003 &    0.001 &   0.000 \\
    &      & SBQ &    0.240 &  0.002 &    0.112 &   0.397 \\
    &      & Sobol &    0.258 &  0.007 &    0.003 &   0.006 \\
\cline{2-7}
    & \multirow{5}{*}{100} & Herding &    0.059 &  0.001 &    0.980 &   0.603 \\
    &      & MC-antithetic &    0.084 &  0.004 &    0.001 &   0.001 \\
    &      & Orthogonal &    0.070 &  0.002 &    0.012 &   0.029 \\
    &      & SBQ &    0.056 &  0.000 &   41.546 &   9.239 \\
    &      & Sobol &    0.069 &  0.002 &    0.048 &   0.168 \\
\cline{2-7}
    & \multirow{5}{*}{1000} & Herding &    0.013 &  0.000 &   52.961 &   4.024 \\
    &      & MC-antithetic &    0.027 &  0.002 &    0.019 &   0.040 \\
    &      & Orthogonal &    0.022 &  0.001 &    0.110 &   0.239 \\
    &      & SBQ &           - &         - &           - &          - \\
    &      & Sobol &    0.018 &  0.000 &    0.049 &   0.139 \\
\cline{1-7}
\cline{2-7}
\multirow{15}{*}{50} & \multirow{5}{*}{10} & Herding &    0.270 &  0.001 &    0.023 &   0.047 \\
    &      & MC-antithetic &    0.272 &  0.002 &    0.001 &   0.003 \\
    &      & Orthogonal &    0.269 &  0.000 &    0.024 &   0.045 \\
    &      & SBQ &    0.270 &  0.001 &    0.344 &   0.879 \\
    &      & Sobol &    0.271 &  0.001 &    0.009 &   0.007 \\
\cline{2-7}
    & \multirow{5}{*}{100} & Herding &    0.080 &  0.000 &    1.129 &   0.483 \\
    &      & MC-antithetic &    0.086 &  0.001 &    0.001 &   0.000 \\
    &      & Orthogonal &    0.072 &  0.000 &    0.054 &   0.170 \\
    &      & SBQ &    0.079 &  0.000 &   27.135 &   7.967 \\
    &      & Sobol &    0.079 &  0.000 &    0.009 &   0.006 \\
\cline{2-7}
    & \multirow{5}{*}{1000} & Herding &    0.023 &  0.000 &   85.039 &   3.604 \\
    &      & MC-antithetic &    0.027 &  0.000 &    0.049 &   0.201 \\
    &      & Orthogonal &    0.023 &  0.000 &    0.352 &   1.165 \\
    &      & SBQ &           - &         - &           - &          - \\
    &      & Sobol &    0.022 &  0.000 &    0.960 &   0.713 \\
\cline{1-7}
\cline{2-7}
\multirow{15}{*}{200} & \multirow{5}{*}{10} & Herding &    0.280 &  0.001 &    0.112 &   0.401 \\
    &      & MC-antithetic &    0.273 &  0.000 &    0.000 &   0.000 \\
    &      & Orthogonal &    0.272 &  0.000 &    0.196 &   0.051 \\
    &      & SBQ &    0.280 &  0.001 &    0.098 &   0.185 \\
    &      & Sobol &    0.272 &  0.000 &    0.795 &   1.436 \\
\cline{2-7}
    & \multirow{5}{*}{100} & Herding &    0.084 &  0.000 &    3.429 &   1.765 \\
    &      & MC-antithetic &    0.086 &  0.000 &    0.043 &   0.121 \\
    &      & Orthogonal &    0.083 &  0.000 &    0.464 &   1.134 \\
    &      & SBQ &    0.084 &  0.000 &   39.163 &  10.230 \\
    &      & Sobol &    0.084 &  0.000 &    0.692 &   0.778 \\
\cline{2-7}
    & \multirow{5}{*}{1000} & Herding &    0.026 &  0.000 &  242.516 &   6.934 \\
    &      & MC-antithetic &    0.027 &  0.000 &    0.007 &   0.002 \\
    &      & Orthogonal &    0.023 &  0.000 &    0.561 &   0.212 \\
    &      & SBQ &           - &         - &           - &          - \\
    &      & Sobol &    0.023 &  0.000 &    1.996 &   0.782 \\
\bottomrule
\end{tabular}
\end{sc}
\end{table}

\subsection{Proposed algorithms - Image data and deep CNN models}
\label{sec:proposed_images}
We continue by evaluating the effectiveness of the proposed sampling algorithms for an image classification interpretability problem. Figure \ref{fig:resnet} depicts eight images randomly selected from the ImageNet 2012 dataset of \cite{russakovsky2015imagenet}. We use approximate Shapley values to examine the contribution of the different image tiles towards the output label predicted by a ResNet50 \citep{he2016deep} convolutional neural network. Images are preprocessed as per \cite{he2016deep}, by cropping to a 1:1 aspect ratio, centering along the larger axis, resizing to 224x224, and subtracting the mean RGB values of the ImageNet training set. We examine the highest probability class output for each image. The predicted labels are displayed above each image in Figure \ref{fig:resnet}. Note that labels may be incorrect (e.g. ``vacuum"). To examine the Shapley values for each image, we group pixels into 14x14x3 tiles, considering each tile to be a single feature. This reduces the dimensionality of the interpretability problem from $224 \cdot 224 \cdot 3=150,528$ to a more tractable 256 dimensions. When a tile is not part of the active feature set, its pixel values are set to (0,0,0) (black). For the purpose of computing Shapley values, we examine the log-odds output of the ResNet50 model, as the additivity of these outputs is consistent with the efficiency property of Shapley values. Sampling algorithms are  applied to the Shapley value problem 25 times, each with a different seed. As computing an exact baseline is intractable, we estimate the mean squared error in the same manner as Section \ref{sec:proposed_mlp}. Error estimates are presented as a bar chart in the third column of Figure \ref{fig:resnet}. The second column displays a heat map of the estimated Shapley values for the first trial of the sampling algorithm with the lowest error estimate for the corresponding image. Yellow areas show image tiles that contribute positively to the predicted label, darker purple areas correspond to areas contributing negatively to the predicted label. From this analysis, we see that the Sobol method has the lowest error estimate in all cases. While the herding, orthogonal and SBQ methods generally show lower sample variance than plain Monte Carlo, they do not appear to generate significantly better solutions than the much simpler MC-antithetic method for this problem. This raises the question of whether the herding and SBQ methods could do better with a better choice of $\lambda$ parameter. However, Figure \ref{fig:resnet_lambda} in Appendix \ref{app:lambda} shows that alternative parameter values do not significantly improve the performance of herding and SBQ for this problem.

Table \ref{tab:image_time} shows the execution time of permutation generation compared compared to other computation needed to generate the Shapley values for a single image. This other computation consists of evaluating ResNet50 and performing weighted averages. Generating Shapley values for an image using 100 permutation samples and 256 features requires $100 \cdot (256+1)=25700$ model evaluations, taking around 40s on an Nvidia V100 GPU. Permutations are generated using a single thread of a Xeon E5-2698 CPU. Of the permutation sampling algorithms, we see that the linear-time algorithms (MC, MC-antithetic, Orthogonal, Sobol) do not significantly affect total runtime, however the runtime of the Herding and SBQ algorithms is significant relative to the time required for obtaining predictions from the model.

\begin{figure}   
            \centering 
            \includegraphics[width=0.95\textwidth]{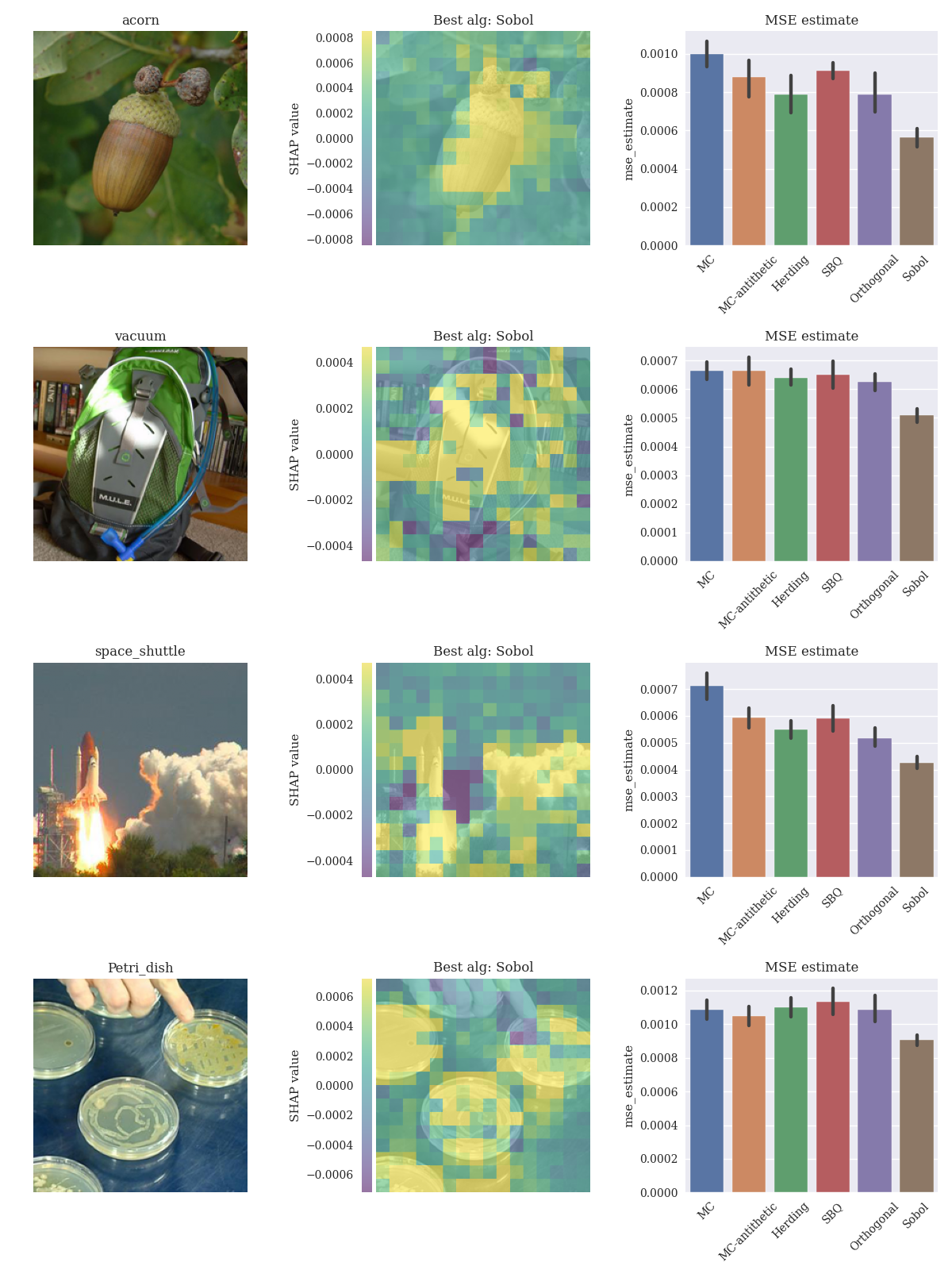}
            \caption{MSE estimates for 100 permutation samples applied to image classifications made by ResNet50}    
            \label{fig:resnet}
\end{figure}
\renewcommand{\thefigure}{\arabic{figure} (Cont.)}
\addtocounter{figure}{-1}
\begin{figure}   
            \centering 
            \includegraphics[width=0.95\textwidth]{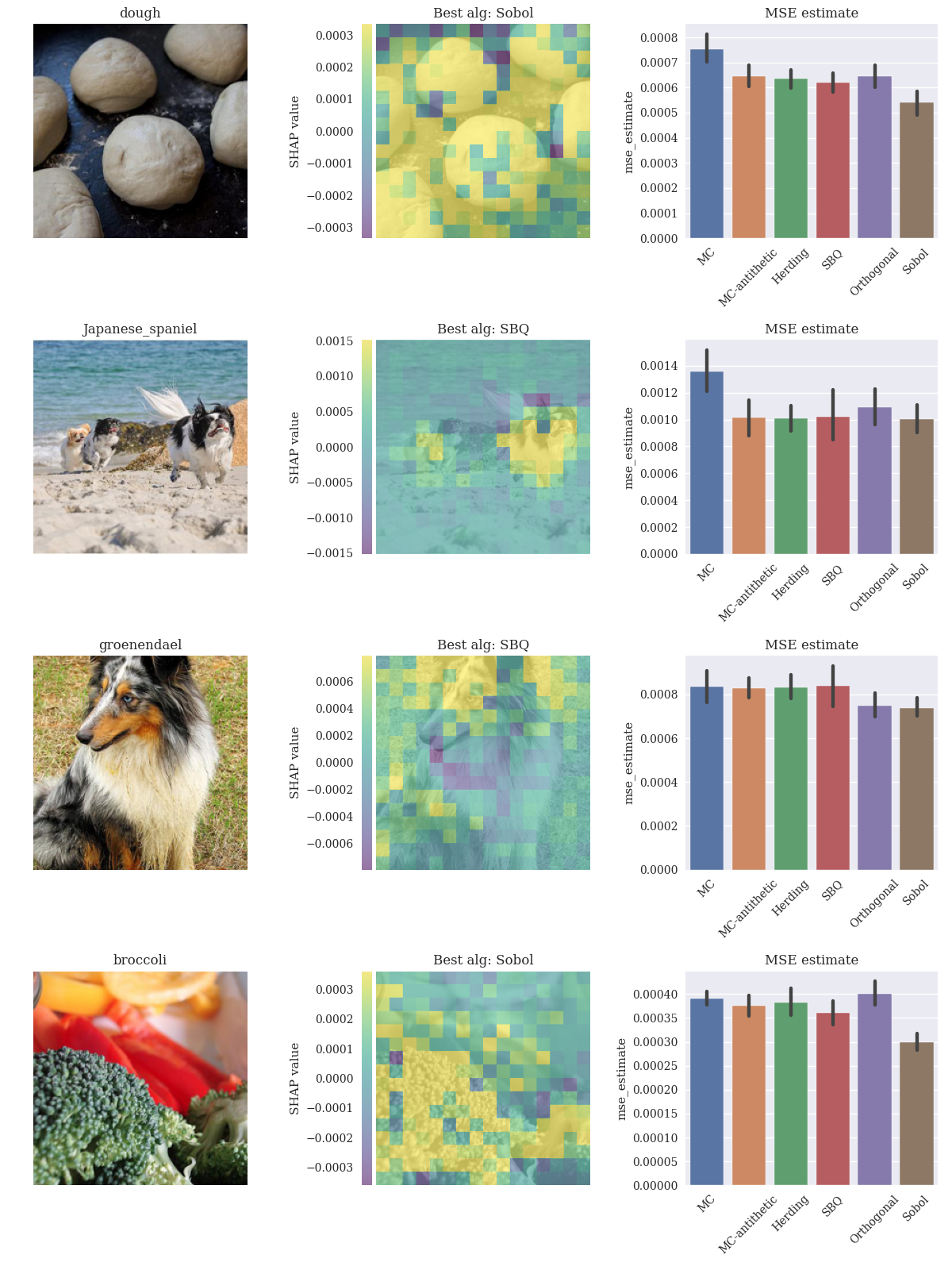}
            \caption{MSE estimates for 100 permutation samples applied to image classifications made by ResNet50}    
\end{figure}
\renewcommand{\thefigure}{\arabic{figure}}
\begin{table}
\centering
\begin{tabular}{lrrrr}
\toprule
{} & \multicolumn{2}{l}{Permutation time (s)} & \multicolumn{2}{l}{Other time (s)} \\
{} &                 mean &       std &           mean &       std \\
Algorithms    &                      &           &                &           \\
\midrule
Herding       &             3.050 &  0.431 &      40.791 &  0.491 \\
MC            &             0.001 &  0.000 &      40.586 &  0.538 \\
MC-antithetic &             0.001 &  0.000 &      40.898 &  0.553 \\
Orthogonal    &             0.231 &  0.012 &      40.666 &  0.460 \\
SBQ           &             6.253 &  1.126 &      40.480 &  0.437 \\
Sobol         &             0.050 &  0.019 &      40.622 &  0.546 \\
\bottomrule
\end{tabular}
\caption{Time to generate Shapley values for a single image, separated into time to generate 100 permutations, and other (model evaluation and averaging of model evaluations). Linear-time algorithms all account for $<$ 0.125\% of Shapley value run-time. Run-time of the non-linear-time algorithms (Herding, SBQ) is much more significant.}
\label{tab:image_time}
\end{table}

\section{Conclusion}
In this work, we propose new techniques for the approximation of Shapley values in machine learning applications based on careful selection of samples from the symmetric group $\mathfrak{S}_d$. One set of techniques draws on theory of reproducing kernel Hilbert spaces and the optimisation of discrepancies for functions of permutations, and another exploits connections between permutations and the hypersphere $\mathbb{S}^{d-2}$. We perform empirical analysis of approximation error for GBDT and neural network models trained on tabular data and image data. We also evaluate data-independent discrepancy scores for various sampling algorithms at different dimensionality and sample sizes. The introduced sampling methods show improved convergence over existing state-of-the-art methods in many cases. Our results show that kernel-based methods may be more effective for lower-dimensional problems, and methods sampling from $\mathbb{S}^{d-2}$ are more effective for higher-dimensional problems. Further work may be useful to identify the precise conditions under which optimising discrepancies based on a Mallows kernel is effective, and to clarify the impact of dimensionality on choice of sampling algorithm for Shapley value approximation.

\bibliography{main}

\begin{thebibliography}{55}
\providecommand{\natexlab}[1]{#1}
\providecommand{\url}[1]{\texttt{#1}}
\expandafter\ifx\csname urlstyle\endcsname\relax
  \providecommand{\doi}[1]{doi: #1}\else
  \providecommand{\doi}{doi: \begingroup \urlstyle{rm}\Url}\fi

\bibitem[Bertin-Mahieux et~al.(2011)Bertin-Mahieux, Ellis, Whitman, and
  Lamere]{Bertin-Mahieux2011}
Thierry Bertin-Mahieux, Daniel~P.W. Ellis, Brian Whitman, and Paul Lamere.
\newblock The million song dataset.
\newblock In \emph{{Proceedings of the 12th International Conference on Music
  Information Retrieval ({ISMIR} 2011)}}, 2011.

\bibitem[Blumenson(1960)]{blumenson1960derivation}
LE~Blumenson.
\newblock A derivation of n-dimensional spherical coordinates.
\newblock \emph{The American Mathematical Monthly}, 67\penalty0 (1):\penalty0
  63--66, 1960.

\bibitem[Brauchart and Dick(2012)]{brauchart2012quasi}
Johann~S Brauchart and Josef Dick.
\newblock Quasi--{Monte Carlo} rules for numerical integration over the unit
  sphere {$\mathbb{S}^2$}.
\newblock \emph{Numerische Mathematik}, 121\penalty0 (3):\penalty0 473--502,
  2012.

\bibitem[Castro et~al.(2009)Castro, Gómez, and Tejada]{CASTRO20091726}
Javier Castro, Daniel Gómez, and Juan Tejada.
\newblock Polynomial calculation of the shapley value based on sampling.
\newblock \emph{Computers \& Operations Research}, 36\penalty0 (5):\penalty0
  1726--1730, 2009.
\newblock ISSN 0305-0548.
\newblock \doi{https://doi.org/10.1016/j.cor.2008.04.004}.
\newblock URL
  \url{https://www.sciencedirect.com/science/article/pii/S0305054808000804}.
\newblock Selected papers presented at the Tenth International Symposium on
  Locational Decisions (ISOLDE X).

\bibitem[Castro et~al.(2017)Castro, Gómez, Molina, and Tejada]{CASTRO2017180}
Javier Castro, Daniel Gómez, Elisenda Molina, and Juan Tejada.
\newblock Improving polynomial estimation of the shapley value by stratified
  random sampling with optimum allocation.
\newblock \emph{Computers \& Operations Research}, 82:\penalty0 180--188, 2017.
\newblock ISSN 0305-0548.
\newblock \doi{https://doi.org/10.1016/j.cor.2017.01.019}.
\newblock URL
  \url{https://www.sciencedirect.com/science/article/pii/S030505481730028X}.

\bibitem[Chen and Guestrin(2016)]{ChenG16}
Tianqi Chen and Carlos Guestrin.
\newblock {XGBoost}: A scalable tree boosting system.
\newblock In \emph{KDD}, pages 785--794. ACM, 2016.

\bibitem[Chen et~al.(2010)Chen, Welling, and Smola]{kernel_herding}
Yutian Chen, Max Welling, and Alex Smola.
\newblock Super-samples from kernel herding.
\newblock In \emph{Proceedings of the Twenty-Sixth Conference on Uncertainty in
  Artificial Intelligence}, UAI'10, page 109–116, Arlington, Virginia, USA,
  2010. AUAI Press.
\newblock ISBN 9780974903965.

\bibitem[Choromanski et~al.(2019)Choromanski, Rowland, Chen, and
  Weller]{choromanski2019unifying}
Krzysztof Choromanski, Mark Rowland, Wenyu Chen, and Adrian Weller.
\newblock Unifying orthogonal {Monte Carlo} methods.
\newblock In \emph{International Conference on Machine Learning}, pages
  1203--1212. PMLR, 2019.

\bibitem[Cohen et~al.(2007)Cohen, Dror, and Ruppin]{cohen2007feature}
Shay Cohen, Gideon Dror, and Eytan Ruppin.
\newblock Feature selection via coalitional game theory.
\newblock \emph{Neural Computation}, 19\penalty0 (7):\penalty0 1939--1961,
  2007.

\bibitem[Conway et~al.(1987)Conway, Sloane, and Bannai]{conway}
J.~H. Conway, N.~J.~A. Sloane, and E.~Bannai.
\newblock \emph{Sphere-Packings, Lattices, and Groups}.
\newblock Springer-Verlag, Berlin, Heidelberg, 1987.
\newblock ISBN 038796617X.

\bibitem[Covert et~al.(2020)Covert, Lundberg, and Lee]{covert2020explaining}
Ian Covert, Scott Lundberg, and Su-In Lee.
\newblock Explaining by removing: A unified framework for model explanation.
\newblock \emph{arXiv preprint arXiv:2011.14878}, 2020.

\bibitem[Deng and Papadimitriou(1994)]{deng1994complexity}
Xiaotie Deng and Christos~H Papadimitriou.
\newblock On the complexity of cooperative solution concepts.
\newblock \emph{Mathematics of operations research}, 19\penalty0 (2):\penalty0
  257--266, 1994.

\bibitem[Diaconis(1988)]{diaconis1988group}
Persi Diaconis.
\newblock \emph{Group representations in probability and statistics}.
\newblock Institute of Mathematical Statistics Lecture Notes---Monograph
  Series, 11. Institute of Mathematical Statistics, Hayward, CA, 1988.
\newblock ISBN 0-940600-14-5.
\newblock URL \url{http://projecteuclid.org/euclid.lnms/1215467407}.

\bibitem[Dick and Pillichshammer(2010)]{dick2010digital}
Josef Dick and Friedrich Pillichshammer.
\newblock \emph{Digital nets and sequences: discrepancy theory and
  quasi--{Monte Carlo} integration}.
\newblock Cambridge University Press, 2010.

\bibitem[Guilbaud and Rosenstiehl(1963)]{permutohedron}
G.~Th. Guilbaud and P.~Rosenstiehl.
\newblock Analyse alg\'ebrique d'un scrutin.
\newblock \emph{Math\'ematiques et Sciences humaines}, 4:\penalty0 9--33, 1963.
\newblock URL \url{www.numdam.org/item/MSH_1963__4__9_0/}.

\bibitem[Hardin et~al.(2016)Hardin, Michaels, and Saff]{points}
Doug~P Hardin, TJ~Michaels, and Edward~B Saff.
\newblock A comparison of popular point configurations on {$\mathbb{S}^2$}.
\newblock \emph{Dolomites Research Notes on Approximation}, 9:\penalty0 16--49,
  2016.

\bibitem[He et~al.(2016)He, Zhang, Ren, and Sun]{he2016deep}
Kaiming He, Xiangyu Zhang, Shaoqing Ren, and Jian Sun.
\newblock Deep residual learning for image recognition.
\newblock In \emph{Proceedings of the IEEE conference on computer vision and
  pattern recognition}, pages 770--778, 2016.

\bibitem[Hickernell(2000)]{hickernell}
Fred~J. Hickernell.
\newblock What affects the accuracy of quasi-{Monte Carlo} quadrature?
\newblock In Harald Niederreiter and Jerome Spanier, editors, \emph{Monte-Carlo
  and Quasi-Monte Carlo Methods 1998}, pages 16--55, Berlin, Heidelberg, 2000.
  Springer Berlin Heidelberg.
\newblock ISBN 978-3-642-59657-5.

\bibitem[Hlawka(1961)]{hlawka1961funktionen}
Edmund Hlawka.
\newblock Funktionen von beschr{\"a}nkter variatiou in der theorie der
  gleichverteilung.
\newblock \emph{Annali di Matematica Pura ed Applicata}, 54\penalty0
  (1):\penalty0 325--333, 1961.

\bibitem[Husz\'{a}r and Duvenaud(2012)]{sbq}
Ferenc Husz\'{a}r and David Duvenaud.
\newblock Optimally-weighted herding is bayesian quadrature.
\newblock In \emph{Proceedings of the Twenty-Eighth Conference on Uncertainty
  in Artificial Intelligence}, UAI'12, page 377–386, Arlington, Virginia,
  USA, 2012. AUAI Press.
\newblock ISBN 9780974903989.

\bibitem[Jiao and Vert(2015)]{permutation_kernels}
Yunlong Jiao and Jean-Philippe Vert.
\newblock The kendall and mallows kernels for permutations.
\newblock In \emph{Proceedings of the 32nd International Conference on
  International Conference on Machine Learning - Volume 37}, ICML'15, page
  1935–1944. JMLR.org, 2015.

\bibitem[Kendall(1938)]{kendall1938new}
Maurice~G Kendall.
\newblock A new measure of rank correlation.
\newblock \emph{Biometrika}, 30\penalty0 (1/2):\penalty0 81--93, 1938.

\bibitem[Kingma and Ba(2014)]{kingma2014adam}
Diederik~P Kingma and Jimmy Ba.
\newblock Adam: A method for stochastic optimization.
\newblock \emph{arXiv preprint arXiv:1412.6980}, 2014.

\bibitem[Knight(1966)]{kt_complexity}
William~R. Knight.
\newblock A computer method for calculating kendall's tau with ungrouped data.
\newblock \emph{Journal of the American Statistical Association}, 61\penalty0
  (314):\penalty0 436--439, 1966.
\newblock ISSN 01621459.
\newblock URL \url{http://www.jstor.org/stable/2282833}.

\bibitem[Knuth(1997)]{knuth}
Donald~E. Knuth.
\newblock \emph{The Art of Computer Programming, Volume 2 (3rd Ed.):
  Seminumerical Algorithms}.
\newblock Addison-Wesley Longman Publishing Co., Inc., USA, 1997.
\newblock ISBN 0201896842.

\bibitem[Kohavi(1996)]{adult}
Ron Kohavi.
\newblock Scaling up the accuracy of naive-bayes classifiers: A decision-tree
  hybrid.
\newblock In \emph{KDD}, pages 202--207. AAAI Press, 1996.

\bibitem[Lomeli et~al.(2019)Lomeli, Rowland, Gretton, and
  Ghahramani]{lomeli2019antithetic}
Maria Lomeli, Mark Rowland, Arthur Gretton, and Zoubin Ghahramani.
\newblock Antithetic and {Monte Carlo} kernel estimators for partial rankings.
\newblock \emph{Statistics and Computing}, 29\penalty0 (5):\penalty0
  1127--1147, 2019.

\bibitem[Lundberg and Lee(2017)]{unified}
Scott~M Lundberg and Su-In Lee.
\newblock A unified approach to interpreting model predictions.
\newblock In I.~Guyon, U.~V. Luxburg, S.~Bengio, H.~Wallach, R.~Fergus,
  S.~Vishwanathan, and R.~Garnett, editors, \emph{Advances in Neural
  Information Processing Systems 30}, pages 4765--4774. Curran Associates,
  Inc., 2017.
\newblock URL
  \url{http://papers.nips.cc/paper/7062-a-unified-approach-to-interpreting-model-predictions.pdf}.

\bibitem[Lundberg et~al.(2020)Lundberg, Erion, Chen, DeGrave, Prutkin, Nair,
  Katz, Himmelfarb, Bansal, and Lee]{lundberg2018consistent}
Scott~M. Lundberg, Gabriel Erion, Hugh Chen, Alex DeGrave, Jordan~M. Prutkin,
  Bala Nair, Ronit Katz, Jonathan Himmelfarb, Nisha Bansal, and Su-In Lee.
\newblock From local explanations to global understanding with explainable ai
  for trees.
\newblock \emph{Nature Machine Intelligence}, 2\penalty0 (1):\penalty0
  2522--5839, 2020.

\bibitem[Maleki(2015)]{maleki2015addressing}
Sasan Maleki.
\newblock \emph{Addressing the computational issues of the Shapley value with
  applications in the smart grid}.
\newblock PhD thesis, University of Southampton, 2015.

\bibitem[Mangasarian and Wolberg(1990)]{mangasarian1990cancer}
Olvi~L Mangasarian and William~H Wolberg.
\newblock Cancer diagnosis via linear programming.
\newblock Technical report, University of Wisconsin-Madison Department of
  Computer Sciences, 1990.

\bibitem[Mania et~al.(2018)Mania, Ramdas, Wainwright, Jordan, and
  Recht]{mania2018kernel}
Horia Mania, Aaditya Ramdas, Martin~J Wainwright, Michael~I Jordan, and
  Benjamin Recht.
\newblock On kernel methods for covariates that are rankings.
\newblock \emph{Electronic Journal of Statistics}, 12:\penalty0 2537--2577,
  2018.

\bibitem[Mann and Shapley(1960)]{mann1960values}
Irwin Mann and Lloyd~S Shapley.
\newblock \emph{Values of large games, IV: Evaluating the electoral college by
  Montecarlo techniques}.
\newblock Rand Corporation, 1960.

\bibitem[Mezzadri(2006)]{mezzadri2006generate}
Francesco Mezzadri.
\newblock How to generate random matrices from the classical compact groups.
\newblock \emph{arXiv preprint math-ph/0609050}, 2006.

\bibitem[Moro et~al.(2014)Moro, Cortez, and Rita]{moro2014data}
S{\'e}rgio Moro, Paulo Cortez, and Paulo Rita.
\newblock A data-driven approach to predict the success of bank telemarketing.
\newblock \emph{Decision Support Systems}, 62:\penalty0 22--31, 2014.

\bibitem[Muir(1898)]{muir1898simple}
Thomas Muir.
\newblock On a simple term of a determinant.
\newblock In \emph{Proc. Royal Society Edinburg}, volume~21, pages 441--477,
  1898.

\bibitem[Neyman(1934)]{neyman}
Jerzy Neyman.
\newblock On the two different aspects of the representative method: The method
  of stratified sampling and the method of purposive selection.
\newblock \emph{Journal of the Royal Statistical Society}, 97\penalty0
  (4):\penalty0 558--625, 1934.
\newblock ISSN 09528385.
\newblock URL \url{http://www.jstor.org/stable/2342192}.

\bibitem[Niederreiter(1992)]{niederreiter}
Harald Niederreiter.
\newblock \emph{Random Number Generation and Quasi-{Monte Carlo} Methods}.
\newblock Society for Industrial and Applied Mathematics, USA, 1992.
\newblock ISBN 0898712955.

\bibitem[O'Hagan(1991)]{OHAGAN1991245}
A.~O'Hagan.
\newblock Bayes–hermite quadrature.
\newblock \emph{Journal of Statistical Planning and Inference}, 29\penalty0
  (3):\penalty0 245--260, 1991.
\newblock ISSN 0378-3758.
\newblock \doi{https://doi.org/10.1016/0378-3758(91)90002-V}.
\newblock URL
  \url{https://www.sciencedirect.com/science/article/pii/037837589190002V}.

\bibitem[Okhrati and Lipani(2020)]{okhrati2020multilinear}
Ramin Okhrati and Aldo Lipani.
\newblock A multilinear sampling algorithm to estimate shapley values.
\newblock In \emph{Proc.~of ICPR}, ICPR, 2020.

\bibitem[Owen(2003)]{owen2003quasi}
Art~B Owen.
\newblock Quasi-{Monte Carlo} sampling.
\newblock \emph{Monte Carlo Ray Tracing: Siggraph}, 1:\penalty0 69--88, 2003.

\bibitem[Owen(1972)]{owen_multilinear}
Guillermo Owen.
\newblock Multilinear extensions of games.
\newblock \emph{Management Science}, 18\penalty0 (5):\penalty0 P64--P79, 1972.
\newblock ISSN 00251909, 15265501.
\newblock URL \url{http://www.jstor.org/stable/2661445}.

\bibitem[Pace and Barry(1997)]{pace1997sparse}
R~Kelley Pace and Ronald Barry.
\newblock Sparse spatial autoregressions.
\newblock \emph{Statistics \& Probability Letters}, 33\penalty0 (3):\penalty0
  291--297, 1997.

\bibitem[Pedregosa et~al.(2011)Pedregosa, Varoquaux, Gramfort, Michel, Thirion,
  Grisel, Blondel, Prettenhofer, Weiss, Dubourg, Vanderplas, Passos,
  Cournapeau, Brucher, Perrot, and Duchesnay]{scikit-learn}
F.~Pedregosa, G.~Varoquaux, A.~Gramfort, V.~Michel, B.~Thirion, O.~Grisel,
  M.~Blondel, P.~Prettenhofer, R.~Weiss, V.~Dubourg, J.~Vanderplas, A.~Passos,
  D.~Cournapeau, M.~Brucher, M.~Perrot, and E.~Duchesnay.
\newblock Scikit-learn: Machine learning in {P}ython.
\newblock \emph{Journal of Machine Learning Research}, 12:\penalty0 2825--2830,
  2011.

\bibitem[{Plis} et~al.(2010){Plis}, {Lane}, and
  {Calhoun}]{permutations_angular}
S.~M. {Plis}, T.~{Lane}, and V.~D. {Calhoun}.
\newblock Permutations as angular data: Efficient inference in factorial
  spaces.
\newblock In \emph{2010 IEEE International Conference on Data Mining}, pages
  403--410, 2010.
\newblock \doi{10.1109/ICDM.2010.122}.

\bibitem[Press et~al.(2007)Press, Teukolsky, Vetterling, and
  Flannery]{numerical_recipes}
William~H. Press, Saul~A. Teukolsky, William~T. Vetterling, and Brian~P.
  Flannery.
\newblock \emph{Numerical Recipes 3rd Edition: The Art of Scientific
  Computing}.
\newblock Cambridge University Press, USA, 3 edition, 2007.
\newblock ISBN 0521880688.

\bibitem[Rasmussen and Ghahramani(2003)]{rasmussen2003bayesian}
Carl~Edward Rasmussen and Zoubin Ghahramani.
\newblock Bayesian {Monte Carlo}.
\newblock \emph{Advances in neural information processing systems}, pages
  505--512, 2003.

\bibitem[Rubinstein and Kroese(2016)]{rubinstein2016simulation}
Reuven~Y Rubinstein and Dirk~P Kroese.
\newblock \emph{Simulation and the {Monte Carlo} method}, volume~10.
\newblock John Wiley \& Sons, 2016.

\bibitem[Russakovsky et~al.(2015)Russakovsky, Deng, Su, Krause, Satheesh, Ma,
  Huang, Karpathy, Khosla, Bernstein, et~al.]{russakovsky2015imagenet}
Olga Russakovsky, Jia Deng, Hao Su, Jonathan Krause, Sanjeev Satheesh, Sean Ma,
  Zhiheng Huang, Andrej Karpathy, Aditya Khosla, Michael Bernstein, et~al.
\newblock Imagenet large scale visual recognition challenge.
\newblock \emph{International journal of computer vision}, 115\penalty0
  (3):\penalty0 211--252, 2015.

\bibitem[Sejdinovic et~al.(2013)Sejdinovic, Sriperumbudur, Gretton, and
  Fukumizu]{sejdinovic2013equivalence}
Dino Sejdinovic, Bharath Sriperumbudur, Arthur Gretton, and Kenji Fukumizu.
\newblock Equivalence of distance-based and rkhs-based statistics in hypothesis
  testing.
\newblock \emph{The Annals of Statistics}, pages 2263--2291, 2013.

\bibitem[Shapley(1953)]{shapley1953value}
Lloyd~S Shapley.
\newblock A value for n-person games.
\newblock \emph{Contributions to the Theory of Games}, 2\penalty0
  (28):\penalty0 307--317, 1953.

\bibitem[Sobol'(1967)]{sobol1967distribution}
Il'ya~Meerovich Sobol'.
\newblock On the distribution of points in a cube and the approximate
  evaluation of integrals.
\newblock \emph{Zhurnal Vychislitel'noi Matematiki i Matematicheskoi Fiziki},
  7\penalty0 (4):\penalty0 784--802, 1967.

\bibitem[Strumbelj and Kononenko(2010)]{efficient_explanation}
Erik Strumbelj and Igor Kononenko.
\newblock An efficient explanation of individual classifications using game
  theory.
\newblock \emph{J. Mach. Learn. Res.}, 11:\penalty0 1–18, March 2010.
\newblock ISSN 1532-4435.

\bibitem[Thompson(1993)]{generalised_permutation}
G.~L. Thompson.
\newblock Generalized permutation polytopes and exploratory graphical methods
  for ranked data.
\newblock \emph{The Annals of Statistics}, 21\penalty0 (3):\penalty0
  1401--1430, 1993.
\newblock ISSN 00905364.
\newblock URL \url{http://www.jstor.org/stable/2242202}.

\bibitem[\v{S}trumbelj and Kononenko(2014)]{explaining_prediction}
Erik \v{S}trumbelj and Igor Kononenko.
\newblock Explaining prediction models and individual predictions with feature
  contributions.
\newblock \emph{Knowl. Inf. Syst.}, 41\penalty0 (3):\penalty0 647–665,
  December 2014.
\newblock ISSN 0219-1377.
\newblock \doi{10.1007/s10115-013-0679-x}.
\newblock URL \url{https://doi.org/10.1007/s10115-013-0679-x}.

\end{thebibliography}
\appendix
\section{Proof of Theorem~\ref{thm:k_tau} (See page~\pageref{thm:k_tau})}
\ktau*
\label{app:proof}
\begin{proof}
For $1 \leq a \leq d-1$, write $t_a \in \mathfrak{S}_d$ for the adjacent transposition of $a$ and $a+1$, i.e., the permutation so that $t_a(j) = j$ for $j \neq a,a+1$, $t_a(a) = a+1$ and $t_a(a+1) = a$.  We interpret a product of permutations to be their composition as functions.  For a permutation $\pi \in \mathfrak{S}_d$, write $\nu(\pi)$ for the quantity $\sum_{j=1}^d j \pi(j)$, and note that $\nu(I) = \sum_{j=1}^d j^2 = d(d+1)(2d+1)/6$.

It is well-known that the number of inversions $n_{\textrm{dis}}(I,\pi) = |\{(i,j) : i < j \textrm{ and } \pi(i) > \pi(j)\}|$ in a permutation $\pi$ equals the least $k$ so that there exist $a_1,\ldots,a_k$ with
\begin{equation} \label{eq1}
\pi = \prod_{i=1}^k t_{a_i}.
\end{equation}
This quantity $k$ is known as the ``length'' of $\pi$ and is exactly the distance in the $1$-skeleton of the permutohedron representation of $\mathfrak{S}_d$.  Furthermore, the $a_i$ can be obtained via bubble sort, i.e., the product (\ref{eq1}) begins with 
$$
t_{\pi(1)-1} t_{\pi(1)-2} \cdots t_{1}
$$
and proceeds recursively on $\pi|_{\{2,\ldots,d\}}$.  Write $\pi_j$ for the product of the first $j$ terms in (\ref{eq1}) for $1 \leq j \leq k$, i.e., $\pi_j = \prod_{i=1}^j t_{a_i}$, with $\pi_0 = I$.  Then the pairs $e_j = \{\pi_j(a_j),\pi_j(a_{j}+1)\}$ are all distinct, because entries of $\pi$ in one-line notation switch places at most once when applying the adjacent transpositions, i.e., a larger value $a$, once it switches places with a smaller value $b$ immediately to its left, never switches place with $b$ again.  Furthermore, note that
\begin{align*}
\nu(\pi_{j+1})-\nu(\pi_j) &= (j \pi_{j+1}(a_j) + (j+1) \pi_{j+1}(a_{j}+1)) - (j \pi_j(a_j) + (j+1) \pi_j(a_{j}+1)) \\
&= (j \pi_{j}(a_{j}+1) + (j+1) \pi_j(a_j)) - (j \pi_j(a_j) + (j+1) \pi_j(a_{j}+1)) \\
&= \pi_{j}(a_{j}+1) - \pi_j(a_j),
\end{align*}
a quantity which is always negative because the sequence of transpositions obtained above only ever increases the number of inversions.  Therefore, the collection $\{e_j\}_{j=1}^k$ consists of $k$ distinct edges of a complete graph on $\{1,\ldots,d\}$ and 
\begin{align*}
\nu(\pi) &= \nu(\pi_k) = \nu(\pi_k) - \nu(I) + \frac{d(d+1)(2d+1)}{6} \\
&= \frac{d(d+1)(2d+1)}{6} + \sum_{j=1}^k \pi_{j}(a_{j}+1) - \pi_j(a_j) \\
&= \frac{d(d+1)(2d+1)}{6} - \sum_{j=1}^k \wt(e_j)
\end{align*}
where $\wt(\{a,b\}) = |b-a|$.  By greedily selecting the highest-weight or lowest-weight edges of the complete graph $K_d$ weighted by $\wt(\cdot)$, the quantity $\sum_{j=1}^k \wt(e_j)$ is always at least
$$
1 \cdot (d-1) + 2 \cdot (d-2) + \cdots + (d-m) \cdot m = \frac{(d + 2m - 1)(d - m + 1)(d - m)}{6}
$$
where $m$ is the smallest integer so that $\sum_{j=1}^{d-m} (d-j) = (d + m - 1)(d - m)/2 \leq k$, because the summands correspond to $d-1$ edges of weight $1$, $d-2$ edges of weight $2$, and so on up to $m$ edges of weight $d-m$.  Similarly, $\sum_{j=1}^k \wt(e_j)$ is at most
$$
(d-1) \cdot 1 + (d-2) \cdot 2 + \cdots + M \cdot (d-M) = \frac{(d + 2M - 1)(d - M + 1)(d - M)}{6}
$$
where $M$ is the largest integer so that $\sum_{j=1}^{d-M} j = (d-M)(d-M+1)/2 \geq k$, since in this case we bound the total edge weight via $1$ edge of weight $d-1$, $2$ edges of weight $d-2$, and so on up to $d-M$ edges of weight $M$. Then, letting $\alpha = k/\binom{d}{2}$ (so that $\alpha \in [0,1]$),
\begin{align*}
m &= \left \lfloor \frac{\sqrt{4d^2 - 4d - 8k + 1} + 1}{2} \right \rfloor = d \sqrt{1- \alpha} \pm 1 \\
M &= \left \lceil \frac{2d - \sqrt{8k + 1} + 1}{2} \right \rceil = d(1-\sqrt{\alpha}) \pm 1
\end{align*}
It is straightforward to verify that, if $f(s) = (d + 2s - 1)(d - s + 1)(d - s)/6$, then $s = O(d)$ implies $f(s \pm 1) = f(s) + O(d^2)$. So, letting $\alpha = k/
\binom{d}{2}$ (so that $\alpha \in [0,1]$)
\begin{align*}
\nu(\pi)& \leq \frac{d(d+1)(2d+1)}{6} - f(M) \\
& = \frac{d(d+1)(2d+1)}{6} - f(d\sqrt{1-\alpha}) + O(d^2) \\
& = \frac{d^3}{3} - \frac{d^3(1 + 2 \sqrt{1-\alpha})(1-\sqrt{1-\alpha})^2}{6} + O(d^2) \\
& = d^3 \left ( \frac{2}{3} - \frac{\alpha}{2} - \frac{(1-\alpha)^{3/2}}{3} \right ) + O(d^2)
\end{align*}
and
\begin{align*}
\nu(\pi)& \geq \frac{d(d+1)(2d+1)}{6} - f(m) \\
& = \frac{d^3}{3} - f(d(1-\sqrt{\alpha})) + O(d^2) \\
& = \frac{d^3}{3} - \frac{d^3 (1 + 2(1-\sqrt{\alpha})) (1 - (1-\sqrt{\alpha}))^2}{6}  + O(d^2) \\
& = d^3 \left ( \frac{1}{3} - \frac{\alpha}{2} + \frac{\alpha^{3/2}}{3} \right ) + O(d^2).
\end{align*}
(Note that the functions in parentheses meet for $\alpha = 0,1$.) Thus, applying the fact that $\nu(\sigma' \circ \sigma^{-1}) = I^T (\sigma' \circ \sigma^{-1}) = \sigma^{T} \sigma'$, where we regard permutations both as functions $\pi$ of $\{1,\ldots,d\}$ and as vectors $(\pi(1),\ldots,\pi(d))$,
$$
 2 + 2 \alpha^{3/2} \leq \frac{6 \sigma^{T} \sigma'}{d^3} + O(d^{-1}) + 3 \alpha \leq 4 - 2(1-\alpha)^{3/2} 
$$
Then, since 
$$
K_\tau(\sigma,\sigma') = 1 - \frac{2 n_{\textrm{dis}}( I,\sigma' \sigma^{-1})}{\binom{d}{2}} = 1 - 2 \alpha
$$
we have
\begin{align*}
\frac{1}{4} + \left (\frac{1-K_\tau(\sigma,\sigma')}{2} \right )^{3/2} \leq \frac{3 \sigma^{T} \sigma'}{d^3} + O(d^{-1}) - \frac{ 3K_\tau(\sigma,\sigma')}{4} \leq \frac{5}{4}-\left ( \frac{1+K_\tau(\sigma,\sigma')}{2} \right )^{3/2}.
\end{align*}
Writing $\sigma = \rho x + \mu$ and $\sigma' = \rho x' + \mu$ yields the first claim of the result, since then
$$
\sigma^T \sigma' = \frac{d(d^2-1)}{12} A(\sigma)^T A(\sigma') + \frac{d(d+1)^2}{4}.
$$
For the second claim, note that, if $\sigma^T \sigma' = d^3 (1/4 + o(1))$ (the expected value for random permutations, corresponding to $A(\sigma)^T A(\sigma') \approx 0$),
$$
-2 + 4 \left (\frac{1-K_\tau(\sigma,\sigma')}{2} \right )^{3/2} \leq  -3 K_\tau(\sigma,\sigma') + O(d^{-1}) \leq 2-4 \left ( \frac{1+K_\tau(\sigma,\sigma')}{2} \right )^{3/2},
$$
i.e.,
$$
|K_\tau(\sigma,\sigma')| \leq 1/2 + o(1).
$$
\end{proof}

\section{Selection of parameters for the Mallows kernel}
\label{app:lambda}
The experimental analysis of Section \ref{sec:evaluation} requires the selection of a Mallows kernel $\lambda$ parameter for the kernel herding and SBQ algorithms, and for the calculation of discrepancies reported in Table \ref{tab:discrepancy}. As a matter of practicality, we limit the comparisons to a single version of the Mallows kernel due to space constraints. In theory, this parameter could be tuned and the optimal performance reported for each dataset, however, we consider this an unfair reflection of the algorithms performance, as the total number of samples, including the tuning phase, would be considerably higher than for the other algorithms. For kernel-based methods to be effective in practice they should not require extensive parameter tuning. Therefore, we fix $\lambda=4$, choosing this as an acceptable value based on experiments on different data sources presented below.

Figures \ref{fig:lambda_tabular_gbdt}, \ref{fig:lambda_tabular_mlp}, and \ref{fig:resnet_lambda} show the error of the kernel herding algorithm using 100 permutation samples and various $\lambda$ values. As usual, the shaded areas represent 95\% confidence intervals. We perform these experiments for tabular datasets with GBDT models, tabular datasets with MLP models, and image data with a ResNet50 model, corresponding to the experiments of Section \ref{sec:evaluation}. For some dataset/model combinations a smaller $\lambda$ value appears to be preferable, for others a larger value is preferable. In the case of image data, the impact of the parameter is small in terms of total MSE, and for tabular data, it is difficult to assign any particular trend due to the volatility of the results. In summary, we compromise with a selection of $\lambda=4$, which appears to perform acceptably in a wide range of cases. 

\begin{figure*}[ht]
        \centering
        \begin{subfigure}[b]{0.495\textwidth}
            \centering
            \includegraphics[width=\textwidth]{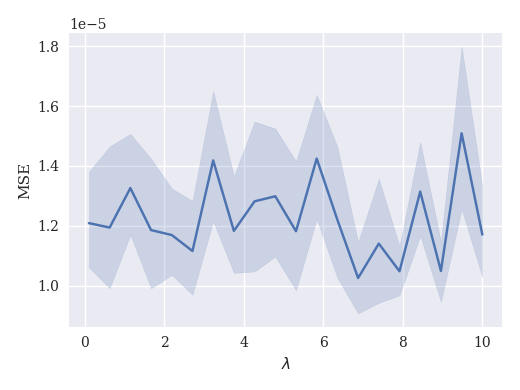}
            \caption[]%
            {{\textit{adult}}}    
        \end{subfigure}
        \hfill
        \begin{subfigure}[b]{0.495\textwidth}  
            \centering 
            \includegraphics[width=\textwidth]{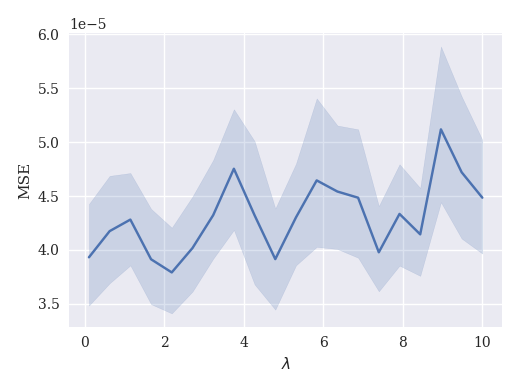}
            \caption[]%
            {{\textit{breast\_cancer}}}    
        \end{subfigure}
        \begin{subfigure}[b]{0.495\textwidth}   
            \centering 
            \includegraphics[width=\textwidth]{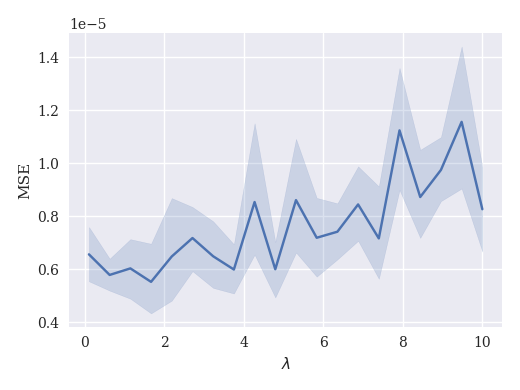}
            \caption[]%
            {{\textit{bank}}}    
        \end{subfigure}
        \hfill
        \begin{subfigure}[b]{0.495\textwidth}   
            \centering 
            \includegraphics[width=\textwidth]{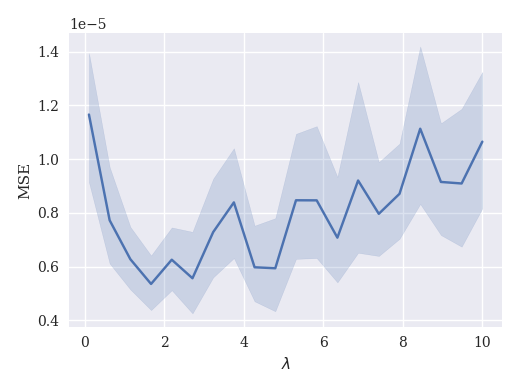}
            \caption[]%
            {{\textit{cal\_housing}}}    
        \end{subfigure}

         \begin{subfigure}[b]{0.495\textwidth}   
            \centering 
            \includegraphics[width=\textwidth]{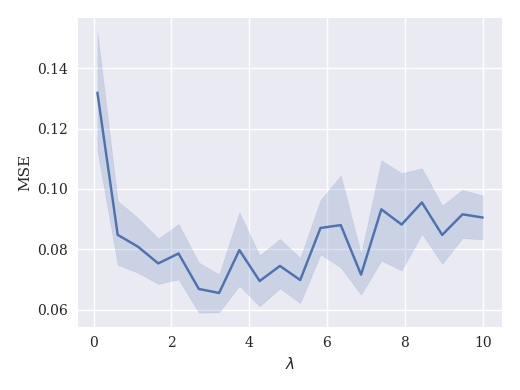}
            \caption[]%
            {{\textit{make\_regression}}}    
        \end{subfigure}
        \hfill
        \begin{subfigure}[b]{0.495\textwidth}   
            \centering 
            \includegraphics[width=\textwidth]{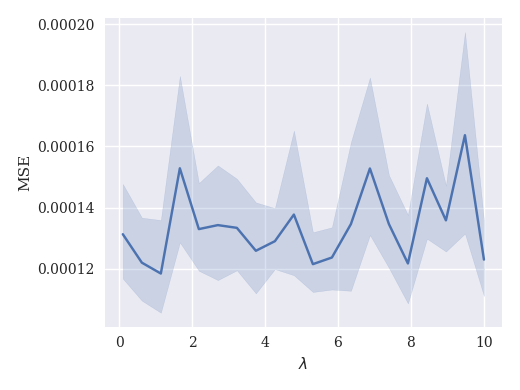}
            \caption[]%
            {{\textit{year}}}    
        \end{subfigure}
        \caption{Varying $\lambda$ for 100 herding samples --- Tabular data and GBDT models. Selection of a consistently effective $\lambda$ value is unclear.} 
        \label{fig:lambda_tabular_gbdt}
\end{figure*}

\begin{figure*}[ht]
        \centering
        \begin{subfigure}[b]{0.495\textwidth}
            \centering
            \includegraphics[width=\textwidth]{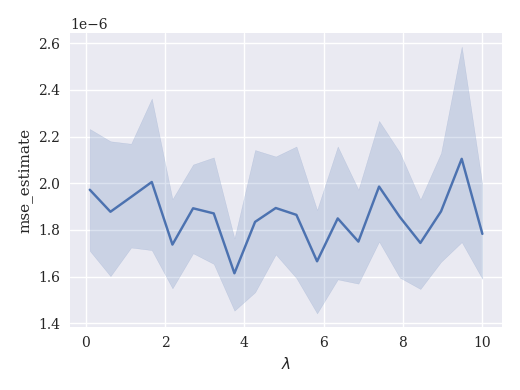}
            \caption[]%
            {{\textit{adult}}}    
        \end{subfigure}
        \hfill
        \begin{subfigure}[b]{0.495\textwidth}  
            \centering 
            \includegraphics[width=\textwidth]{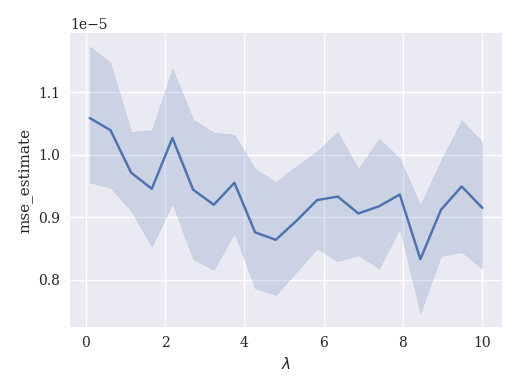}
            \caption[]%
            {{\textit{breast\_cancer}}}    
        \end{subfigure}
        \begin{subfigure}[b]{0.495\textwidth}   
            \centering 
            \includegraphics[width=\textwidth]{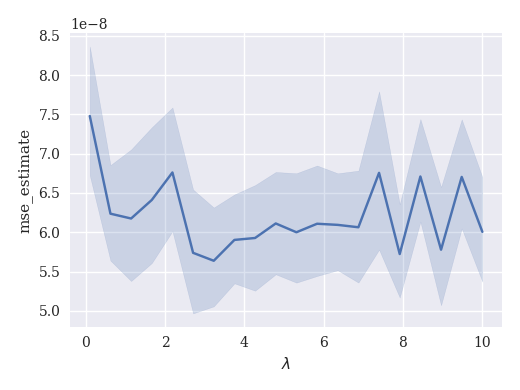}
            \caption[]%
            {{\textit{bank}}}    
        \end{subfigure}
        \hfill
        \begin{subfigure}[b]{0.495\textwidth}   
            \centering 
            \includegraphics[width=\textwidth]{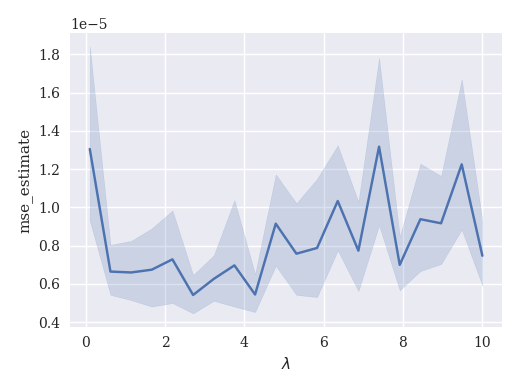}
            \caption[]%
            {{\textit{cal\_housing}}}    
        \end{subfigure}

         \begin{subfigure}[b]{0.495\textwidth}   
            \centering 
            \includegraphics[width=\textwidth]{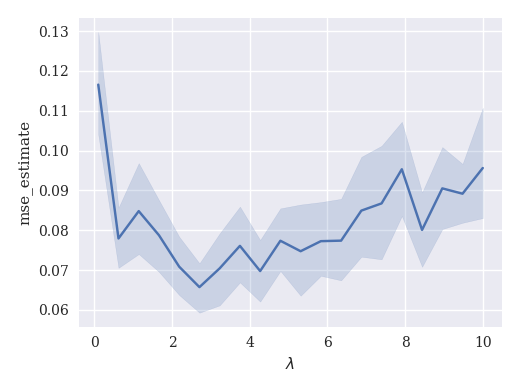}
            \caption[]%
            {{\textit{make\_regression}}}    
        \end{subfigure}
        \hfill
        \begin{subfigure}[b]{0.495\textwidth}   
            \centering 
            \includegraphics[width=\textwidth]{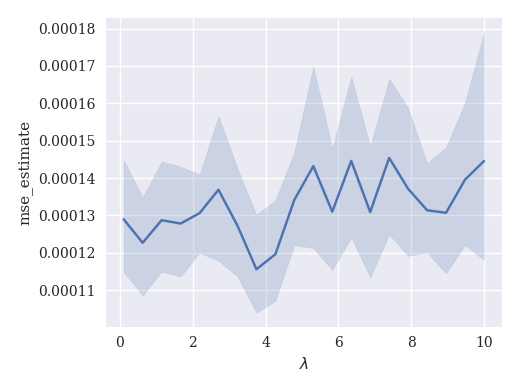}
            \caption[]%
            {{\textit{year}}}    
        \end{subfigure}
        \caption{Varying $\lambda$ for 100 herding samples --- Tabular data and MLP models. Selection of a consistently effective $\lambda$ value is unclear.} 
        \label{fig:lambda_tabular_mlp}
\end{figure*}

\begin{figure}   
            \centering 
            \includegraphics[height=0.95\textheight]{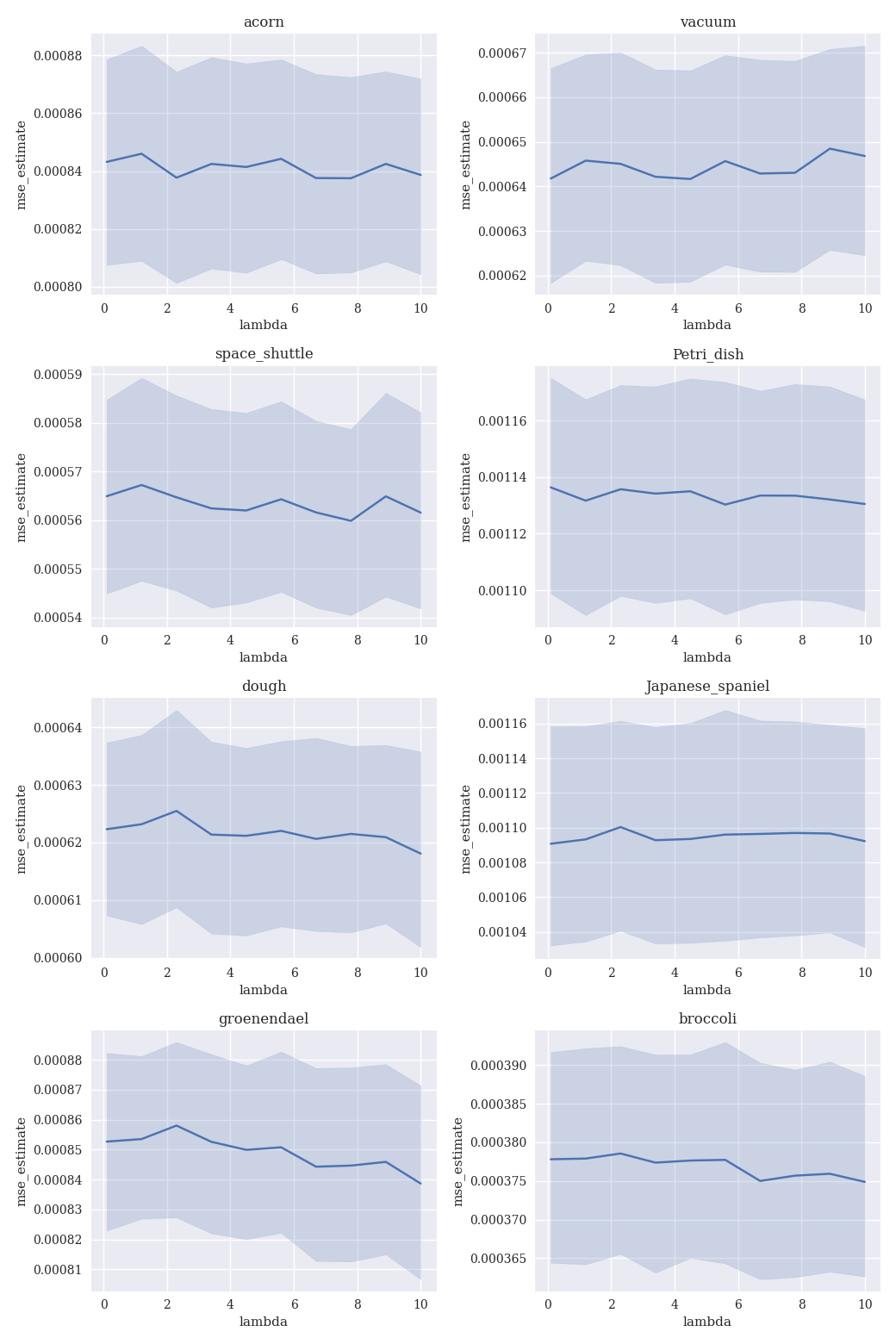}
            \caption{Varying $\lambda$ for 100 herding samples --- Image data and ResNet50 model. Varying the $\lambda$ parameter for our 256 dimensional image data has little impact on average.}    
            \label{fig:resnet_lambda}
\end{figure}

It is also necessary to choose the number of argmax samples for the herding and SBQ algorithms. Recall from Section \ref{sec:kernel_herding} that we approximate the argmax in herding and SBQ, choosing a new permutation sample by selecting a set of uniform random permutations and selecting one to minimise the discrepancy. Figure \ref{fig:argmax_samples} shows the effect of varying the number of argmax samples on mean squared error for tabular datasets and GBDT models. We find that 5 to 10 samples is too low for optimal performance, but there is little difference between 25 and 50 samples, so choose 25 samples as a compromise for good accuracy and reasonable runtime.
\begin{figure*}[ht]
        \centering
        \begin{subfigure}[b]{0.495\textwidth}
            \centering
            \includegraphics[width=\textwidth]{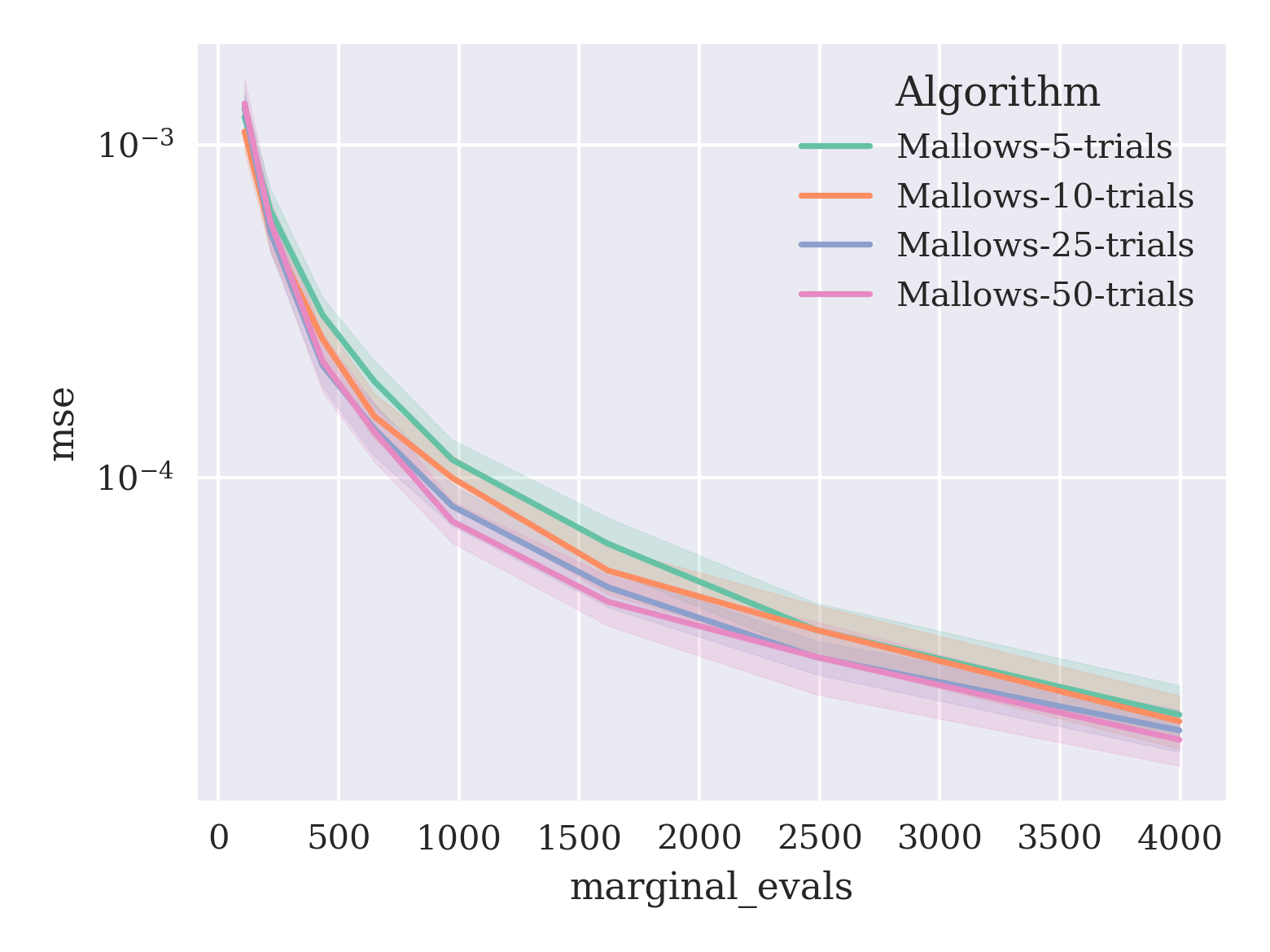}
            \caption[]%
            {{\textit{adult}}}    
        \end{subfigure}
        \hfill
        \begin{subfigure}[b]{0.495\textwidth}  
            \centering 
            \includegraphics[width=\textwidth]{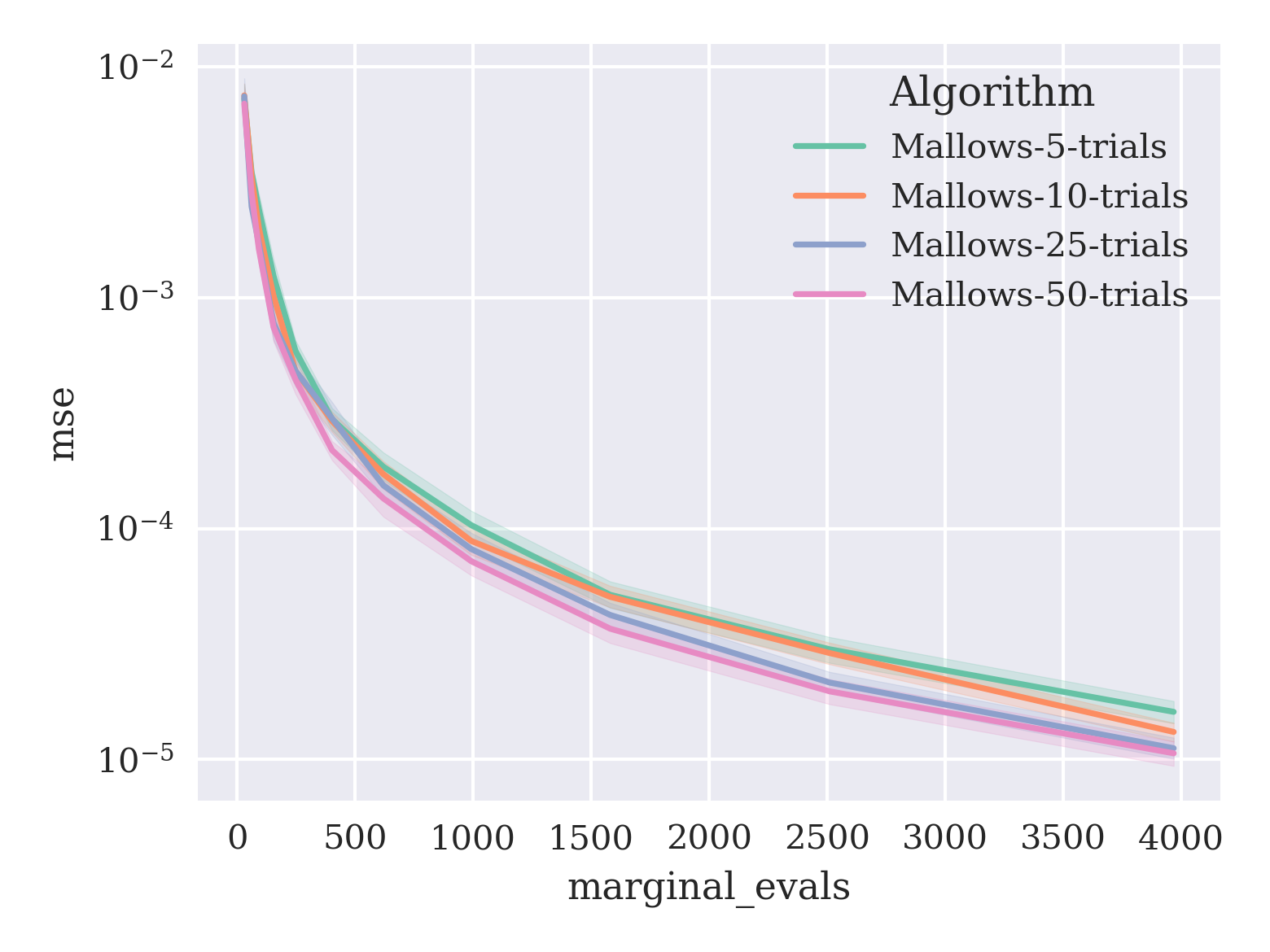}
            \caption[]%
            {{\textit{breast\_cancer}}}    
        \end{subfigure}
        \begin{subfigure}[b]{0.495\textwidth}   
            \centering 
            \includegraphics[width=\textwidth]{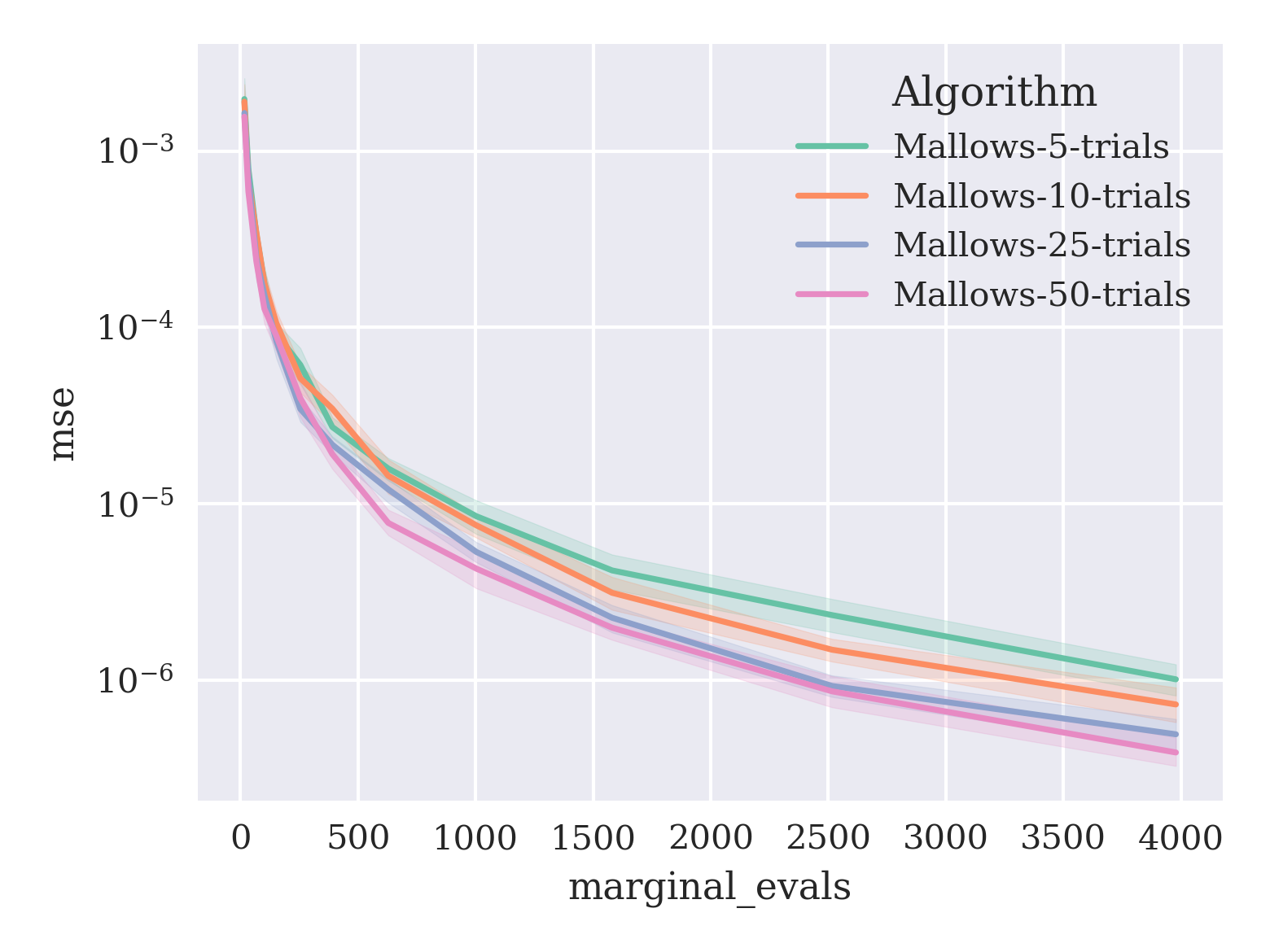}
            \caption[]%
            {{\textit{bank}}}    
        \end{subfigure}
        \hfill
        \begin{subfigure}[b]{0.495\textwidth}   
            \centering 
            \includegraphics[width=\textwidth]{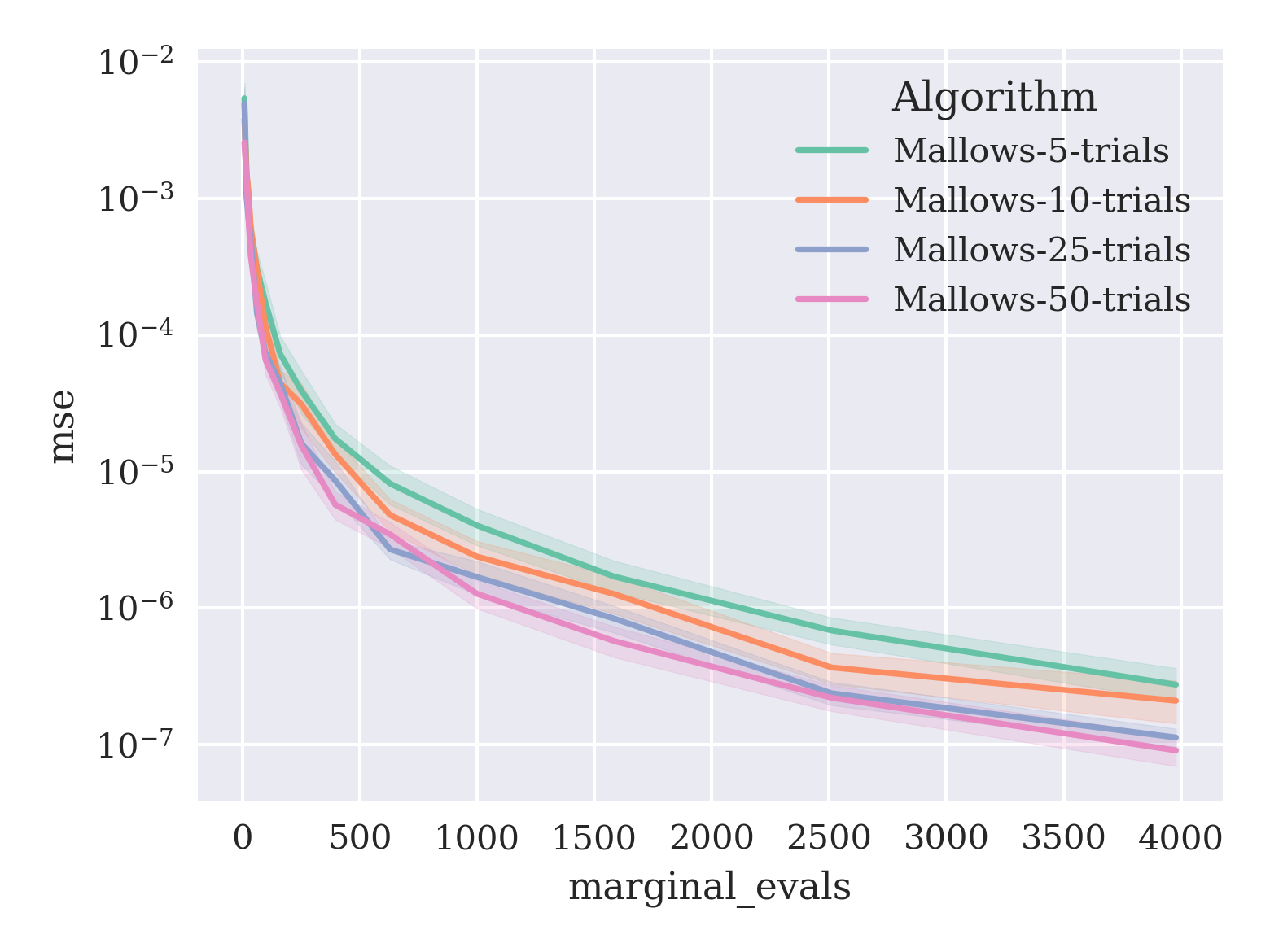}
            \caption[]%
            {{\textit{cal\_housing}}}    
        \end{subfigure}

         \begin{subfigure}[b]{0.495\textwidth}   
            \centering 
            \includegraphics[width=\textwidth]{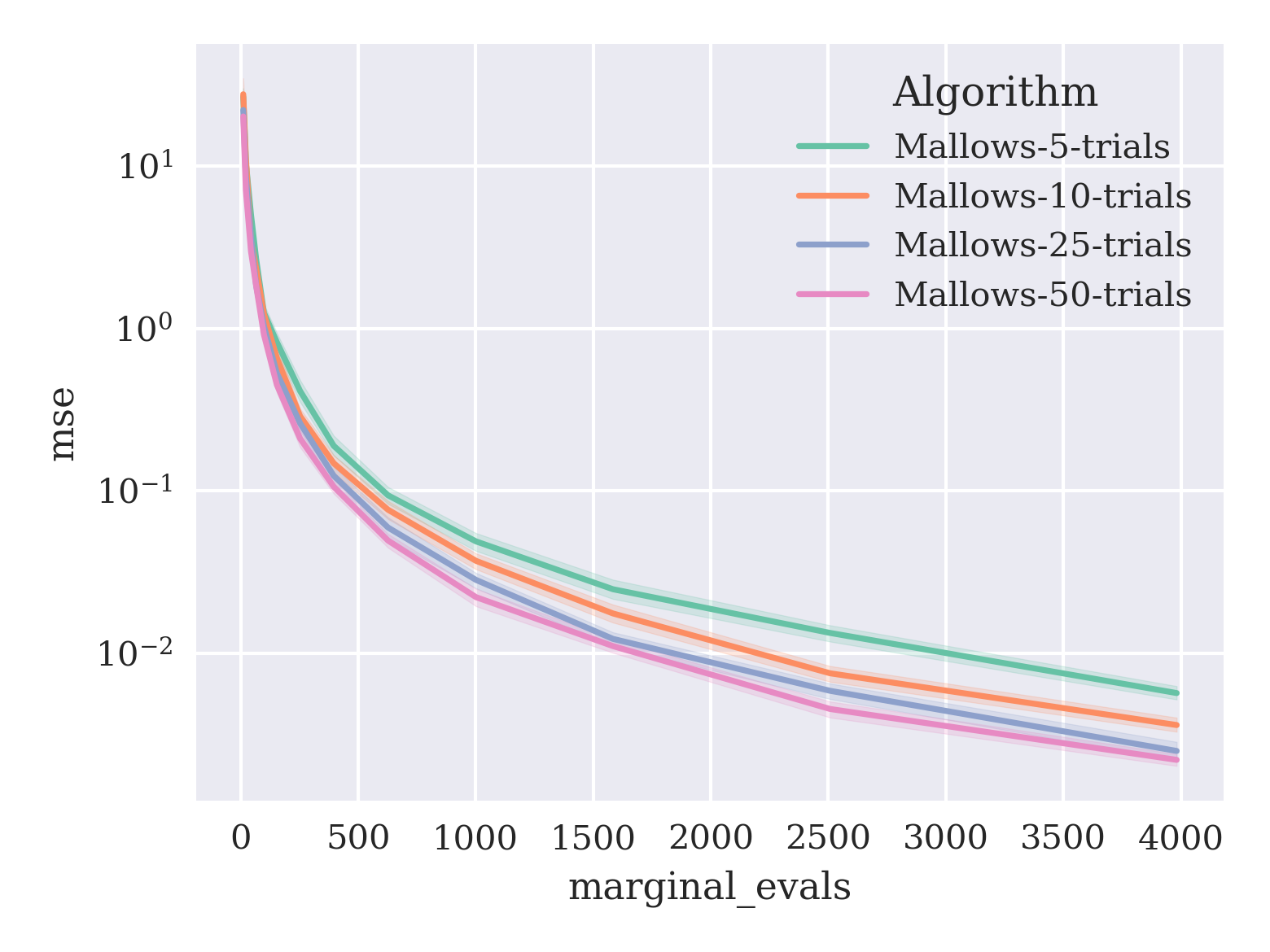}
            \caption[]%
            {{\textit{make\_regression}}}    
        \end{subfigure}
        \hfill
        \begin{subfigure}[b]{0.495\textwidth}   
            \centering 
            \includegraphics[width=\textwidth]{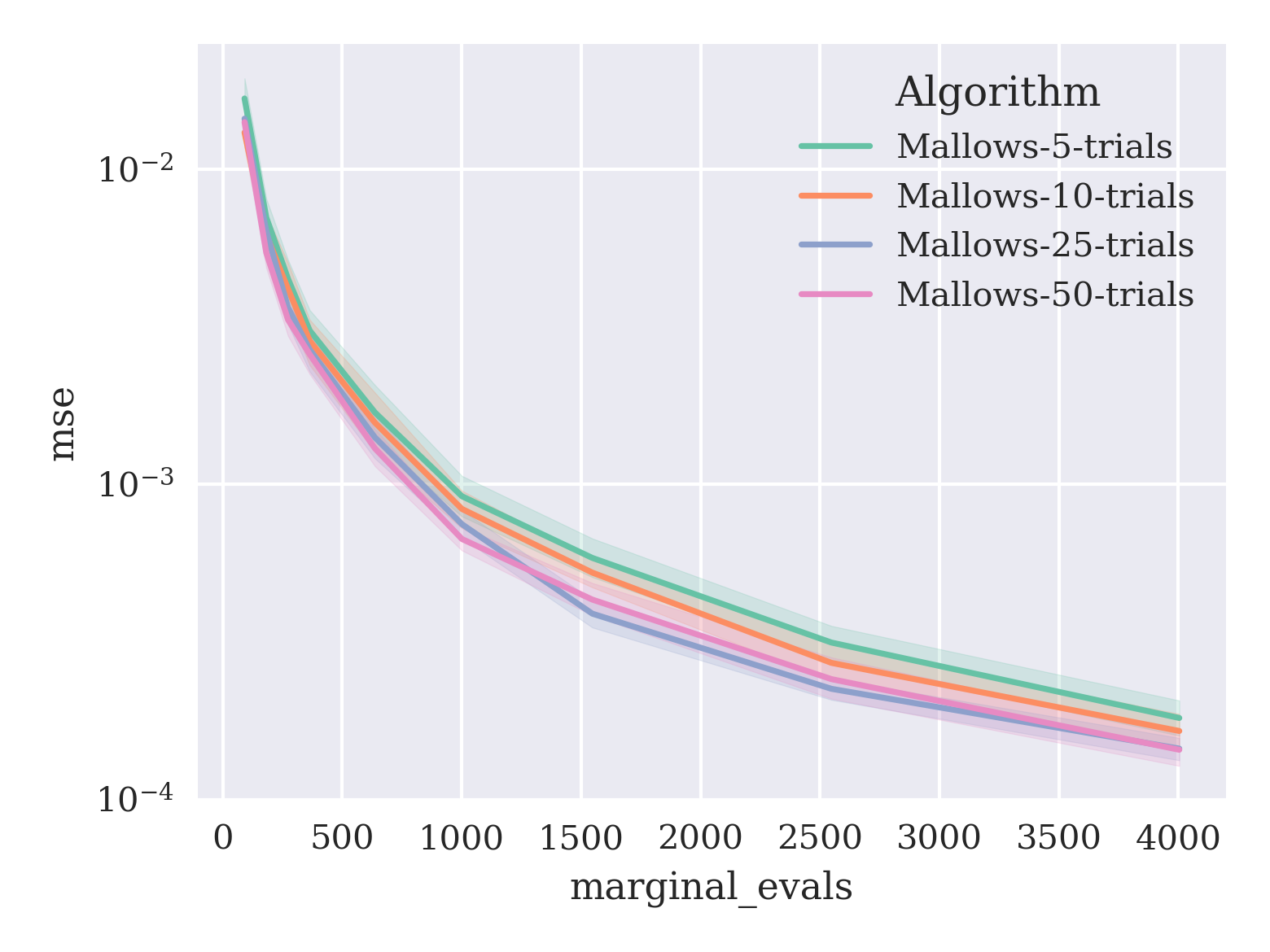}      \caption[]%
            {{\textit{year}}}    
        \end{subfigure}
        \caption{Varying argmax samples for herding algorithm ($\lambda=4)$ --- Tabular datasets and GBDT models. Increasing the number of trials improves accuracy with diminishing returns. We choose 25 trials, compromising between accuracy and runtime.} 
        \label{fig:argmax_samples}
\end{figure*}

Given the parameters for the Mallows kernel above, we can also compare it to the Spearman and Kendall tau kernels introduced in Section \ref{sec:kernel_methods} using the herding algorithm. Figure \ref{fig:kernels_comparison} compares the performance of these kernels on tabular data with GBDT models. The Mallows kernel is applied with $\lambda=4$, and all kernels are using 25 argmax samples. The Spearman kernel is clearly outperformed by both other kernels. The Kendall Tau kernel is effective for 4 out of 6 datasets, but lags behind for \textit{make\_regression} and \textit{cal\_housing}. The Mallows kernel is either the most effective, or within a 95\% confidence interval of the most effective kernel for all datasets. For this reason, as well as its universal property, we use the Mallows kernel exclusively in the experiments of Section \ref{sec:evaluation}.

\begin{figure*}[ht]
        \centering
        \begin{subfigure}[b]{0.495\textwidth}
            \centering
            \includegraphics[width=\textwidth]{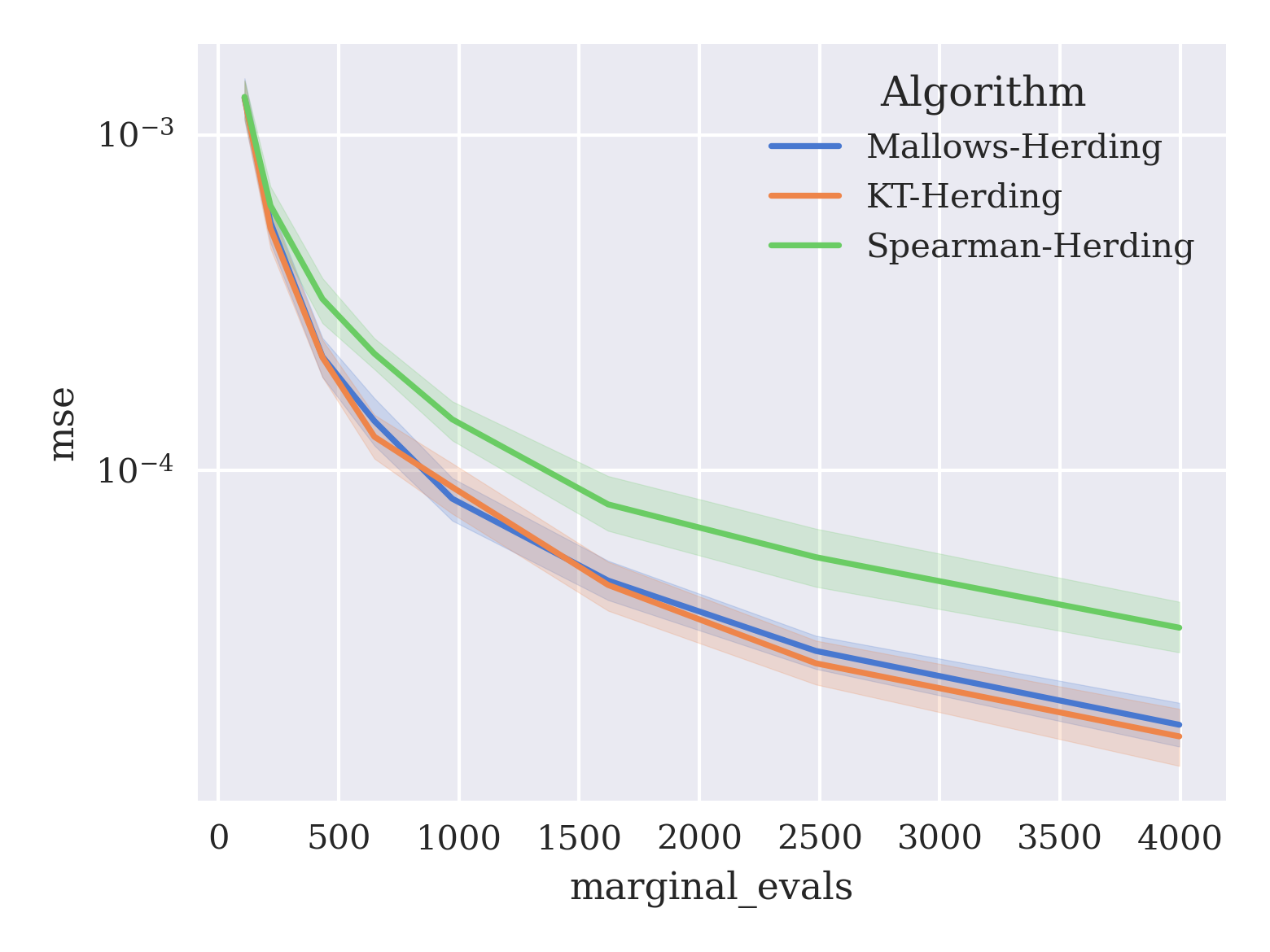}
            \caption[]%
            {{\textit{adult}}}    
        \end{subfigure}
        \hfill
        \begin{subfigure}[b]{0.495\textwidth}  
            \centering 
            \includegraphics[width=\textwidth]{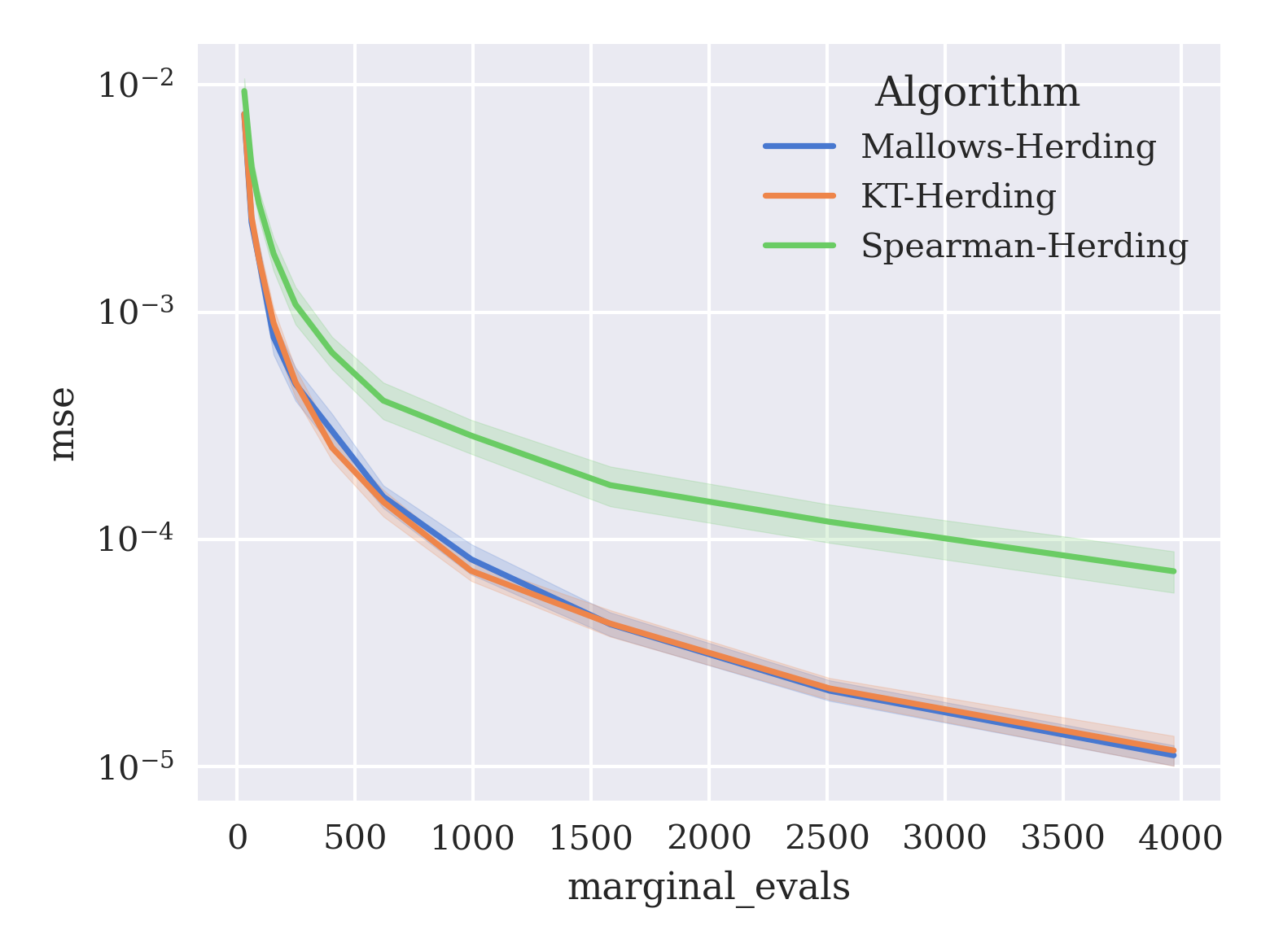}
            \caption[]%
            {{\textit{breast\_cancer}}}    
        \end{subfigure}
        \begin{subfigure}[b]{0.495\textwidth}   
            \centering 
            \includegraphics[width=\textwidth]{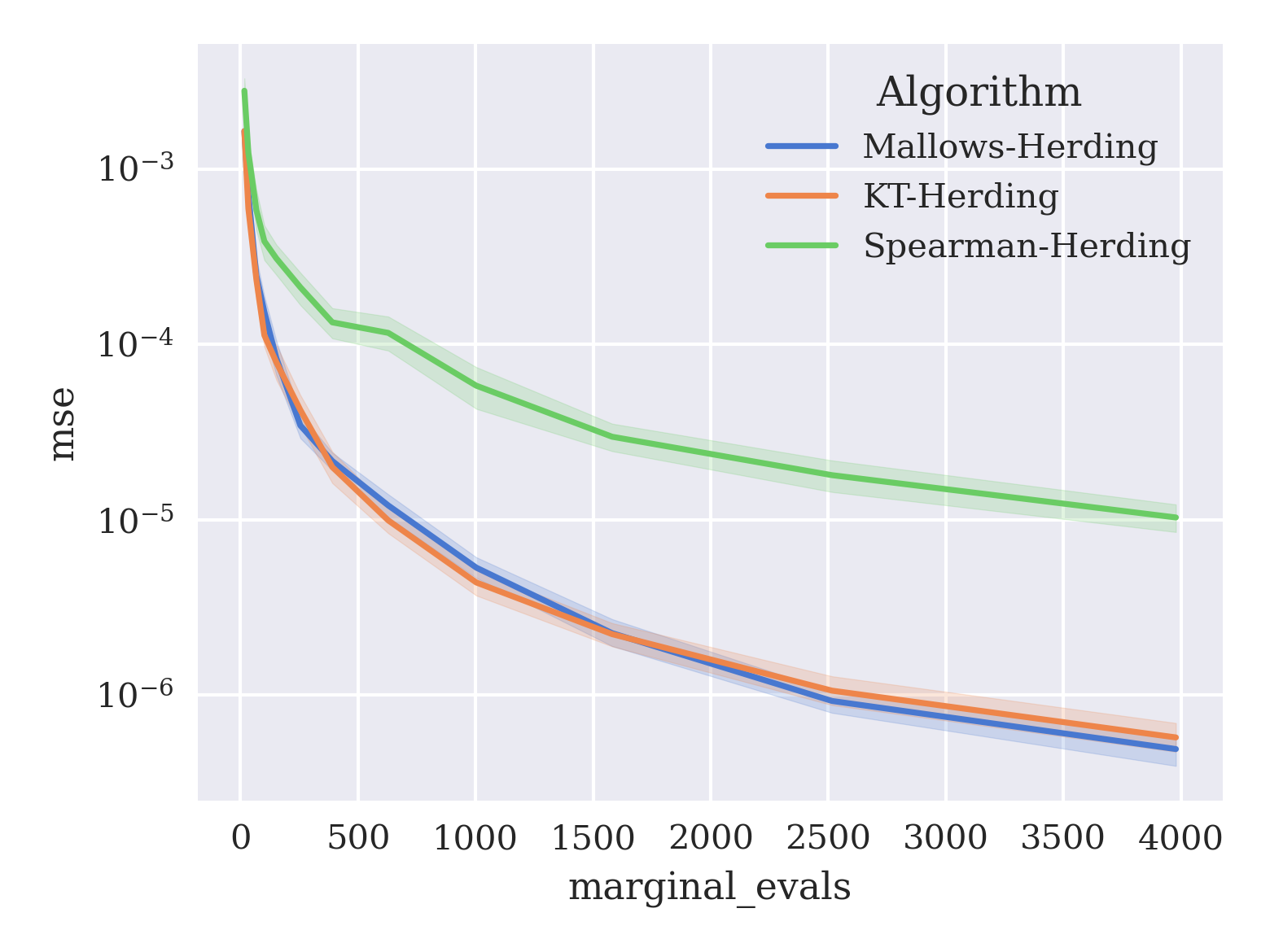}
            \caption[]%
            {{\textit{bank}}}    
        \end{subfigure}
        \hfill
        \begin{subfigure}[b]{0.495\textwidth}   
            \centering 
            \includegraphics[width=\textwidth]{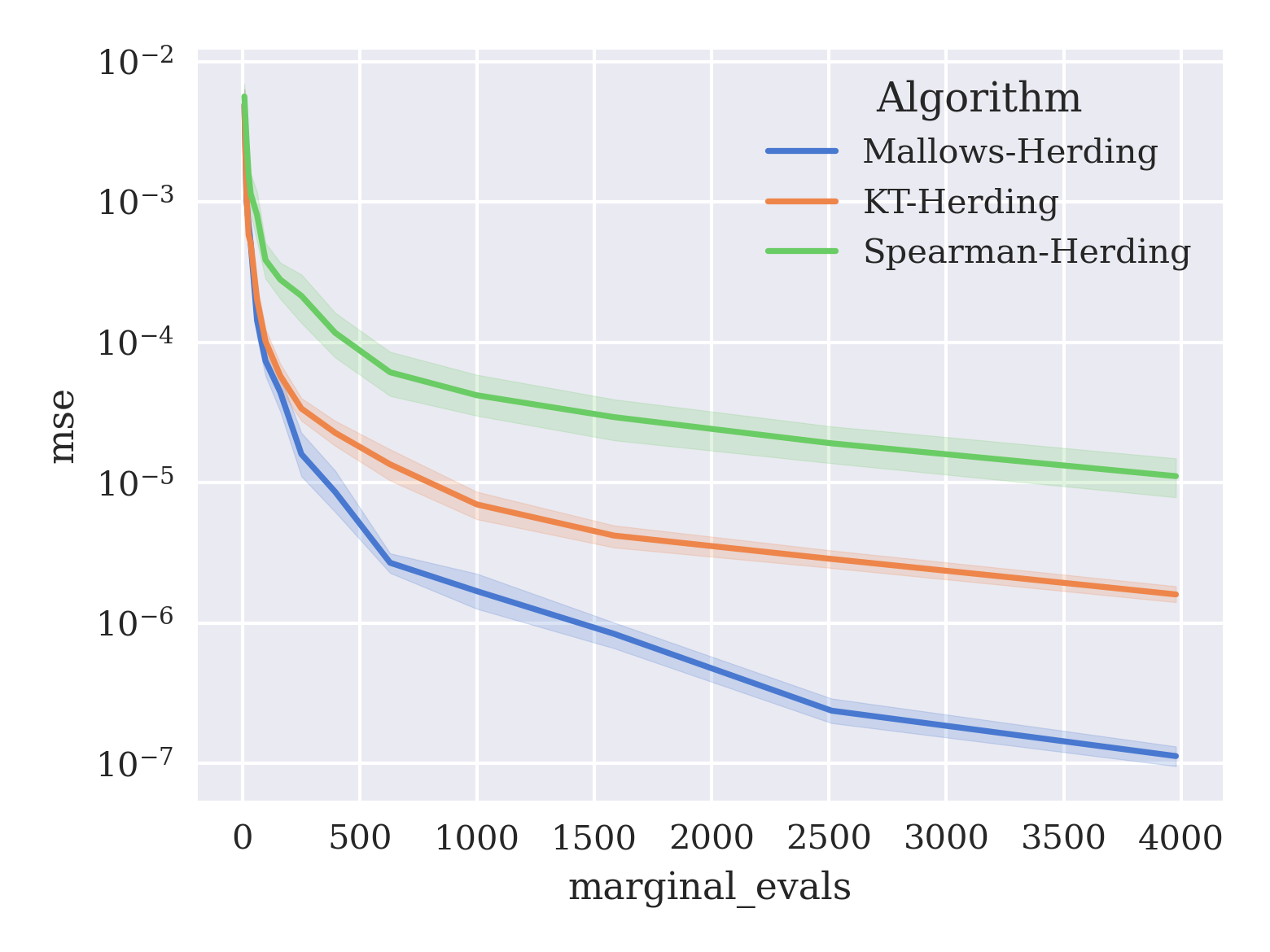}
            \caption[]%
            {{\textit{cal\_housing}}}    
        \end{subfigure}

         \begin{subfigure}[b]{0.495\textwidth}   
            \centering 
            \includegraphics[width=\textwidth]{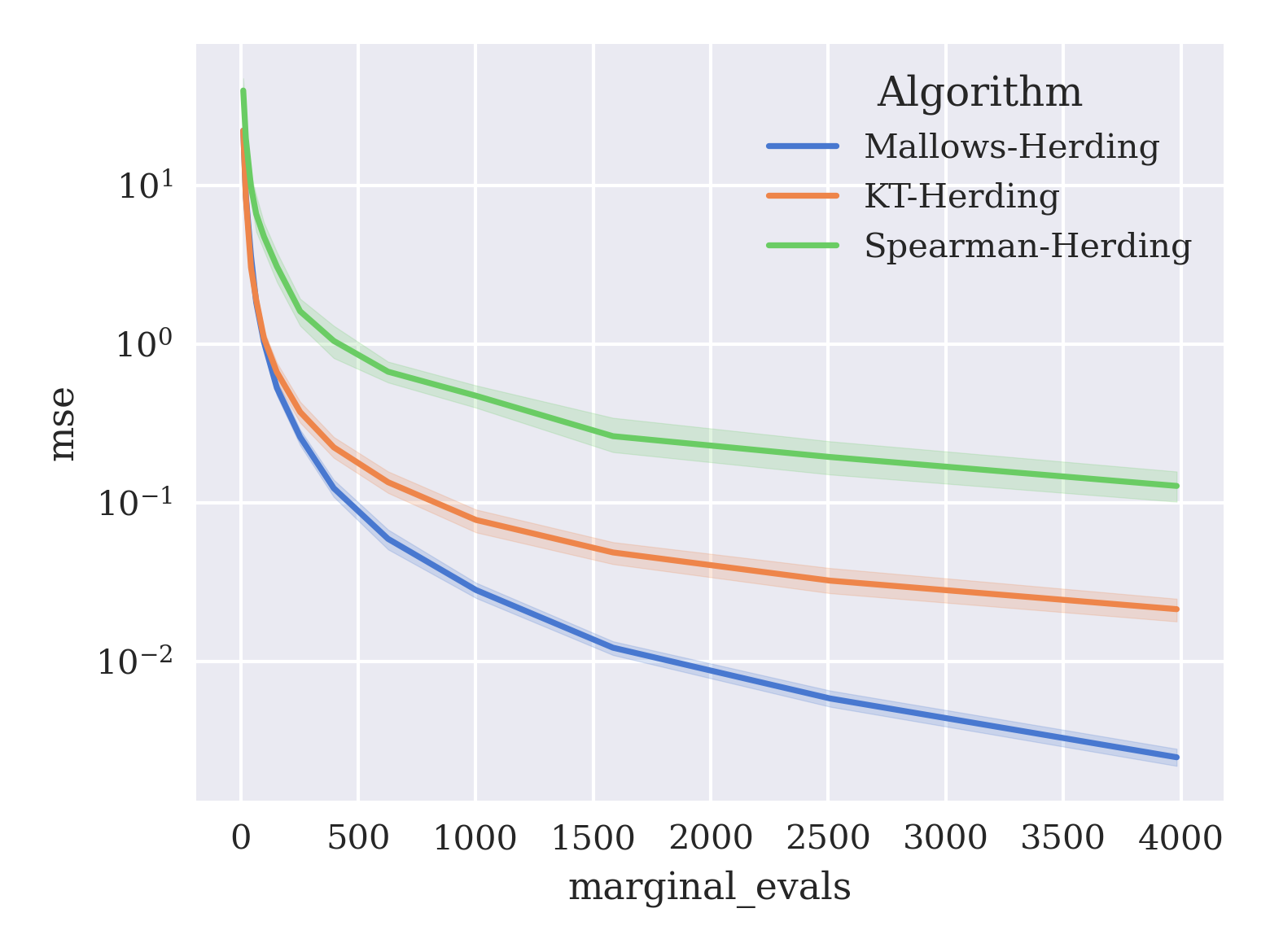}
            \caption[]%
            {{\textit{make\_regression}}}    
        \end{subfigure}
        \hfill
        \begin{subfigure}[b]{0.495\textwidth}   
            \centering 
            \includegraphics[width=\textwidth]{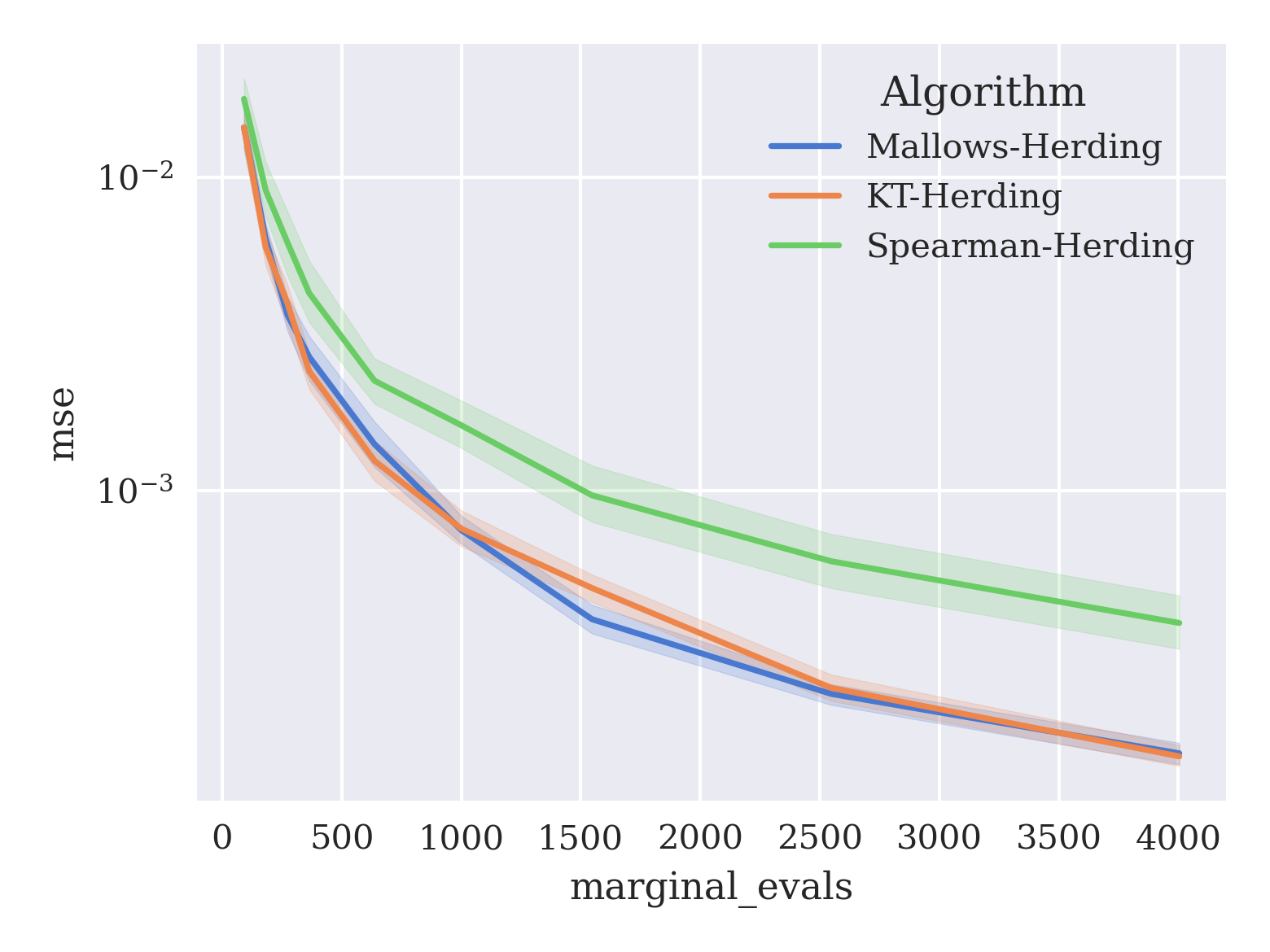}      \caption[]%
            {{\textit{year}}}    
        \end{subfigure}
        \caption{Comparing permutation kernels for kernel herding using tabular data and GBDT models. The Mallows kernel performs at least as well as the other (non-universal) kernels, and often better.} 
        \label{fig:kernels_comparison}
\end{figure*}

\end{document}